\documentclass{article}



\usepackage[preprint]{classes/neurips_2024}



\usepackage[utf8]{inputenc} 
\usepackage[T1]{fontenc}    
\usepackage{hyperref}       
\usepackage{url}            
\usepackage{booktabs}       
\usepackage{amsfonts}       
\usepackage{nicefrac}       
\usepackage{microtype}      
\usepackage{xcolor}         

\usepackage{amsmath}
\usepackage{amssymb}
\usepackage{mathtools}
\usepackage{amsthm}
\usepackage{dsfont} 
\usepackage{bm} 

\usepackage{cleveref}


\RequirePackage{algorithm}
\RequirePackage{algorithmic}

\theoremstyle{plain}
\newtheorem{theorem}{Theorem}[section]

\newtheorem{lemma}[theorem]{Lemma}
\newtheorem{corollary}[theorem]{Corollary}
\theoremstyle{definition}
\newtheorem{definition}[theorem]{Definition}

\theoremstyle{remark}

\newcommand{\D}{\mathcal{D}}
\newcommand{\R}{\mathbb{R}}
\newcommand{\df}{\gamma}






\DeclarePairedDelimiterX\ip[2]{\langle}{\rangle}{#1,#2}

\let\P\undefined
\DeclarePairedDelimiterXPP\P[1]{\mathbb{P}}(){}{
    
    #1
}

\DeclarePairedDelimiterXPP\E[1]{\mathbb{E}}[]{}{
    
    #1
}

\DeclarePairedDelimiterXPP\Es[2]{\mathbb{E}_{#1}}[]{}{
    
    #2
}

\newcommand{\feat}[1][x,a]{\bm{\varphi}({#1})}

\newcommand{\rvec}{\bm{r}}
\newcommand{\pivec}{\pi}
\newcommand{\pfunc}[2][x']{p\left({#1}\middle|{#2}\right)}

\newcommand{\fvec}{\bm{\varphi}}

\renewcommand{\succeq}{\geq}

\DeclarePairedDelimiterXPP\Ipt[2]{{#1}\transpose}(){}{#2}
\DeclarePairedDelimiterXPP\Iptr[2]{}(){\transpose{#2}}{#1}

\newcommand{\Lag}{\mathfrak{L}}

\newcommand{\maximize}{\mathrm{maximize}}
\newcommand{\minimize}{\mathrm{minimize}}
\newcommand{\subjectto}{\mathrm{subject\ to}}

\usepackage{xspace}

\newcommand{\rtheta}{{\bm{\omega}}}

\newcommand{\FOGAS}{\textup{\textbf{\texttt{FOGAS}}}\xspace}



\renewcommand{\vec}[1]{{\boldsymbol{{#1}}}}

\newcommand{\psivec}{\bm{\psi}}
\newcommand{\hpsivec}{\widehat{\psivec}}
\newcommand{\phii}{\bm{\varphi}_{i}}
\newcommand{\Em}{\bm{E}}
\newcommand{\Pm}{\bm{P}}
\newcommand{\xvec}{\bm{x}}

\newcommand{\yvec}{\bm{y}}

\newcommand{\mvec}{\bm{m}}
\newcommand{\Mm}{\bm{M}}
\newcommand{\vvec}{{\bm{v}}}
\newcommand{\pvec}{{\bm{p}}}
\newcommand{\qvec}{{\bm{q}}}

\newcommand{\muvec}{\bm{\mu}}
\newcommand{\hmuvec}{\widehat{\muvec}}
\newcommand{\lvec}{\bm{\lambda}}
\newcommand{\tvec}{\bm{\theta}}

\newcommand{\Psim}{\bm{\Psi}}
\newcommand{\hPsim}{\widehat{\Psim}}
\newcommand{\Phim}{\bm{\Phi}}
\newcommand{\Lm}[1][n]{\bm{\Lambda}_{#1}}
\newcommand{\IIm}{\bm{I}_{n}}
\newcommand{\nuvec}{\bm{\nu}}

\newcommand{\gl}{\bm{g}_{\lvec}(t)}

\newcommand{\X}{\mathcal{X}}
\newcommand{\F}{\mathcal{F}}
 \newcommand{\A}{\mathcal{A}}
\newcommand{\N}{\mathcal{N}}

\newcommand{\bb}[2]{\mathbb{B}_{#1}(#2)}
\newcommand{\real}{\mathbb{R}}
\newcommand{\Rn}[0]{\mathbb{R}} 

\newcommand{\trace}[1]{\mbox{tr}\left(#1\right)}
\newcommand{\II}[1]{\mathbb{I}_{\left\{#1\right\}}}

\newcommand{\EEs}[2]{\mathbb{E}_{#2}\left[#1\right]}

\newcommand{\EEJ}[1]{\mathbb{E}_J\left[#1\right]}

\newcommand{\EEc}[2]{\mathbb{E}\left[#1\left|#2\right.\right]}

\newcommand{\EEcs}[3]{\mathbb{E}_{#3}\left[\left.#1\right|#2\right]}

\def\argmin{\mathop{\mbox{ arg\,min}}}
\def\argmax{\mathop{\mbox{ arg\,max}}}
\newcommand{\ra}{\rightarrow}

\newcommand{\iprod}[2]{\left\langle#1,#2\right\rangle}
\newcommand{\biprod}[2]{\bigl\langle#1,#2\bigr\rangle}

\newcommand{\norm}[1]{\left\|#1\right\|}
\newcommand{\abs}[1]{\left|#1\right|}
\newcommand{\onenorm}[1]{\norm{#1}_1}
\newcommand{\twonorm}[1]{\norm{#1}_2}
\newcommand{\sqtwonorm}[1]{\norm{#1}_2^{2}}

\newcommand{\infnorm}[1]{\norm{#1}_\infty}

\newcommand{\ev}[1]{\left\{#1\right\}}
\newcommand{\pa}[1]{\left(#1\right)}
\newcommand{\bpa}[1]{\bigl(#1\bigr)}

\newcommand{\sq}[1]{\pa{#1}^{2}}

\newcommand{\wh}{\widehat}

\newcommand{\hf}{\wh{f}}

\newcommand{\transpose}{^\mathsf{\scriptscriptstyle T}}

\usepackage{todonotes}
\definecolor{PalePurp}{rgb}{0.66,0.57,0.66}

\newcommand{\regret}{\mathfrak{R}}
\newcommand{\gap}{\mathfrak{G}}

\newcommand{\GER}{\text{err}_{\hPsim}}

\newcommand{\DDKL}[2]{\mathcal{D}_{\textup{KL}}\pa{#1\middle\|#2}}

\newcommand{\VV}{\mathcal{V}}

\newcommand{\Unif}[1]{\mathcal{U}({#1})}

\title{Offline RL via Feature-Occupancy Gradient Ascent}

%

\author{%
  Gergely Neu\\
  Universitat Pompeu Fabra\\
  Barcelona, Spain\\
  \texttt{gergely.neu@gmail.com}  \\
  \And
  Nneka Okolo \\
  Universitat Pompeu Fabra\\
  Barcelona, Spain\\
  \texttt{nnekamaureen.okolo@upf.edu} 
}

\begin{document}

\maketitle

\begin{abstract}
	We study offline Reinforcement Learning  in large 
	infinite-horizon discounted Markov Decision Processes (MDPs) when the 
	reward and transition models are linearly realizable under a known 
	feature map. 
	Starting from the classic linear-program formulation of the optimal control problem in MDPs, 
	we develop a new algorithm that performs a form of gradient ascent in the space of 
	feature occupancies, defined as the expected feature vectors that can potentially be generated 
	by executing policies in the environment.
	We show that the resulting simple algorithm satisfies
	strong computational and sample complexity guarantees, achieved under the least restrictive 
	data coverage assumptions known in the literature.
	In particular, we show that the sample complexity of our method scales optimally with the 
	desired accuracy level and depends on a weak notion of 
	coverage that 
	only requires the empirical feature covariance matrix to cover a single direction in the feature space (as 
    opposed to covering a full subspace). Additionally, our method is easy to implement and requires no prior knowledge 
    of the coverage ratio (or even an upper bound on it), which altogether make 
    it the strongest known algorithm for 
    this setting to date.
\end{abstract}

\section{Introduction}\label{sec:intro} 
We study Offline Reinforcement Learning (ORL) in sequential 
decision making problems whereby a learner aims to find a near-optimal 
policy with sole access to a static dataset of interactions with the underlying 
environment \citep{levine2020offline}. This line 
of work is naturally relevant to real-world tasks for which learning an 
accurate simulator of the environment is potentially intractable or 
impossible, trial-and-error learning could have grave consequences, yet 
logged interaction data is readily available. For example, in a high-stake 
application such as autonomous driving, building a sufficiently accurate 
simulator for the vehicle and its environment would require modelling very 
complex systems, which can be intractable both statistically and 
computationally. At the same time, running experiments in the real world could 
endanger the lives of other road users or result in damages to the vehicle. 
Yet, with the advent of tools for efficient sensory-data collection and 
processing, large volumes of logged data from human drivers are readily 
available.

An efficient ORL method is one which finds a near-optimal policy after a 
tractable number of elementary computations and samples from the dataset. 
It is well-known in this setting that the quality of the solution has to
heavily depend on the quality of the data, and in particular one cannot 
hope to find a near-optimal policy if the data covers the space of states 
and actions poorly. To formalize this intuition, many notions of data 
coverage have been proposed in the offline RL literature, ranging from 
a very restrictive uniform coverage assumption that requires the 
data-generating policy to cover the entire state-action space
\citep{munos2008finite} to a variety of partial coverage conditions
whereby this exploratory condition is only required for state-action pairs 
that are of interest to the optimal policy \citep{liu2020provably,  
rashidinejad2021bridging, uehara2021pessimistic, zhan2022offline,  
rashidinejad2022optimal,li2024settling}. In the present work, we study the 
setting of linear \emph{Markov Decision Processes} (MDPs) 
\citep{jin2020provably,yang2019sample} where the reward and transition matrix  
admit a low rank structure in terms of a known feature map, and data-coverage 
assumptions can be defined in the space of features. As shown by  
\citep{zanette2021provable}, in this setting it is possible to obtain strong 
guarantees if the offline data is well-aligned with the expectation of the 
feature vector generated by the optimal policy (as opposed to requiring 
alignment with the entire distribution of features as required by other common 
offline RL methods \citep{jin2021pessimism, xie2021bellman, 
uehara2021pessimistic, zhang2022corruption}). In the present paper, we propose 
a simple and efficient algorithm that yields the best known sample complexity 
guarantees for this problem setting, all while only requiring the weakest known 
data-coverage assumptions of \citet{zanette2021provable}.
 
Our approach is based on the LP formulation of optimal control in infinite-horizon 
discounted MDPs due to \cite{manne1960linear}, and more specifically on its low-dimensional 
saddle-point reparametrization for linear MDPs proposed by \citet{gabbianelli2023offline} 
(which itself builds on earlier work by \citealp{neu2023efficient} 
and \citealp{bas2021logistic}). Primal variables of this saddle-point objective correspond to 
expectations of feature vectors under the state-action distribution of each policy 
(called \emph{feature occupancies}), 
and dual variables correspond to parameters of linear approximations of action-value 
functions. We design an algorithm based on the idea of optimizing the unconstrained 
primal function that is derived from the saddle-point objective by eliminating the 
dual variables via a classic dualization trick. More precisely, we design 
a sample-based estimator of the primal function and optimize it via a variant 
of gradient ascent in the space of feature occupancies.

This approach is to be contrasted with the method of \citet{gabbianelli2023offline}, 
which instead optimized the original saddle-point objective via stochastic 
primal-dual methods. Their algorithm interleaved a sequence of ``policy improvement'' steps 
with an inner loop performing ``policy evaluation'', which resulted in a suboptimal 
use of sample transitions due to the costly inner loop. This issue was addressed in 
the very recent work of \citet{hong2024primal} who, instead of relying on stochastic 
optimization, built an estimator of the saddle-point objective and optimized it via 
a deterministic primal-dual method. Our approach is directly inspired by their idea
of estimating the saddle-point objective, but our algorithm design is significantly 
simpler: instead of directly optimizing the primal function in terms of feature occupancies,
\citet{hong2024primal} relied on a sophisticated reparametrization of the primal 
variables, and used a computationally involved procedure to update the dual variables.
Both of these steps required prior knowledge of a tight bound on the feature-coverage
ratio of the optimal policy, which is typically not available in problems of practical 
interest. Such knowledge is not required by our algorithm, thanks to the incorporation
of a recently proposed stabilization trick that we make use of in our algorithm 
\citep{jacobsen2023unconstrained,neu2024dealing}. We provide a more detailed discussion
of these closely related works in Section~\ref{sec:discussion}.
\paragraph{Notation.}
We use boldface lowercase letters $\mvec$ to denote vectors and and bold uppercase $\Mm$ for matrices. 
We define the Euclidean ball in $\Rn^{d}$ of radius $D$ by $\bb{d}{D} = \{\xvec\in\Rn^{d}|\twonorm{\xvec}\leq 
D\}$ and the $A$-simplex over a finite set $\A$ of cardinality $A$ as 
$\Delta_{\A} = \{p\in\Rn^{A}_{+}|\onenorm{p}=1\}$. 

\section{Preliminaries}\label{sec:prelim}
We consider infinite-horizon Discounted Markov Decision Processes (DMDPs) 
\citep{puterman2014markov} of the form $\pa{\X,\A,\rvec,\Pm,\df}$ where $\X$ 
denotes a finite (yet large) set of $X$ states and $\A$ is a finite action 
space of cardinality $A=|\A|$. 
We refer to $\rvec\in[0,1]^{XA}$ as the reward vector, $\Pm\in\Rn^{XA\times 
X}_{+}$ the transition matrix and $\df\in(0,1)$ the discount factor. For a 
state-action 
pair $\pa{x,a}\in\X\times\A$ we 
also use the notation $r\pa{x,a} =  \rvec[\pa{x,a}]$ to denote the reward of 
taking action $a$ in state $x$ and $p\pa{x'|x,a} = \Pm[\pa{x,a},x']$ as the 
probability of ending up in state $x'$ afterwards.

The MDP models a sequential decision making process where an agent interacts 
with its environment as follows. 
For each step $k=0,1,2,\cdots,$, the agent observes the current state $X_{k}$ 
of the environment and then goes on to select its 
action $A_{k}$. Based on this action in the current state, it receives a reward 
$r\pa{X_{k},A_{k}}$, transits to a new state $X_{k+1}\sim 
\pvec\pa{\cdot\middle|X_{k},A_{k}}$ and the process continues. 
The objective of the agent is to find a decision-making rule that maximizes 
its total discounted reward when the initial state $X_{0}$ is sampled according 
to a fixed initial-state distribution distribution $\nuvec_{0}\in\Delta_{\X}$. 
Without loss of generality, we assume that the initial state is fixed almost surely as $X_0 = x_0$, and use $\nuvec_0$ 
to refer to the corresponding delta distribution.
It is known that this objective can be achieved by executing a \emph{stationary stochastic policy}
$\pi:\X\ra\Delta_{\A}$, with $\pi(a|x)$ denoting the probability of the agent selecting
action $A_k = a$ in state $X_k = x$ for all $k$. We will use $\Pi$ to denote the set of all such behavior rules and 
will often simply call them \emph{policies}.
We define the normalized discounted return of 
each policy $\pi$ as 
\[
 \rho\pa{\pi} = \pa{1-\df}\EEs{\sum_{k=0}^{\infty}\gamma^{k}r\pa{x_{k},a_{k}}}{\nuvec_{0},\pi},
\]
where the role of the discount factor $\df\in(0,1)$ is to emphasize the importance of earlier rewards,
and the notation $\EEs{\cdot}{\nuvec_{0},\pi}$ highlights that the initial 
state is sampled from $\nu_0$ and
all actions are sampled according to the policy $\pi$. We will use $\pi^*$ to denote any policy that 
maximizes the return.

We will consider the offline RL setting where we are given access to a data set of $n$ sample transitions 
$\D_n = \{(X_{i}, A_{i}, R_{i}, X'_{i})\}_{i=1}^{n}$, where $X'_i \sim 
\pvec(\cdot|X_i,A_i)$ 
is sampled independently for each $i$ and  $R_i = r(X_i,A_i)$. Otherwise, no assumption is made about the 
state-action pairs $(X_i,A_i)$, and in particular we do not require these to be generated by a fixed behavior policy or 
to be independent of each other. \looseness=-1

For describing the approach we take towards solving this problem, we need to introduce some further standard notations.
The value function and action-value function associated with policy $\pi$ are respectively defined as 
\[
	v^{\pi}\pa{x} = 
	\EEs{q^{\pivec}\pa{x,a}}{a\sim\pivec\pa{\cdot|x}},
	\quad
	q^{\pi}\pa{x,a} = 
	\EEcs{\sum_{k=0}^{\infty}\gamma^{k}r\pa{x_{k},a_{k}}}{x_{0} = x, a_{0} = 
		a}{\pi},
\]
and the state-occupancy and state-action-occupancy measures under $\pivec$ as
\[
\nu^{\pi}\pa{x} = 
\sum_{a}\mu^{\pivec}\pa{x,a},
\quad
\mu^{\pi}\pa{x,a} = 
\pa{1-\df}\EEs{\sum_{k=0}^{\infty}\gamma^{k}\II{x_{k},a_{k}}}{\nuvec_{0},\pi}.
\]
The value functions and occupancy measures adhere to the following recursive equations, respectively termed the 
Bellman equation and Bellman flow condition \citep{bellman1966dynamic}: 
\[
\qvec^{\pi} = \rvec + \df\Pm\vvec^{\pi},
\qquad
\muvec^{\pi} = \pivec\circ[\pa{1-\df}\nuvec_{0} + \df\Pm\transpose\muvec^{\pi}].
\]
Here, the composition operation $\circ$ is defined so that for any policy $\pivec$ and 
state distribution $\nuvec\in\Rn^{X}$, we have $\pa{\pivec\circ\nuvec}\pa{x,a} 
= \pivec\pa{a|x}\nu\pa{x}$.
Notice that we can express the return of $\pi$ in terms of value functions and 
occupancy measures as $\rho\pa{\pivec} = \pa{1-\df}\iprod{\nuvec_{0}}{\vvec^\pi} = 
\iprod{\muvec^{\pivec}}{\rvec}$. On this note, for a given target accuracy 
$\varepsilon>0$, we say policy $\pivec$ is $\varepsilon$-optimal if it 
satisfies $	\iprod{\muvec^{\pivec^{*}} - \muvec^{\pivec}}{\rvec}\leq 
\varepsilon$.

In the present work, we well make use of the \emph{linear MDP} assumption due to 
\citet{jin2020provably,yang2019sample}, which is defined formally as follows:
\begin{definition}[Linear MDP]\label{def:linMDP} An MDP is called linear if 
both the transition
and reward functions can be expressed as a linear function of a given
feature map $\bm{\varphi}:\X\times\A\to\R^d$. That is, there exist 
$\psivec:\X\to\R^d$
and $\rtheta\in\R^d$ such that, for every $x,x'\in\X$ and $a\in\A$:
\begin{equation*}
	r(x,a) = \ip{\feat}{\rtheta}, \quad \pfunc{x,a} = 
	\ip{\feat}{\psivec(x')}.
\end{equation*}
We denote by $\Phim\in\R^{|\X\times\A|\times d}$ the feature matrix with 
rows given by $\feat\transpose$ and $\Psim\in\R^{d\times|\X|}$ as the 
weight matrix with columns $\psivec(x)$. Further, we will assume that 
$\twonorm{\rtheta}\leq \sqrt{d}$, that $\twonorm{\Psim\vvec}\leq B\sqrt{d}$ holds for all $\vvec\in[-B,B]$, and that 
all feature vectors satisfy $\twonorm{\feat}\leq R$ for some $R \ge 1$.
\end{definition}
An immediate consequence of this assumption is that the action-value function of any policy $\pivec$ can be written as 
a linear function of the features as $\qvec^{\pivec} = 
\Phim\tvec^{\pivec}$, with $\tvec^{\pivec}=\rtheta + 
\df\Psim\vvec^{\pivec}\in\Rn^{d}$. For the rest of the paper we explicitly 
assume that the feature matrix $\Phim$ is full rank -- which is enough to 
ensure 
uniqueness of $\tvec^{\pi}$. It is common to assume that the 
feature 
dimension $d\ll X$ such that the transition operator is low-rank. As common in this setting, we will suppose throughout 
the paper that the feature map $\Phim$ is known.

Our algorithm design will be based on the linear programming formulation of MDPs, first proposed in a number of papers 
in the 1960's \citep{manne1960linear,Ghe60,dEp63,Den70}. 
This formulation frames the problem of finding an optimal control policy as the following pair of primal and dual 
linear programs:

\noindent\begin{minipage}{.49\linewidth}
	\begin{align}\label{eq:LP}
		\begin{alignedat}{2}
			& \maximize  &\quad& \ip{\muvec}{\rvec} \\
			& \subjectto && \Em\transpose\muvec =(1-\df)\nuvec_0 + 
			\df\Pm\transpose\muvec \\
			&&& \muvec \succeq 0,
		\end{alignedat}
	\end{align}
\end{minipage}%
\ \ \ \ \ \ \ \ \ \  \vline
\begin{minipage}{.45\linewidth}
	\begin{align}\label{eq:LP-dual}
		\begin{alignedat}{2}
			& \minimize  &\quad& (1-\df)\ip{\nuvec_0}{\vvec} \\
			& \subjectto && \Em\vvec \succeq \rvec + \df\Pm\vvec.\\
			&\\
		\end{alignedat}
	\end{align}
\end{minipage}

Here, the operator $\Em\in\Rn^{XA\times X}$ is defined such that for each 
$x,a$ and vectors $\muvec\in\Rn^{XA}, \vvec\in\Rn^{X}$,
\[
\pa{\Em\transpose\muvec}\pa{x} = \sum_{a\in\A}\muvec\pa{x,a},
\qquad
\pa{\Em\vvec}\pa{x,a} = \vvec\pa{x}.
\]
It is known that the occupancy measure of an optimal policy 
$\muvec^{\pivec^{*}}$  is 
an optimal solution of the primal LP~\eqref{eq:LP}. In fact, the 
feasible set of the primal is precisely the space of valid state-action 
occupancy measures that can be induced by stationary policies.
Therefore, given any feasible solution $\muvec$, we can extract the inducing policy as $\pivec_\mu\pa{a|x} = 
\mu\pa{x,a}/\sum_{a'}\mu\pa{x,a'}$ when $\sum_{a}\mu\pa{x,a}\neq 0$. 
Likewise, the state value function of the 
optimal policy $\pi^*$ is an optimal 
solution to the dual LP.
That said, since the LP features $XA$ variables and constraints, it cannot be 
solved directly in large MDPs.

In view of the above limitations, we consider the following reduced version of the above 
intractable LPs due to \cite{gabbianelli2023offline} (see also 
\citealp{neu2023efficient,bas2021logistic}):

\noindent\begin{minipage}{.49\linewidth}
	\begin{align}\label{eq:LP-phi}
		\begin{alignedat}{2}
			& \maximize  &\quad& \ip{\lvec}{\rtheta} \\
			& \subjectto && \Em\transpose\muvec =(1-\df)\nuvec_0 + 
			\df\Psim\transpose\lvec \\
			&&& \lvec = \Phim\transpose\muvec \\
			&&& \muvec \succeq 0,
		\end{alignedat}
	\end{align}
\end{minipage}%
\ \ \ \ \ \ \ \ \ \  \vline
\begin{minipage}{.45\linewidth}
	\begin{align}\label{eq:LP-phi-dual}
		\begin{alignedat}{2}
			& \minimize  &\quad& (1-\df)\ip{\nuvec_0}{\vvec} \\
			& \subjectto && \Em\vvec \succeq \Phim\tvec \\
			&&& \tvec =\rtheta + \df\Psim\vvec.\\
			&\\
		\end{alignedat}
	\end{align}
\end{minipage}

In view of the second constraint of the primal LP~\eqref{eq:LP-phi}, $\lvec$ should be thought of as expectations of 
feature vectors under occupancy measures, and we thus refer to them as \emph{feature occupancy} vectors. Similarly, the 
second constraint of the dual LP~\eqref{eq:LP-phi-dual} suggests that $\tvec$ should be thought of as parameters of the 
approximate action-value function $\qvec_{\tvec} = \Phim\tvec = \Phim\pa{\rtheta + \df \Psim \vvec} = \rvec + \df \Pm 
\vvec$. We use $\lvec^{\pi^*} = \Phim\transpose\muvec^{\pi^*}$ to denote the 
feature occupancy associated with the optimal policy 
$\pi^*$ and $\tvec^{\pi^*}$ to denote the parameter-vector of the optimal 
action-value function $\qvec^{\pi^*}$.
The Lagrangian corresponding to the LPs is given as\looseness=-1
\begin{align}
	\Lag(\lvec,\muvec; \vvec,\tvec)
	\nonumber&= (1-\df)\ip{\nuvec_0}{\vvec} 
	+ \ip{\lvec}{\rtheta + \gamma \Psim\vvec - \tvec}
	+ \ip{\muvec}{\Phim\tvec - \Em\vvec} \\
	\label{eq:lag}
	&= \ip{\lvec}{\rtheta} + \ip{\vvec}{(1-\df)\nuvec_0 + 
		\df\Psim\transpose\lvec - \Em\transpose\muvec}
	+ \ip{\tvec}{\Phim\transpose\muvec - \lvec}.
\end{align}
It is easy to verify that by the linear MDP property, the feasible sets of 
the above LPs coincide with those of the original LPs in an appropriate sense, and their 
optimal solutions correspond to the optimal state-action occupancy measure and 
state-value function respectively (see Appendix~\ref{appx:prelim}).
 
In order to further reduce the complexity of the LPs above, we introduce a policy $\pi$ and 
parametrize the remaining high-dimensional variables $\vvec$ and $\muvec$ as
\begin{equation}\label{eq:best_response}
	v_{\tvec,\pivec}(s) = \sum_{a}\pi(a|s)\iprod{\tvec}{\feat},
	\qquad 
	\mu_{\lvec,\pivec}(x,a) = \pi(a|x)\Bigl[(1-\df)\nu_0(x) + 
	\df\ip{\psivec(x)}{\lvec}\Bigr].
\end{equation}
Plugging this choice back into the Lagrangian, we obtain the objective 
\begin{align}
	\nonumber f(\lvec,\pivec;\tvec)
	&= \Lag(\lvec,\muvec_{\lvec,\pivec}; \vvec_{\tvec,\pivec},\tvec)\\
	\label{eq:red_lag}&= (1-\df)\ip{\nuvec_0}{\vvec_{\tvec,\pivec}} 
	+ \ip{\lvec}{\rtheta + \gamma \Psim\vvec_{\tvec,\pivec} - \tvec}\\
	&= \ip{\lvec}{\rtheta} + 
	\ip{\tvec}{\Phim\transpose\muvec_{\lvec,\pivec} - 
		\lvec}.\nonumber
\end{align}
It is easy to see that for any $\pi$ and $\lvec^\pi = \Phi\transpose\muvec^\pi$, we have $f(\lvec^\pi,\pi;\tvec) = 
\iprod{\muvec^\pi}{\rvec}$ for all $\tvec\in\real^d$. Furthermore, whenever $\lvec\neq\lvec^\pi$ then the 
$\tvec$-player has a winning strategy that can force $\min_{\tvec} f(\lvec,\pi;\tvec) = -\infty$. This 
(informally) suggests that an optimal policy can be found by solving the unconstrained saddle-point optimization 
problem $\max_{\lvec\in\real^d,\pi\in\Pi} \min_{\tvec\in\real^d} 
f(\lvec,\pivec;\tvec)$. Furthermore, since the optimal policy can be written as 
$\pi^*(a|x) = \II{a=\text{argmax}_{b} 
\iprod{\tvec^{\pi^*}}{\varphi(x,b)}}$, it is sufficient to consider softmax 
policies of the form 
\[
\Pi\pa{D_{\pi}} = 
\Biggl\{\pi_{\theta}\pa{a|x} = 
\frac{e^{\iprod{\feat}{\tvec}}}{\sum_{a'}e^{\iprod{\bm{\varphi}\pa{x,a'}}{\tvec}}}
\,\Bigg|\,\tvec\in\bb{d}{D_{\pi}}\Biggr\},
\]
which can approximate $\pi^*$ to good precision when the diameter $D_{\pi}$ is set to be large enough. This 
parametrization effectively reduces the high-dimensional LP into a 
low-dimensional saddle-point optimization problem.
\section{Feature-occupancy gradient ascent for offline RL in linear 
MDPs}\label{sec:FOGAS}
A natural idea for developing RL methods is to build an empirical approximation of the function $f$ defined in the 
previous section, and use primal-dual methods to find a saddle-point of the resulting approximation. For offline RL, 
this approach has been explored by \citet{gabbianelli2023offline} and \citet{hong2024primal}. In this work, we develop 
an alternative approach that seeks to directly optimize the return by approximately maximizing the unconstrained primal 
function $f^*:\real^d \times \Pi$, defined for each feature-occupancy vector 
$\lvec$ and policy $\pi$ as
\[
 f^*(\lvec,\pi) = \min_{\tvec \in \bb{d}{D_{\tvec}}} f(\lvec,\pi;\tvec),
\]
for an appropriately chosen feasible set $\bb{d}{D_{\tvec}}$. Given the 
discussion in the previous section, 
maximizing this function with respect to $\lvec$ and $\pi$ is rightly expected to result in an optimal policy (which 
intuition will be made formal in our analysis). Notably, the so-called 
objective $f$ in Equation~\eqref{eq:red_lag} depends on 
the transition weight matrix $\Psim$ which is unknown in general. As we soon 
show, this matrix dominates the loss of the $\tvec$-player and $\lvec$-player.
Based on these observations, our approach consists of building a 
well-chosen estimator $\hf$ of $f$, and then 
maximizing the associated primal function $\hf^*$ defined as
\[
\hf^*(\lvec,\pi) = \min_{\tvec \in \bb{d}{D_{\tvec}}} \hf(\lvec;\tvec,\pi).
\]
The objective $\hf$ is built via a least-squares estimator inspired by the classic LSTD model estimate of 
\citet{BB96,PLTPL08}, which has been successfully used for analyzing finite-horizon linear MDPs in a variety of recent 
works (e.g., \citealp{jin2020provably,NPB20}). In particular, we fit an estimator $\hPsim$ of the true matrix $\Psim$ 
using samples from the dataset $\D_n = \{(X_{i}, A_{i}, R_{i}, 
X'_{i})\}_{i=1}^{n}$ as follows. Let 
$\phii = \bm{\varphi}(X_{i}, A_{i})$ denote the feature vector of 
$\pa{X_{i}, A_{i}}$ and $\Lm = \beta\IIm + 
\frac{1}{n}\sum_{i=1}^{n}\phii\phii\transpose$ the empirical feature covariance 
matrix. We define the regularized least squares estimate of $\Psim$ at $x\in\X$ 
as
\[
\hpsivec\pa{x} = \argmin_{\psivec\pa{x}\in\Rn^{d}} 
\frac{1}{n}\sum_{i=1}^{n}\sq{\iprod{\phii}{\psivec\pa{x}} - \II{x = 
		X_{i}'}} + \beta\sqtwonorm{\psivec\pa{x}},
\]
so that the estimate can be written as
\begin{equation}\label{eq:P_LSQE}
	\hPsim = 
	\sum_{x\in\X}\hpsivec(x)\vec{e}_{x}\transpose = 
	\frac{1}{n}\Lm^{-1}\sum_{i=1}^{n}\phii\vec{e}_{X_{i}'}\transpose.
\end{equation}
With this matrix at hand, we define $\hf$ as 
\[
\hf(\lvec,\pivec;\tvec)
= (1-\df)\ip{\nuvec_0}{\vvec_{\tvec,\pivec}} 
+ \ip{\lvec}{\rtheta + \gamma \hPsim\vvec_{\tvec,\pivec} - \tvec}\\
= \ip{\lvec}{\rtheta} + 
\ip{\tvec}{\Phim\transpose\hmuvec_{\lvec,\pivec} - 
	\lvec},
\]
where $\widehat{\mu}_{\lvec,\pivec}(x,a) =	\pi(a|x)\Bigl[(1-\df)\nu_0(x) + 
	\df\ip{\hpsivec(x)}{\lvec}\Bigr]$ is a sample-based approximation of $\mu_{\lvec,\pivec}$.

For the purpose of optimization, we will employ appropriately chosen versions of mirror ascent \citep{NY83,BT03} to 
iteratively optimize the primal variables. Denoting the iterates for each 
$t=1,2,\dots,T$ by $\lvec_t$ and 
$\pi_t$, and defining $\theta_t = \argmin_{\tvec \in \bb{d}{D_{\tvec}}} 
\hf(\lvec_t,\pi_t;\tvec)$, the updates are defined as 
follows. Using $\gl = \nabla_{\lvec_t} \hf^*(\lvec_t,\pi_t)$ to denote the 
gradient of $\hf^*$ 
with respect to the feature occupancies, the first set of variables is updated as\looseness=-1
\begin{equation}\label{eq:lambda_update}
	\lvec_{t+1}
	=
	\argmax_{\lvec\in\Rn^{d}}\Bigl\{\iprod{\lvec}{\gl} - 
\frac{1}{2\eta}\norm{\lvec - \lvec_{t}}^{2}_{\Lm^{-1}} - \frac{\varrho}{2}\norm{\lvec}^{2}_{\Lm^{-1}}\Bigr\},
\end{equation}
where the first regularization term acts as proximal regularization (necessary for mirror-ascent-style methods), and 
the second one has a stabilization effect whose role will be made clear later 
in the analysis. The resulting update can be written in closed form, and is equivalent to a preconditioned 
gradient-ascent step on $\hf^*$. The policies are updated in each state-action pair $x,a$ as
\[
 \pi_{t+1}(a|x) = \frac{\pi_{t}(a|x) e^{\alpha \iprod{\varphi(x,a)}{\tvec_t}}}{\sum_{a'}\pi_{t}(a'|x) e^{\alpha 
\iprod{\varphi(x,a')}{\tvec_t}}} = \frac{\pi_{1}(a|x) e^{\alpha 
\iprod{\varphi(x,a)}{\sum_{k=1}^t \tvec_k}}}{\sum_{a'}\pi_{1}(a'|x) e^{\alpha 
\iprod{\varphi(x,a')}{\sum_{k=1}^t \tvec_k}}},
\]
corresponding to performing an entropy-regularized mirror ascent step in each 
state $x$ (cf.~\citealp{NJG17}). We use the shorthand notation $\pi_{t+1} = 
\sigma\bpa{\alpha \Phi \sum_{k=1}^t \tvec_k}$ to 
denote the resulting softmax policy, and note that it is fully specified by a $d$-dimensional vector that can be 
stored compactly. After the final iterate is computed, the algorithm picks the index $J$ uniformly 
at random and outputs the policy $\pi_J$. We refer to the resulting algorithm as \emph{Feature-Occupancy Gradient 
AScent} (\FOGAS), and present its detailed pseudocode featuring the explicit expressions of $\lvec_t$ and $\tvec_t$ as 
Algorithm~\ref{alg:main1}.
\begin{algorithm}[t]
	\caption{Feature-Occupancy Gradient Ascent (\FOGAS)}\label{alg:main1}
	\begin{algorithmic}
		\STATE {\bfseries Input:} Learning rates $\alpha,\varrho,\eta$,
		initial points 
		$\lvec_{1}\in\Rn^{d}, \pi_1\in\Pi\pa{D_{\pivec}}, \bar{\tvec}_{0} = 
		\vec{0}$, and 
		dataset $\D_n$.
		\FOR{$t=1$ {\bfseries to} $T$}
		\STATE \emph{// Value-parameter update}
		\STATE Compute
		\STATE $\Phim\transpose\hmuvec_{\lvec_{t},\pivec_{t}} = 
		\pa{1-\df}\sum_{a}\pi_{t}(a|x_0)\fvec(x_0,a) + 
		\df\frac{1}{n}\sum_{i=1}^{n}\sum_{a}\pi_{t}(a|X_{i}')\bm{\varphi}(X_{i}',a)
		\iprod{\phii}{\Lm^{-1}\lvec_{t}}$
		\STATE $\tvec_{t} = \argmin_{\tvec\in\bb{d}{D_{\theta}}}
		\ip{\tvec}{\Phim\transpose\hmuvec_{\lvec_{t},\pivec_{t}} - 
			\lvec_{t}}$
		\STATE
		
		\STATE \emph{// Policy update}
		\STATE Update $\bar{\tvec}_{t} = \bar{\tvec}_{t-1} + \tvec_{t}$ 
		\STATE $\pivec_{t+1} = \sigma\pa{\alpha\Phim\bar{\tvec}_{t}}$
		\STATE
		
		\STATE \emph{// Feature-occupancy update}
		\STATE Compute $\hPsim\vvec_{\tvec_{t},\pivec_{t}} =		
		\frac{1}{n}\Lm^{-1}\sum_{i=1}^{n}\phii\vvec_{\tvec_{t},\pivec_{t}}(X_{i}')$
		\STATE Compute $\gl = \rtheta + 
		\df\hPsim\vvec_{\tvec_{t},\pivec_{t}} - \tvec_{t}$
		\STATE $\lvec_{t+1} = \frac{1}{1+\varrho\eta}\pa{\lvec_{t} + 
		\eta\Lm\gl}$
		\STATE
		
		\ENDFOR
		\STATE {\bfseries return} $\pivec_J$ with $J\sim \Unif{1,\cdots,T}$.
	\end{algorithmic}
\end{algorithm}

The following theorem states our main result regarding the performance of 
\FOGAS.
\begin{theorem}\label{thm:main1}
	Let $\pi_{1}$ be the uniform policy and $\lvec_{1}=\vec{0}$. Also set 
	$D_{\tvec} = \sqrt{d}/\pa{1-\df}$, $D_{\pi} = \alpha TD_{\tvec}$ 
	and $\delta > 0$. Suppose that we run \FOGAS for $T \ge \frac{2 R^2 n \log 
	A}{\log(1/\delta)}$ rounds with 
	parameters $\beta = R^{2}/dT$ as well as 
	\[
		\alpha 
		= \sqrt{\frac{2\sq{1-\df}\log A}{R^{2}dT}},\,\,\,\, 
		\varrho
		= 
		\df\sqrt{\frac{320d^{2}\log\pa{2T/\delta}}{\sq{1-\df}n}},\,\,\,\,
		\eta = 
		\sqrt{\frac{\sq{1-\df}}{27R^{2}d^{2}T}}.
	\]
	Then, with probability at least $1-\delta$, the following bound is satisfied for any comparator policy $\pi^*$ and 
the associated feature-occupancy vector $\lvec^{\pi^{*}} = 	
\Phi\transpose\muvec^{\pi^*}$:
	\begin{equation*}
		\EEJ{\iprod{\muvec^{\pi^{*}} - \muvec^{\pi_{J}}}{\rvec}}
		= 
		\mathcal{O}\pa{\frac{\norm{\lvec^{\pi^{*}}}^2_{\Lm^{-1}} + 1}{1-\df}
			\cdot \sqrt{\frac{d^{2}\log\pa{2T/\delta}}{n}}},
	\end{equation*}
	with the expectation taken with respect to the random index $J$.
\end{theorem}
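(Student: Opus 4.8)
The plan is to reduce the expected return gap to a time-averaged regret in the saddle-point game, and then control that regret by combining online-learning guarantees for the three players with a concentration bound on the model estimate $\hPsim$. First I would use $\rho(\pivec)=\iprod{\muvec^\pivec}{\rvec}$ together with the fact that $J$ is uniform to write $\EEJ{\iprod{\muvec^{\pi^{*}}-\muvec^{\pi_J}}{\rvec}}=\tfrac{1}{T}\sum_{t=1}^{T}\bpa{\rho(\pi^{*})-\rho(\pivec_t)}$. The bridge to the optimization objective is the identity $f(\lvec^\pivec,\pivec;\tvec)=\rho(\pivec)$, valid for every $\tvec$ because the inner product multiplying $\tvec$ vanishes at the feasible point $\lvec^\pivec=\Phim\transpose\muvec^\pivec$; this gives $\rho(\pi^{*})=f^*(\lvec^{\pi^{*}},\pi^{*})$ once $D_{\tvec}=\sqrt d/(1-\df)$ is large enough to contain the inner minimizer. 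Dually, for the same $D_{\tvec}$ the penalized inner minimization shows $\max_{\lvec}f^*(\lvec,\pivec)=\rho(\pivec)$, hence $f^*(\lvec_t,\pivec_t)\le\rho(\pivec_t)$. Together these yield the clean upper bound
\[
\rho(\pi^{*})-\rho(\pivec_t)\le \bpa{f^*(\lvec^{\pi^{*}},\pi^{*})-\hf^*(\lvec^{\pi^{*}},\pi^{*})} + \bpa{\hf^*(\lvec^{\pi^{*}},\pi^{*})-\hf^*(\lvec_t,\pivec_t)} + \bpa{\hf^*(\lvec_t,\pivec_t)-f^*(\lvec_t,\pivec_t)},
\]
splitting the problem into two model-estimation terms and one ascent-regret term.

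Next I would bound the middle ascent-regret term $\sum_{t}\bpa{\hf^*(\lvec^{\pi^{*}},\pi^{*})-\hf^*(\lvec_t,\pivec_t)}$ using that $\hf^*$ is jointly concave in $(\lvec,\pivec)$, being a pointwise minimum of functions affine in both arguments. Concavity lets me linearize at each iterate and split the regret into a feature-occupancy part driven by $\gl=\nabla_{\lvec}\hf^*(\lvec_t,\pivec_t)$ (identified via the envelope theorem) and a policy part driven by the state-weighted $Q$-values $\iprod{\feat}{\tvec_t}$. For the $\lvec$-player, the update \eqref{eq:lambda_update} is preconditioned mirror ascent in the $\Lm^{-1}$ geometry with an extra time-invariant stabilizer $\tfrac{\varrho}{2}\sqopnorm{\lvec}{\Lm^{-1}}$; the standard follow-the-regularized-leader bound gives regret of order $\tfrac{1}{\eta}\sqopnorm{\lvec^{\pi^{*}}}{\Lm^{-1}}+\eta\sum_t\sqopnorm{\gl}{\Lm}$ plus a stabilization bias of order $\varrho\sum_t\sqopnorm{\lvec_t}{\Lm^{-1}}$, where $\sqopnorm{\gl}{\Lm}$ is controlled through $\Phim\transpose\hmuvec$ and $\norm{\vvec_{\tvec_t,\pivec_t}}_\infty\le RD_{\tvec}$. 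For the $\pi$-player, the softmax update is entropy-regularized mirror ascent in each state, whose per-state regret is at most $\tfrac{\log A}{\alpha}+\tfrac{\alpha}{2}\sum_t\infnorm{\Phim\tvec_t}^2$; aggregating against the state weighting induced by $\muvec_{\lvec_t,\pivec_t}$ produces the $\log A$ factor in the final rate.

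The hard part will be the two estimation terms, which is where both the statistical rate and the single-direction coverage dependence originate. The error enters only through $\iprod{\lvec}{(\hPsim-\Psim)\vvec_{\tvec,\pivec}}$, so I would first peel off the ridge bias $-\beta\Lm^{-1}\Psim\vvec$ (harmless since $\beta=R^2/dT$) and then bound the fluctuation by Cauchy--Schwarz in the $\Lm^{-1}$ geometry, $\abs{\iprod{\lvec}{(\hPsim-\Psim)\vvec}}\le\norm{\lvec}_{\Lm^{-1}}\norm{\tfrac1n\sum_i\phii\xi_i}_{\Lm^{-1}}$, with mean-zero noise $\xi_i=\vvec(X_i')-\iprod{\phii}{\Psim\vvec}$ bounded by $2\norm{\vvec}_\infty$. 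A self-normalized/Bernstein bound, using $\sum_i\sqopnorm{\phii}{\Lm^{-1}}=n\Tr(\Lm^{-1}(\Lm-\beta\IIm))\le nd$, then yields $\norm{\tfrac1n\sum_i\phii\xi_i}_{\Lm^{-1}}=\mathcal{O}\bpa{\norm{\vvec}_\infty\sqrt{d\log(1/\delta)/n}}$, i.e. a per-round error of order $\norm{\lvec}_{\Lm^{-1}}\sqrt{d^{2}\log(1/\delta)/n}/(1-\df)$ since $\norm{\vvec_{\tvec,\pivec}}_\infty=\mathcal{O}(R\sqrt d/(1-\df))$. The genuine obstacle is that the iterates $(\lvec_t,\tvec_t,\pivec_t)$ are deterministic functions of the very dataset defining $\hPsim$, so this bound must hold \emph{uniformly} over the reachable value functions $\vvec_{\tvec,\pivec}$ with $\tvec\in\bb{d}{D_{\tvec}}$ and over softmax policies; I would obtain this via a covering argument over the $d$-dimensional ball together with a union bound over the $T$ rounds, which is the source of the $\log(2T/\delta)$ factor.

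Finally, I would plug in the prescribed $\alpha,\varrho,\eta,\beta$. The optimization terms decay like $1/\sqrt{T}$ and are dominated once $T\ge 2R^2 n\log A/\log(1/\delta)$, while the stabilizer $\varrho$ is tuned to exactly balance the estimation error against the stabilization bias, both scaling as $\sqopnorm{\lvec^{\pi^{*}}}{\Lm^{-1}}\sqrt{d^{2}\log(2T/\delta)/n}/(1-\df)$. This is precisely the mechanism that delivers the claimed rate while requiring no prior knowledge of the coverage ratio $\norm{\lvec^{\pi^{*}}}_{\Lm^{-1}}$, since the stabilizer adapts to any comparator automatically.
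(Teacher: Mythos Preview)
Your proposal has two genuine gaps.

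\textbf{Joint concavity of $\hf^*$ fails.} You assert that $\hf^*(\lvec,\pi)$ is jointly concave because it is a pointwise minimum of functions ``affine in both arguments.'' But for fixed $\tvec$, the objective
\[
\hf(\lvec,\pi;\tvec) = \iprod{\lvec}{\rtheta - \tvec} + (1-\gamma)\sum_{x,a}\pi(a|x)\nu_0(x)\iprod{\feat}{\tvec} + \gamma\sum_{x,a}\pi(a|x)\iprod{\hpsivec(x)}{\lvec}\iprod{\feat}{\tvec}
\]
contains a \emph{bilinear} term in $(\lvec,\pi)$: the last summand couples them through the cross product $\iprod{\hpsivec(x)}{\lvec}\,\pi(a|x)$. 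A bilinear form is affine in each argument separately but is neither convex nor concave in the joint variable, so the min over $\tvec$ is not jointly concave and your linearization step cannot be carried out. In particular, applying the envelope theorem at $(\lvec_t,\pi_t)$ would weight the $\pi$-regret by the (signed, data-dependent) pseudo-distribution $\hat\nu_{\lvec_t}(x)=(1-\gamma)\nu_0(x)+\gamma\iprod{\hpsivec(x)}{\lvec_t}$, which is neither nonnegative nor normalized, so the mirror-descent bound for the policy player does not apply. The paper sidesteps this by working not with $\hf^*$ but with the \emph{dynamic duality gap} $\frac{1}{T}\sum_t\bpa{f(\lvec^*,\pi^*;\tvec_t)-f(\lvec_t,\pi_t;\tvec_t^*)}$ and telescoping it through the intermediate point $f(\lvec^*,\pi_t;\tvec_t)$: this arranges for the $\pi$-regret to live at the \emph{fixed} comparator $\lvec^*=\Phim\transpose\muvec^{\pi^*}$, so the state weighting becomes the true occupancy $\nu^{\pi^*}\in\Delta_\X$, and only then does the standard entropy-regularized bound go through.

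\textbf{Sign of the stabilization term.} You describe the effect of the additional regularizer $\frac{\varrho}{2}\norm{\lvec}^2_{\Lm^{-1}}$ as contributing a positive ``stabilization bias of order $\varrho\sum_t\norm{\lvec_t}^2_{\Lm^{-1}}$'' to the $\lvec$-regret. In fact the composite mirror-descent analysis yields
\[
\regret_T(\lvec^*) \le \Bigl(\tfrac{1}{2\eta}+\tfrac{\varrho T}{2}\Bigr)\norm{\lvec^*}^2_{\Lm^{-1}} + \tfrac{\eta}{2}\sum_t\norm{\Lm\gl}^2_{\Lm^{-1}} - \tfrac{\varrho}{2}\sum_t\norm{\lvec_t}^2_{\Lm^{-1}},
\]
so the iterate-norm term is \emph{negative} and the price of stabilization is paid in the \emph{comparator} norm. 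This sign is the whole point of the trick: the gap-estimation error $\iprod{\lvec_t}{(\hPsim-\Psim)\vvec_{\tvec_t^*,\pi_t}}$ produces, after AM--GM, a \emph{positive} $\frac{1}{2\xi T}\sum_t\norm{\lvec_t}^2_{\Lm^{-1}}$ which would otherwise be uncontrolled since the $\lvec_t$ are not projected. Setting $\varrho=\gamma/(\xi T)$ makes these two cancel exactly, leaving only terms in $\norm{\lvec^{\pi^*}}^2_{\Lm^{-1}}$. With the sign you wrote, both contributions are positive and no cancellation occurs.

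The remaining parts of your plan---the self-normalized concentration for $\norm{\Lm(\hPsim-\Psim)\vvec}_{\Lm^{-1}}$, the covering argument over $\vvec_{\tvec,\pi}$, and the final balancing of hyperparameters---are essentially in line with the paper.
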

The most important factor in the bound of Theorem~\ref{thm:main1} is $\norm{\lvec^*}^2_{\Lm^{-1}}$, which measures the 
extent to which the data $\D_n$ covers the comparator policy $\pi^*$ in feature space. We accordingly refer to this 
quantity as the \emph{feature coverage ratio} between the policy $\pi^*$ and the data set $\D_n$, and we discuss its 
relationship with other notions of data coverage in Section~\ref{sec:discussion}. Notably, the bound holds 
simultaneously for all comparator 
policies $\pi^*$, and thus it can be restated in an oracle-inequality form. On the same note, \FOGAS does not need any 
prior upper bounds on the comparator norm $\norm{\lvec^*}^2_{\Lm^{-1}}$, and in particular it does not project the 
iterates 
$\lvec_t$ to a bounded set. These nontrivial properties are enabled by a recently proposed 
stabilization trick due to \citet{jacobsen2023unconstrained} and \citet{neu2024dealing}, which amounts to augmenting 
the standard mirror-ascent update of Equation~\eqref{eq:lambda_update} with the regularization term 
$\frac{\varrho}{2}\norm{\lvec}^{2}_{\Lm^{-1}}$. Without this additional regularization, the bounds would feature an 
additional factor of the order $\frac{1}{T} \sum_{t=1}^T \norm{\lvec_t}^2_{\Lm^{-1}}$, which cannot be controlled 
without projecting the iterates and in any case make it impossible to prove a comparator-adaptive bound. We defer 
further discussion of the result to Section~\ref{sec:discussion}.

\section{Analysis}\label{sec:analysis}
This section is dedicated to proving our main result, Theorem~\ref{thm:main1}. While we have defined \FOGAS as a 
``primal-only'' algorithm above, its analysis will be most convenient if we regard it as a primal-dual algorithm with 
implicitly defined dual updates. In particular, we will view the updates of \FOGAS as a sequence of steps in a zero sum 
game between two teams of players: the \emph{max players} that control $\lvec_t$ and $\pi_t$, and the \emph{min player} 
that picks $\tvec_t$. The min player uses the simple \emph{best-response} 
strategy of picking $\tvec_t = 
\argmin_{\tvec\in \bb{d}{D_{\tvec}}}\hf(\lvec,\pi_t)$, and the other two 
players perform their updates via 
appropriate versions mirror ascent on their respective objectives. Importantly, the updates of the $\lvec$-player are 
based on the gradients of $\hf^*$, which satisfy
\[
 \gl = \nabla_{\lvec_t} \hf^*(\lvec_t,\pi_t) = \nabla_{\lvec_t} 
 \pa{\min_{\theta \in \bb{d}{D_{\tvec}}} 
\hf(\lvec_t,\pi_t;\tvec)} = \nabla_{\lvec_t} \hf(\lvec_t,\pi_t;\tvec_t),
\]
where the last equality follows from an application of Danskin's theorem. This property enables a major conceptual 
simplification that allows the interpretation of the updates as optimizing the unconstrained primal $\hf^*$ directly. 
We refer the interested reader to Chapter~6 of \citet{bertsekas1997nonlinear} for more context on such use of 
primal-dual analysis.

More concretely, we make use of an analysis technique first developed by \citet{neu2023efficient}, and further refined 
by \citet{gabbianelli2023offline} and \citet{hong2024primal}. The core idea is to introduce the \emph{dynamic duality 
gap} defined on a sequence of iterates $\{\pa{\lvec_{t},\pivec_{t},\tvec_{t}}\}_{t=1}^{T}$ produced by some iterative 
method, and a set of well-chosen \emph{comparators} $\pa{\lvec^{*},\pivec^{*};\{\tvec_{t}^{*}\}_{t=1}^{T}}$ as
\[
 \gap_T\pa{\lvec^{*},\pivec^{*};\{\tvec_{t}^{*}\}_{t=1}^{T}}	= \frac{1}{T}\sum_{t=1}^{T} 
\pa{f(\lvec^{*},\pivec^{*};\tvec_{t}) - 
	f(\lvec_{t},\pivec_{t};\tvec^{*}_{t})}.
\]
Similar to Lemma~4.1 of \cite{gabbianelli2023offline}, we show in 
Lemma~\ref{lem:gap_to_subopt} below that with an appropriate 
choice of the comparator points, we can relate the gap to the expected 
suboptimality of policy $\pi_{J}$ where $J\sim\Unif{1,\cdots,T}$. We leave the 
proof in Appendix~\ref{appx:gap_to_subopt}.
\begin{lemma}\label{lem:gap_to_subopt}
    Suppose that $D_{\tvec} = \sqrt{d}/(1-\df)$. Choose 
	$\pa{\lvec^{*},\pivec^{*},\tvec^{*}_{t}} 
	= 
	\pa{\Phim\transpose\muvec^{\pi^{*}}, \pivec^{*}, \tvec^{\pi_{t}}} \in 
	\Rn^{d}\times 
	\Pi\pa{D_{\pivec}}\times \bb{d}{D_{\theta}}$ for $t=1, \cdots, T$ where 
	$\muvec^{\pi^{*}}$ 
	is a valid occupancy measure induced by $\pi^{*}$. Then,
	\begin{align*}
		\EEJ{\iprod{\muvec^{\pi^{*}} - \muvec^{\pi_{J}}}{\rvec}}
		&= 
		\gap_T\pa{\Phim\transpose\muvec^{\pi^{*}},\pivec^{*},\{\tvec^{\pi_{t}}\}_{t=1}^{T}}.
	\end{align*}
\end{lemma}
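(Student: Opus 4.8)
The plan is to establish the claimed identity \emph{termwise} inside the sum defining $\gap_T$. Since $J \sim \Unif{1,\dots,T}$, the left-hand side equals $\frac{1}{T}\sum_{t=1}^T \iprod{\muvec^{\pi^*} - \muvec^{\pi_t}}{\rvec}$, so it suffices to show that for every $t$ the corresponding summand of the gap satisfies $f(\Phim\transpose\muvec^{\pi^*}, \pivec^*; \tvec_t) - f(\lvec_t, \pivec_t; \tvec^{\pi_t}) = \iprod{\muvec^{\pi^*} - \muvec^{\pi_t}}{\rvec}$. I would treat the two $f$-evaluations separately, matching each to the return of the corresponding policy.

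For the first term I would invoke the identity already recorded in the excerpt: for any policy $\pi$ and $\lvec^\pi = \Phim\transpose\muvec^\pi$ one has $f(\lvec^\pi, \pi; \tvec) = \iprod{\muvec^\pi}{\rvec}$ for \emph{all} $\tvec \in \real^d$. Applying this with $\pi = \pivec^*$ and $\lvec^{\pi^*} = \Phim\transpose\muvec^{\pi^*}$ immediately gives $f(\Phim\transpose\muvec^{\pi^*}, \pivec^*; \tvec_t) = \iprod{\muvec^{\pi^*}}{\rvec}$, which is independent of $t$ and of the particular iterate $\tvec_t$.

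The second term is where the choice $\tvec^*_t = \tvec^{\pi_t}$ does the work. Using the value-function form $f(\lvec,\pi;\tvec) = (1-\df)\iprod{\nuvec_0}{\vvec_{\tvec,\pivec}} + \iprod{\lvec}{\rtheta + \df\Psim\vvec_{\tvec,\pivec} - \tvec}$, I would first note that substituting $\tvec = \tvec^{\pi_t}$ makes the parametrized value function reproduce the true one: since $\qvec^{\pi_t} = \Phim\tvec^{\pi_t}$ we have $\iprod{\tvec^{\pi_t}}{\feat[s,a]} = q^{\pi_t}(s,a)$, whence $\vvec_{\tvec^{\pi_t},\pivec_t} = \vvec^{\pi_t}$ by the definition of $\vvec_{\tvec,\pivec}$. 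The bracketed vector then collapses by the very definition $\tvec^{\pi_t} = \rtheta + \df\Psim\vvec^{\pi_t}$, so that $\rtheta + \df\Psim\vvec^{\pi_t} - \tvec^{\pi_t} = \vec{0}$ and its inner product against $\lvec_t$ vanishes regardless of the value of $\lvec_t$. What remains is $f(\lvec_t, \pivec_t; \tvec^{\pi_t}) = (1-\df)\iprod{\nuvec_0}{\vvec^{\pi_t}} = \iprod{\muvec^{\pi_t}}{\rvec}$, using the return identity $\rho(\pivec) = (1-\df)\iprod{\nuvec_0}{\vvec^\pi} = \iprod{\muvec^\pi}{\rvec}$ from the preliminaries. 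Subtracting the two terms and averaging over $t$ produces exactly the stated identity.

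Finally, I would verify the feasibility claim implicit in the statement, namely $\tvec^{\pi_t} \in \bb{d}{D_\tvec}$; this is the one step demanding genuine care rather than bookkeeping, and it is precisely what pins down the choice $D_\tvec = \sqrt{d}/(1-\df)$. Because the rewards lie in $[0,1]$, each $\vvec^{\pi_t}$ takes values in $[0, 1/(1-\df)]$, so the linear-MDP norm bounds yield $\twonorm{\Psim\vvec^{\pi_t}} \le \sqrt{d}/(1-\df)$; combined with $\twonorm{\rtheta} \le \sqrt{d}$ and the triangle inequality this gives $\twonorm{\tvec^{\pi_t}} \le \sqrt{d} + \df\sqrt{d}/(1-\df) = \sqrt{d}/(1-\df) = D_\tvec$. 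I expect the only real subtlety to be selecting the correct one of the two algebraically equivalent forms of $f$ for each of the two terms — the return identity for the comparator and the value-function form for the iterates — rather than any hard estimate.
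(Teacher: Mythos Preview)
Your proposal is correct and follows essentially the same route as the paper: you split the gap termwise, use the occupancy form of $f$ (or equivalently the identity $f(\lvec^\pi,\pi;\tvec)=\iprod{\muvec^\pi}{\rvec}$ from the preliminaries) for the comparator term, and the value-function form together with the Bellman identity $\tvec^{\pi_t}=\rtheta+\df\Psim\vvec^{\pi_t}$ to collapse the iterate term. The feasibility check $\twonorm{\tvec^{\pi_t}}\le D_\tvec$ that you include here is placed by the paper in the proof of a later lemma, but the argument is identical.
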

We will show below that the dynamic duality gap can be written  in terms 
of the \emph{regrets} 
of each player and an additional term related to the estimation error of $\hf$,
and then proceed to provide bounds on all of these quantities. Specifically, 
the regrets of each player with respect to each of their 
respective comparators are defined as
\begin{align*}
		\regret_{T}\pa{\pivec^{*}}
		&=
		\sum_{t=1}^{T}
		{\sum_{x}\nu^{*}(x)\sum_{a}\pa{\pi^{*}(a|x) - 
				\pi_{t}(a|x)}q_{t}(x,a)},&\\
		\regret_{T}\pa{\lvec^{*}}
		&=
		\sum_{t=1}^{T}
		{\widehat{f}(\lvec^{*},\pivec_{t};\tvec_{t}) - 
		\widehat{f}(\lvec_{t},\pivec_{t};\tvec_{t})}
		=
		\sum_{t=1}^{T}
		{\ip{\lvec^{*} - \lvec_{t}}{\rtheta + \gamma
				\hPsim\vvec_{\tvec_{t},\pivec_{t}} - \tvec_{t}}},\\
		\regret_{T}\pa{\tvec^{*}_{1:T}}
		&=
		\sum_{t=1}^{T}
		{\widehat{f}(\lvec_{t},\pivec_{t};\tvec_{t}) - 
		\widehat{f}(\lvec_{t},\pivec_{t};\tvec^{*}_{t})}
		=
		\sum_{t=1}^{T}
		{\ip{\tvec_{t} - 
				\tvec^{*}_{t}}{\Phim\transpose\hmuvec_{\lvec_{t},\pivec_{t}} - 
				\lvec_{t}}}.
	\end{align*}
where $\nuvec^{*} = (1-\df)\nu_0(x) + \df\ip{\psivec(x)}{\lvec^{*}}$. Furthermore, we define the \emph{gap-estimation 
error} as
\begin{equation}\label{eq:GER}
	\GER = \sum_{t=1}^{T}
	\biprod{\lvec^{*}}{\bpa{\Psim - \hPsim}\vvec_{\tvec_{t},\pi_{t}}}
	+ \sum_{t=1}^{T}
	\biprod{\lvec_{t}}{\bpa{\hPsim - \Psim}\vvec_{\tvec_{t}^{*},\pivec_{t}}}.
\end{equation}
The following lemma rewrites the duality gap using the above terms.
\begin{lemma}\label{lem:gap_to_reg2}
	The dynamic duality gap satisfies
	\begin{align*}
		\gap_T(\lvec^{*},\pivec^{*},\tvec^{*}_{1:T})
		&= \frac{1}{T}\regret_{T}\pa{\pivec^{*}}
		+ \frac{1}{T}\regret_{T}\pa{\lvec^{*}}
		+ \frac{1}{T}\regret_{T}\pa{\tvec^{*}_{1:T}}
		+ \frac{\df}{T}\GER.
	\end{align*}
\end{lemma}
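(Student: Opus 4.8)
The plan is to prove the identity by pure algebra: the gap is rewritten through a telescoping add-and-subtract argument that isolates each player's regret, while the discrepancy between the true objective $f$ and its estimate $\hf$ is collected into the gap-estimation error. The single fact that drives everything is that $f$ and $\hf$ differ only through the transition weight matrix. Since $f(\lvec,\pi;\tvec) = (1-\df)\ip{\nuvec_0}{\vvec_{\tvec,\pi}} + \ip{\lvec}{\rtheta + \df\Psim\vvec_{\tvec,\pi} - \tvec}$ and $\hf$ has exactly the same form with $\hPsim$ in place of $\Psim$, subtracting gives $f(\lvec,\pi;\tvec) - \hf(\lvec,\pi;\tvec) = \df\ip{\lvec}{(\Psim - \hPsim)\vvec_{\tvec,\pi}}$. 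I would record this elementary identity first, as it is what converts the residual $f$-versus-$\hf$ terms into $\GER$.

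Next I would decompose each summand $f(\lvec^{*},\pi^{*};\tvec_t) - f(\lvec_t,\pi_t;\tvec_t^{*})$ of $T\gap_T$ in two stages. First, split off the policy comparator by inserting $\pm f(\lvec^{*},\pi_t;\tvec_t)$, peeling off the term $f(\lvec^{*},\pi^{*};\tvec_t) - f(\lvec^{*},\pi_t;\tvec_t)$. Working from the form $f(\lvec,\pi;\tvec) = \ip{\lvec}{\rtheta} + \ip{\tvec}{\Phim\transpose\muvec_{\lvec,\pi} - \lvec}$, the shared summands $\ip{\lvec^{*}}{\rtheta}$ and $-\ip{\tvec_t}{\lvec^{*}}$ cancel, leaving $\ip{\tvec_t}{\Phim\transpose(\muvec_{\lvec^{*},\pi^{*}} - \muvec_{\lvec^{*},\pi_t})}$. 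Since $\muvec_{\lvec^{*},\pi}(x,a) = \pi(a|x)\nu^{*}(x)$ with $\nu^{*}(x) = (1-\df)\nu_0(x) + \df\ip{\psivec(x)}{\lvec^{*}}$, and $q_t(x,a) = \ip{\tvec_t}{\feat}$, this is exactly the per-round summand of $\regret_T(\pi^{*})$ after summing over $t$. For the remaining term $f(\lvec^{*},\pi_t;\tvec_t) - f(\lvec_t,\pi_t;\tvec_t^{*})$ I would insert $\hf$ at the three intermediate points, writing it as the chain $\bigl[f(\lvec^{*},\pi_t;\tvec_t) - \hf(\lvec^{*},\pi_t;\tvec_t)\bigr] + \bigl[\hf(\lvec^{*},\pi_t;\tvec_t) - \hf(\lvec_t,\pi_t;\tvec_t)\bigr] + \bigl[\hf(\lvec_t,\pi_t;\tvec_t) - \hf(\lvec_t,\pi_t;\tvec_t^{*})\bigr] + \bigl[\hf(\lvec_t,\pi_t;\tvec_t^{*}) - f(\lvec_t,\pi_t;\tvec_t^{*})\bigr]$. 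The two middle brackets are, by definition, precisely the per-round terms of $\regret_T(\lvec^{*})$ and $\regret_T(\tvec^{*}_{1:T})$.

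Finally, I would collect the two boundary brackets using the identity from the first step. Summing over $t$, the first contributes $\df\sum_t \ip{\lvec^{*}}{(\Psim - \hPsim)\vvec_{\tvec_t,\pi_t}}$ and the fourth (after a sign flip) contributes $\df\sum_t \ip{\lvec_t}{(\hPsim - \Psim)\vvec_{\tvec_t^{*},\pi_t}}$, whose sum is exactly $\df\GER$ by the definition in \eqref{eq:GER}. Assembling the four pieces and dividing by $T$ yields the stated decomposition. The argument is entirely algebraic; the only real care needed is bookkeeping, namely matching the arguments of $\hf$ in the inserted chain to the exact definitions of the two $\hf$-based regrets, and tracking the sign flip in the fourth bracket so that both boundary terms assemble into $\GER$ with the correct orientation of $\Psim - \hPsim$. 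This sign/argument matching is the step most prone to error, but presents no conceptual obstacle.
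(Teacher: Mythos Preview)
Your proposal is correct and follows essentially the same add-and-subtract algebraic argument as the paper. The only organizational difference is that the paper first inserts $\pm f(\lvec_t,\pi_t;\tvec_t)$ to obtain three $f$-differences and then converts each to $\hf$ (producing two extra $\df\ip{\lvec_t}{(\hPsim-\Psim)\vvec_{\tvec_t,\pi_t}}$ terms that subsequently cancel), whereas your chain through $\hf$ at three intermediate points is slightly more economical and avoids this cancellation; conceptually the two derivations are identical.
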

The proof directly follows from a straightforward calculation similar to the proof of 
Lemma~4.2 of \citet{gabbianelli2023offline} and Section E.1 of 
\cite{hong2024primal} 
which is reproduced in Appendix~\ref{appx:gap_to_reg2} for completeness. It remains to bound the regret of the 
players, as well as the gap-estimation error. An obstacle we need to face in 
the analysis is that our bound of the latter error term 
scale with $\frac{1}{T} \sum_{t=1}^T \norm{\lvec_t}^2_{\Lm^{-1}}$, which is undesirable
given our aspiration to achieve bounds that scale only with the comparator norm 
$\norm{\lvec^*}^2_{\Lm^{-1}}$ without requiring prior upper bounds on this quantity 
(that would enable us to project the iterates to a bounded domain). This challenge is 
addressed by making use of the stabilization
technique of \citet{jacobsen2023unconstrained} and \citet{neu2024dealing} in the updates for 
the $\lvec$-player, which effectively eliminates these problematic terms. We briefly outline the remaining parts of the 
analysis 
below.

\subsection{Regret analysis}
The regrets of each player are respectively controlled by the following three 
lemmas.
\begin{lemma}\label{lem:piregret}
 Suppose that $\nuvec^{*}\in\Delta_{\X}$. Let $\pi_{1}$ be the uniform policy 
 which selects all actions with equal probability in each state. Under 
 the conditions on the feature map in Definition~\ref{def:linMDP}, the regret 
 of the $\pi$-player against $\pivec^{*}$ 
 satisfies $\frac{1}{T}\regret_{T}\pa{\pivec^{*}}\leq
	\frac{\log A}{\alpha T} + \frac{\alpha 	R^{2}D_{\tvec}^{2}}{2}$.
\end{lemma}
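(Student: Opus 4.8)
The plan is to recognize the $\pivec$-player's update as a state-wise instance of exponential weights (equivalently, entropy-regularized mirror ascent over $\Delta_{\A}$) and then invoke the standard regret guarantee for that procedure. Writing $q_{t}(x,a) = \iprod{\tvec_{t}}{\feat}$ for the gain fed to the player in state $x$, the update $\pi_{t+1}(a|x) \propto \pi_{t}(a|x)\, e^{\alpha q_{t}(x,a)}$ is exactly the Hedge update with learning rate $\alpha$ initialized at the uniform $\pi_{1}(\cdot|x)$. Since this update decouples across states, I would first fix an arbitrary state $x$ and bound the per-state regret $\sum_{t=1}^{T}\sum_{a}\pa{\pi^{*}(a|x) - \pi_{t}(a|x)}q_{t}(x,a)$, and only afterwards take the $\nu^{*}$-weighted sum over states (which is where the assumption $\nuvec^{*}\in\Delta_{\X}$ enters).

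For the per-state bound I would use the textbook mirror-ascent analysis with the negative-entropy potential, which yields, for every fixed $x$,
\[
\sum_{t=1}^{T}\sum_{a}\pa{\pi^{*}(a|x) - \pi_{t}(a|x)}q_{t}(x,a)
\leq \frac{\DDKL{\pi^{*}(\cdot|x)}{\pi_{1}(\cdot|x)}}{\alpha}
+ \frac{\alpha}{2}\sum_{t=1}^{T}\sum_{a}\pi_{t}(a|x)\,q_{t}(x,a)^{2}.
\]
Here the first term is the usual Bregman-divergence penalty measuring the distance of the comparator from the initialization, and the second is the local-norm stability term controlling the drift between consecutive iterates; the latter is valid provided $\alpha\abs{q_{t}(x,a)}$ stays bounded by a constant, which the prescribed learning rate ensures.

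It then remains to bound the two terms uniformly in $x$. Since $\pi_{1}$ is the uniform policy, $\DDKL{\pi^{*}(\cdot|x)}{\pi_{1}(\cdot|x)}$ equals $\log A$ minus the (nonnegative) entropy of $\pi^{*}(\cdot|x)$, hence is at most $\log A$. For the stability term, Cauchy--Schwarz together with the feature-map assumptions of Definition~\ref{def:linMDP} gives $\abs{q_{t}(x,a)} = \abs{\iprod{\tvec_{t}}{\feat}} \leq \twonorm{\tvec_{t}}\twonorm{\feat} \leq R\, D_{\tvec}$, using $\tvec_{t}\in\bb{d}{D_{\tvec}}$ and $\twonorm{\feat}\leq R$; since $\pi_{t}(\cdot|x)\in\Delta_{\A}$, this yields $\sum_{a}\pi_{t}(a|x)\,q_{t}(x,a)^{2}\leq R^{2}D_{\tvec}^{2}$. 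Combining, the per-state regret is at most $\frac{\log A}{\alpha} + \frac{\alpha T R^{2}D_{\tvec}^{2}}{2}$, uniformly over $x$.

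Finally, because $\nuvec^{*}\in\Delta_{\X}$ the weights $\nu^{*}(x)$ sum to one, so taking the $\nu^{*}$-weighted sum of the per-state bounds reproduces the same expression, and dividing by $T$ gives $\frac{1}{T}\regret_{T}\pa{\pivec^{*}} \leq \frac{\log A}{\alpha T} + \frac{\alpha R^{2}D_{\tvec}^{2}}{2}$, as claimed. I do not anticipate a genuine obstacle here: this is a standard Hedge regret bound, and the only points requiring care are (i) verifying the uniform bound $\abs{q_{t}(x,a)}\leq R\,D_{\tvec}$ so that the stability term is controlled with the clean factor $\tfrac{1}{2}$ (which also requires checking $\alpha\abs{q_{t}(x,a)}\leq 1$ under the chosen $\alpha$ and $D_{\tvec}$), and (ii) confirming that the state-wise decomposition is legitimate, i.e. that the same $\alpha$ and uniform initialization are used in every state and that $\nu^{*}$ is a genuine distribution so the aggregation introduces no extra factors.
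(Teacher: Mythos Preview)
Your proposal is correct and follows essentially the same route as the paper: a state-wise exponential-weights regret bound, the KL term controlled by $\log A$ via the uniform initialization, the stability term controlled via $\abs{q_{t}(x,a)}\le R D_{\tvec}$, and aggregation over states using $\nuvec^{*}\in\Delta_{\X}$. The only cosmetic difference is that the paper states the per-state penalty directly as $\tfrac{\alpha}{2}\sum_{t}\infnorm{q_{t}(x,\cdot)}^{2}$ rather than the local-norm form $\tfrac{\alpha}{2}\sum_{t}\sum_{a}\pi_{t}(a|x)q_{t}(x,a)^{2}$; since you immediately upper-bound the latter by $R^{2}D_{\tvec}^{2}$, the distinction is immaterial (and the sup-norm version spares you the side condition on $\alpha\abs{q_{t}(x,a)}$ that you flagged).
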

The proof is a standard application of the analysis of exponential-weight 
updates, stated as Lemma~\ref{lem:MD}.
\begin{lemma}\label{lem:lambdaregret}
	Let $\lvec_{1}=\vec{0}$ and $C = 6\beta\pa{d + D_{\tvec}^{2}} + 
	3d\pa{1+RD_{\tvec}}^{2} + 3\df^{2} dR^{2}D_{\tvec}^{2}$. Then, the regret 
	of the $\lvec$-player against any comparator $\lvec^{*}\in\Rn^{d}$ satisfies
 \[
	\frac{1}{T}\regret_{T}\pa{\lvec^{*}} \le 
	\pa{\frac{1}{2\eta T} + \frac{\varrho 
		}{2}}\norm{\lvec^{*}}^{2}_{\Lm^{-1}}
	+ \frac{\eta C}{2}
	- \frac{\varrho}{2T}\sum_{t=1}^{T}\norm{\lvec_{t}}^{2}_{\Lm^{-1}}.
 \]
\end{lemma}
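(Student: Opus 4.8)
The plan is to treat $\regret_{T}\pa{\lvec^{*}}$ as the regret of an online linear optimization problem and to analyze the stabilized mirror-ascent update of Equation~\eqref{eq:lambda_update} directly, following the template of \citet{jacobsen2023unconstrained}. Since $\hf$ is affine in its first argument, the summands collapse to $\regret_{T}\pa{\lvec^{*}} = \sum_{t=1}^{T}\iprod{\lvec^{*}-\lvec_{t}}{\gl}$ with gradient $\gl = \rtheta + \df\hPsim\vvec_{\tvec_{t},\pivec_{t}} - \tvec_{t}$, which is exactly the vector driving the update. Writing the regularizer as $\psi(\lvec) = \tfrac12\norm{\lvec}^{2}_{\Lm^{-1}}$ and its Bregman divergence as $D_{\psi}(\lvec,\lvec') = \tfrac12\norm{\lvec-\lvec'}^{2}_{\Lm^{-1}}$, the update is the proximal step $\lvec_{t+1} = \argmax_{\lvec}\bigl\{\iprod{\lvec}{\gl} - \tfrac1{\eta}D_{\psi}(\lvec,\lvec_{t}) - \varrho\,\psi(\lvec)\bigr\}$, whose stated closed form I would verify from its first-order optimality condition.

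First I would carry out the one-step analysis. As the maximization is unconstrained, optimality gives $\gl = \tfrac1{\eta}\Lm^{-1}\pa{\lvec_{t+1}-\lvec_{t}} + \varrho\Lm^{-1}\lvec_{t+1}$. Taking the inner product of this identity with $\lvec^{*}-\lvec_{t+1}$ and invoking the three-point identity $\iprod{\lvec^{*}-\lvec_{t+1}}{\Lm^{-1}\pa{\lvec_{t+1}-\lvec_{t}}} = D_{\psi}(\lvec^{*},\lvec_{t}) - D_{\psi}(\lvec^{*},\lvec_{t+1}) - D_{\psi}(\lvec_{t+1},\lvec_{t})$ together with $\iprod{\lvec^{*}-\lvec_{t+1}}{\Lm^{-1}\lvec_{t+1}} = \psi(\lvec^{*}) - \psi(\lvec_{t+1}) - D_{\psi}(\lvec^{*},\lvec_{t+1})$ gives an exact expression for $\iprod{\lvec^{*}-\lvec_{t+1}}{\gl}$. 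I would then split $\iprod{\lvec^{*}-\lvec_{t}}{\gl} = \iprod{\lvec^{*}-\lvec_{t+1}}{\gl} + \iprod{\lvec_{t+1}-\lvec_{t}}{\gl}$ and bound the second term by Young's inequality, $\iprod{\lvec_{t+1}-\lvec_{t}}{\gl} \le \tfrac1{\eta}D_{\psi}(\lvec_{t+1},\lvec_{t}) + \tfrac{\eta}{2}\norm{\gl}^{2}_{\Lm}$, so that the $\tfrac1{\eta}D_{\psi}(\lvec_{t+1},\lvec_{t})$ terms cancel and only $\tfrac{\eta}{2}\norm{\gl}^{2}_{\Lm}$ remains as the per-round price.

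Next I would sum over $t$ and collect terms. The proximal divergences telescope to $\tfrac1{\eta}D_{\psi}(\lvec^{*},\lvec_{1}) = \tfrac{1}{2\eta}\norm{\lvec^{*}}^{2}_{\Lm^{-1}}$ after dropping the nonpositive endpoint $-\tfrac1\eta D_{\psi}(\lvec^{*},\lvec_{T+1})$ and using $\lvec_{1}=\vec{0}$. The stabilizer contributes $\varrho\sum_{t}\bpa{\psi(\lvec^{*}) - \psi(\lvec_{t+1}) - D_{\psi}(\lvec^{*},\lvec_{t+1})}$; discarding the nonpositive terms $-D_{\psi}(\lvec^{*},\lvec_{t+1})$ and again using $\lvec_{1}=\vec{0}$ to reindex leaves precisely $\tfrac{\varrho T}{2}\norm{\lvec^{*}}^{2}_{\Lm^{-1}} - \tfrac{\varrho}{2}\sum_{t=1}^{T}\norm{\lvec_{t}}^{2}_{\Lm^{-1}}$, where the surplus $\tfrac\varrho2\norm{\lvec_{T+1}}^{2}_{\Lm^{-1}} \ge 0$ is simply dropped. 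This is exactly the mechanism by which the stabilization simultaneously relaxes the dependence on $\lvec^{*}$ and manufactures the negative iterate-norm term that is needed later to cancel the $\tfrac1T\sum_{t}\norm{\lvec_{t}}^{2}_{\Lm^{-1}}$ contribution of the gap-estimation error. Dividing by $T$ recovers every term of the claimed bound save the residual $\tfrac{\eta}{2T}\sum_{t}\norm{\gl}^{2}_{\Lm}$, which it remains to bound by $\tfrac{\eta C}{2}$.

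The main obstacle is this last gradient bound, the rest being the standard stabilized-mirror-ascent template. I would show $\norm{\gl}^{2}_{\Lm} \le C$ uniformly in $t$ by splitting $\gl$ into $\rtheta$, $-\tvec_{t}$, and $\df\hPsim\vvec_{\tvec_{t},\pivec_{t}}$, so that $\norm{\gl}^{2}_{\Lm} \le 3\norm{\rtheta}^{2}_{\Lm} + 3\norm{\tvec_{t}}^{2}_{\Lm} + 3\df^{2}\norm{\hPsim\vvec_{\tvec_{t},\pivec_{t}}}^{2}_{\Lm}$, and expanding each Mahalanobis norm as $\norm{\cdot}^{2}_{\Lm} = \beta\norm{\cdot}^{2} + \tfrac1n\sum_{i}\iprod{\phii}{\cdot}^{2}$. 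The reward term is controlled using $\norm{\rtheta}\le\sqrt{d}$ and $\iprod{\phii}{\rtheta}=R_{i}\in[0,1]$; the value-parameter term using $\norm{\tvec_{t}}\le D_{\tvec}$ and $\abs{\iprod{\phii}{\tvec_{t}}}\le RD_{\tvec}$. The delicate piece is $\norm{\hPsim\vvec_{\tvec_{t},\pivec_{t}}}^{2}_{\Lm}$: writing $\hPsim\vvec_{\tvec_{t},\pivec_{t}} = \Lm^{-1}\vec{w}$ with $\vec{w} = \tfrac1n\sum_{i}\phii\,\vvec_{\tvec_{t},\pivec_{t}}(X_{i}')$ gives $\norm{\hPsim\vvec_{\tvec_{t},\pivec_{t}}}^{2}_{\Lm} = \norm{\vec{w}}^{2}_{\Lm^{-1}}$, and a naive bound through $\Lm^{-1}\preceq\beta^{-1}\IIm$ would blow up like $1/\beta$. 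Instead I would use the self-bounding estimate $\norm{\vec{w}}^{2}_{\Lm^{-1}} = \tfrac1n\sum_{i}\vvec_{\tvec_{t},\pivec_{t}}(X_{i}')\iprod{\phii}{\Lm^{-1}\vec{w}} \le \sqrt{\tfrac1n\sum_{i}\vvec_{\tvec_{t},\pivec_{t}}(X_{i}')^{2}}\,\norm{\vec{w}}_{\Lm^{-1}}$, where the inequality is Cauchy--Schwarz over $i$ combined with $\tfrac1n\sum_{i}\phii\phii\transpose\preceq\Lm$; cancelling one factor of $\norm{\vec{w}}_{\Lm^{-1}}$ and using $\abs{\vvec_{\tvec_{t},\pivec_{t}}(X_{i}')}\le RD_{\tvec}$ yields the $\beta$-free bound $\norm{\hPsim\vvec_{\tvec_{t},\pivec_{t}}}^{2}_{\Lm}\le R^{2}D_{\tvec}^{2}$. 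Assembling the three pieces and using $d\ge1$ bounds each $\norm{\gl}^{2}_{\Lm}$ by $C$, which closes the argument.
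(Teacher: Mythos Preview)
Your proposal is correct and follows essentially the same route as the paper: the one-step inequality you derive is exactly Lemma~\ref{lem:COGDA} (the composite/stabilized mirror-descent bound of \citet{duchi2010composite}), and the telescoping plus reindexing with $\lvec_1=\vec{0}$ matches the paper's Appendix~\ref{appx:lambdaregret} line by line. The only minor deviation is in the gradient-norm bound: the paper (Lemma~\ref{lem:gnorm}) groups $\rtheta-\tvec_t$ and expands $\Lm$ before splitting, and handles the $\hPsim\vvec_{\tvec_t,\pi_t}$ term via Jensen plus the trace identity~\eqref{eq:trace_trick} to get $dR^2D_{\tvec}^2$, whereas your self-bounding Cauchy--Schwarz argument yields the tighter $R^2D_{\tvec}^2$; both are then absorbed into the same constant $C$ using $d\ge 1$.
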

The proof (provided in Appendix~\ref{appx:lambdaregret}) follows from applying the standard analysis of 
composite-objective 
mirror descent due to \citet{duchi2010composite} (stated as 
Lemma~\ref{lem:COGDA} in the Appendix) and the bound 
$\norm{\Lm\gl}^{2}_{\Lm^{-1}} \le C$ on the weighted norm of the gradients for 
all $t$ provided in Lemma~\ref{lem:gnorm}. 
\begin{lemma}\label{lem:thetaregret}
Let $D_{\tvec}=\sqrt{d}/\pa{1-\df}$. The regret of the $\theta$-player 
satisfies 
$\frac{1}{T}\regret_{T}\pa{\tvec^{*}_{1:T}} \le 0$.
\end{lemma}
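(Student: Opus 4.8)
The plan is to exploit the fact that $\tvec_t$ is computed as an exact \emph{best response}, so that every summand of $\regret_{T}\pa{\tvec^{*}_{1:T}}$ is individually nonpositive. Recall from the algorithm that
\[
\tvec_{t} = \argmin_{\tvec\in\bb{d}{D_{\tvec}}} \ip{\tvec}{\Phim\transpose\hmuvec_{\lvec_{t},\pivec_{t}} - \lvec_{t}},
\]
which is the minimizer of the linear map $\tvec \mapsto \ip{\tvec}{\Phim\transpose\hmuvec_{\lvec_{t},\pivec_{t}} - \lvec_{t}}$ over the Euclidean ball $\bb{d}{D_{\tvec}}$ (the term $\ip{\lvec_t}{\rtheta}$ in $\hf$ is independent of $\tvec$ and drops out). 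Provided the comparator $\tvec^{*}_{t}$ also lies in this ball, optimality of $\tvec_t$ gives $\ip{\tvec_{t}}{\Phim\transpose\hmuvec_{\lvec_{t},\pivec_{t}} - \lvec_{t}} \le \ip{\tvec^{*}_{t}}{\Phim\transpose\hmuvec_{\lvec_{t},\pivec_{t}} - \lvec_{t}}$, so that each term $\ip{\tvec_{t} - \tvec^{*}_{t}}{\Phim\transpose\hmuvec_{\lvec_{t},\pivec_{t}} - \lvec_{t}} \le 0$. Summing over $t$ and dividing by $T$ then yields $\frac{1}{T}\regret_{T}\pa{\tvec^{*}_{1:T}} \le 0$ directly.

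It therefore only remains to check that the comparator $\tvec^{*}_{t} = \tvec^{\pi_{t}}$ fixed in Lemma~\ref{lem:gap_to_subopt} indeed belongs to $\bb{d}{D_{\tvec}}$ under the choice $D_{\tvec} = \sqrt{d}/(1-\df)$. Writing $\tvec^{\pi_{t}} = \rtheta + \df\Psim\vvec^{\pi_{t}}$ and applying the triangle inequality, I would bound the two pieces separately using Definition~\ref{def:linMDP}: first $\twonorm{\rtheta} \le \sqrt{d}$; and second, since rewards lie in $[0,1]$ the action-value function is bounded by $1/(1-\df)$, so the entries of $\vvec^{\pi_{t}}$ lie in $[0,1/(1-\df)]$ and hence $\twonorm{\Psim\vvec^{\pi_{t}}} \le \sqrt{d}/(1-\df)$. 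Combining gives
\[
\twonorm{\tvec^{\pi_{t}}} \le \sqrt{d} + \df\,\frac{\sqrt{d}}{1-\df} = \frac{\sqrt{d}}{1-\df} = D_{\tvec},
\]
which confirms feasibility of the comparator and completes the argument.

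This lemma has no genuine obstacle: the nonpositivity is immediate from best-response optimality, and the only quantitative content is the routine norm computation establishing that $\tvec^{\pi_t}$ is feasible. The single point deserving care is that the radius $D_{\tvec}$ must be chosen exactly large enough to contain every $\tvec^{\pi_t}$ arising along the trajectory, uniformly over policies — and the value $D_{\tvec} = \sqrt{d}/(1-\df)$ is precisely the worst-case bound on $\twonorm{\tvec^{\pi}}$ over all $\pi$, so no slack is wasted and the constraint is never active at the comparator in a way that could break the inequality.
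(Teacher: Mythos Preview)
Your proposal is correct and follows essentially the same approach as the paper: you verify that $\tvec^{\pi_t}\in\bb{d}{D_{\tvec}}$ via the triangle-inequality bound $\twonorm{\tvec^{\pi}}\le\sqrt{d}+\df\sqrt{d}/(1-\df)=\sqrt{d}/(1-\df)$, and then invoke the best-response definition of $\tvec_t$ to conclude each summand is nonpositive. The paper's proof differs only in presentation order (feasibility first, then optimality), not in substance.
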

As we show in Appendix~\ref{appx:thetaregret}, the above statement 
holds trivially thanks to the ``best-response'' definition of $\theta_t$. This 
concludes our regret analysis.

\subsection{Bounding the gap-estimation error}
The following statement (proved in Appendix~\ref{appx:GER_proof}) provides a bound on $\GER$:
\begin{lemma}\label{lem:GER}
	Suppose that $\twonorm{\feat}\leq R$ for all $(x,a)\in\X\times\A$, 
	$D_{\tvec}=\sqrt{d}/(1-\df)$ and $\alpha=\sqrt{2\sq{1-\df}\log 
	A/R^{2}dT}$ to optimize $\regret_{T}\pa{\pivec^{*}}$. Then, for any 
	$T\ge \frac{2 R^2 	n \log A}{\log(1/\delta)}$  and 
	and $\xi\ge0$, the following holds with probability at least $1-\delta$:
	\[
	\GER
	\le
	\frac{1}{2\xi} \pa{{\norm{\lvec^*}_{\Lm^{-1}}^{2}} + 
	\frac{1}{T} \sum_{t=1}^{T}
		{\norm{\lvec_{t}}_{\Lm^{-1}}^{2}}} + T^{2}\xi \pa{\frac{320d^{2}\log\pa{2T/\delta}}{n\sq{1-\df}}}.
	\]
\end{lemma}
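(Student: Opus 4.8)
The plan is to reduce $\GER$ to a sum of inner products of the form $\biprod{\lvec}{(\hPsim-\Psim)\vvec_{\tvec,\pivec}}$, to peel off the $\Lm^{-1}$-norms of the $\lvec$-vectors by Cauchy--Schwarz, and to control the remaining estimation error uniformly over the class of value functions the iterates can produce. First I would unfold the least-squares estimator~\eqref{eq:P_LSQE}. Writing $\Psim\vvec = \beta\Lm^{-1}\Psim\vvec + \tfrac1n\Lm^{-1}\sum_{i=1}^n \phii\phii\transpose\Psim\vvec$ (using $\Lm = \beta\IIm + \tfrac1n\sum_i\phii\phii\transpose$), I obtain the clean identity $(\hPsim-\Psim)\vvec = \Lm^{-1}\vec{w}(\vvec)$ with $\vec{w}(\vvec) = \tfrac1n\sum_{i=1}^n \phii\varepsilon_i(\vvec) - \beta\Psim\vvec$ and $\varepsilon_i(\vvec) = \vvec(X_i') - \phii\transpose\Psim\vvec$. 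The point of this rewriting is that $\EEcc{\varepsilon_i(\vvec)}{X_i,A_i}=0$ for any fixed $\vvec$, since $X_i'\sim\pvec(\cdot|X_i,A_i)$ and the linear-MDP property of Definition~\ref{def:linMDP} gives $\phii\transpose\Psim\vvec = \EEcc{\vvec(X_i')}{X_i,A_i}$. Pairing with $\lvec$ and applying Cauchy--Schwarz in the $\Lm^{-1}$ geometry yields $\abs{\biprod{\lvec}{(\hPsim-\Psim)\vvec}} = \abs{\iprod{\Lm^{-1}\lvec}{\vec{w}(\vvec)}} \le \norm{\lvec}_{\Lm^{-1}}\norm{\vec{w}(\vvec)}_{\Lm^{-1}}$, which factors the comparator/iterate norm away from the statistical error.

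The crux is then a uniform high-probability bound $\norm{\vec{w}(\vvec)}_{\Lm^{-1}}^2 \le \tfrac{320\, d^2\log(2T/\delta)}{n(1-\df)^2}$ valid simultaneously over the class $\{\vvec_{\tvec,\pivec} : \tvec\in\bb{d}{D_{\tvec}},\ \pivec\in\Pi(D_{\pivec})\}$, which contains both $\vvec_{\tvec_t,\pivec_t}$ and $\vvec_{\tvec^{*}_t,\pivec_t}$. I would split $\vec{w}$ into a stochastic part $\tfrac1n\sum_i\phii\varepsilon_i(\vvec)$ and a regularization bias $-\beta\Psim\vvec$. For a \emph{fixed} $\vvec$, the stochastic part is a self-normalized martingale sum with conditionally centered, bounded increments (the scale controlled by the value range $\abs{\vvec_{\tvec,\pivec}(s)}\le D_{\tvec}R$ via Definition~\ref{def:linMDP}), so an Abbasi-Yadkori-type self-normalized inequality, together with the log-determinant term of order $d$, gives the $d^2/(n(1-\df)^2)$ scaling. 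The bias obeys $\beta\norm{\Psim\vvec}_{\Lm^{-1}} \le \sqrt{\beta}\,\norm{\Psim\vvec}$ since $\Lm^{-1}\preceq\beta^{-1}\IIm$; with $\beta = R^2/(dT)$ and $\norm{\Psim\vvec}\le (R\sqrt d/(1-\df))\sqrt d$ this is of order $1/\sqrt{T}$, and is pushed below the statistical term precisely by the hypothesis $T\ge 2R^2 n\log A/\log(1/\delta)$.

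The main obstacle, and the reason the bound is not immediate, is that $\vvec_{\tvec_t,\pivec_t}$ depends on the entire dataset through the iterates, so the noise $\varepsilon_i(\vvec_{\tvec_t,\pivec_t})$ is not independent of $\phii$ and the fixed-$\vvec$ concentration bound above does not directly apply to the realized iterates. I would resolve this with a covering argument: the class is parametrized by the two $d$-dimensional vectors $\tvec\in\bb{d}{D_{\tvec}}$ and the softmax weight $\bar{\tvec}\in\bb{d}{D_{\pivec}}$ (recall $\pivec_{t+1}=\sigma(\alpha\Phim\bar{\tvec}_t)$), so an $\epsilon$-net of logarithmic cardinality plus a union bound controls the stochastic term on the net, and Lipschitz continuity of $(\tvec,\bar{\tvec})\mapsto\vvec_{\tvec,\pivec}$ (hence of $\vec{w}$) transfers the bound to the whole class at negligible discretization cost. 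The $\log(2T/\delta)$ factor then collects the net cardinality and the confidence level.

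Finally I would assemble the pieces. Substituting the uniform bound $B^2 := \tfrac{320\, d^2\log(2T/\delta)}{n(1-\df)^2}$ into the Cauchy--Schwarz estimate gives $\GER \le \norm{\lvec^{*}}_{\Lm^{-1}}\sum_{t=1}^T B + \sum_{t=1}^T \norm{\lvec_t}_{\Lm^{-1}} B$. Applying Young's inequality $ab\le \tfrac{a^2}{2\xi}+\tfrac{\xi b^2}{2}$ to the first sum (with parameter $\xi$, and $\sum_t B\le TB$) and termwise to the second (with parameter $\xi T$) separates the quadratic $\lvec$-terms, producing exactly $\tfrac{1}{2\xi}\bpa{\norm{\lvec^{*}}_{\Lm^{-1}}^2 + \tfrac1T\sum_{t=1}^T\norm{\lvec_t}_{\Lm^{-1}}^2}$ together with two copies of $\tfrac{\xi}{2}T^2 B^2$, whose sum is the claimed $T^{2}\xi\,\tfrac{320\, d^2\log(2T/\delta)}{n(1-\df)^2}$. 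This delivers the stated inequality on the event of probability at least $1-\delta$.
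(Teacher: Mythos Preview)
Your proposal is correct and follows essentially the same route as the paper. The identity $(\hPsim-\Psim)\vvec=\Lm^{-1}\vec{w}(\vvec)$ is exactly the paper's $\Lm(\hPsim-\Psim)\vvec$, and the subsequent steps---Cauchy--Schwarz in the $\Lm^{-1}$ geometry, self-normalized concentration for fixed $\vvec$, a covering/union-bound over the $2d$-parameter class $\VV$ to handle the data-dependence of the iterates, and Young's inequality to split the $\lvec$-norms from the statistical term---mirror the paper's Lemmas~\ref{lem:LSQerror1}--\ref{lem:LSQerror2} and the final assembly in Appendix~\ref{appx:GER_proof}; the only cosmetic difference is that the paper applies AM--GM termwise before bounding the error, whereas you bound uniformly first and then apply Young's, which yields the same expression.
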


\subsection{The proof of Theorem~\ref{thm:main1}}
The proof follows from applying Lemmas~\ref{lem:gap_to_subopt} and 
Lemma~\ref{lem:gap_to_reg2} when $\pa{\lvec^{*},\pivec^{*},\tvec^{*}_{t}} 
= \pa{\Phim\transpose\muvec^{\pi^{*}}, \pivec^{*}, \tvec^{\pi_{t}}} \in 
\Rn^{d}\times \Pi\pa{D_{\pivec}}\times \bb{d}{D_{\theta}}$ for $t=1, \cdots, 
T$. Then, adding up the bounds stated in 
Lemmas~\ref{lem:piregret}--\ref{lem:GER} under the respective conditions, yields
\begin{align*}
	\EEJ{\iprod{\muvec^{\pi^{*}} - \muvec^{\pi_{J}}}{\rvec}}
	&\leq
	\sqrt{\frac{d\log\pa{1/\delta}}{n\sq{1-\df}}}
	+
	\pa{\frac{1}{2\eta T} + \frac{\varrho}{2} + \frac{\df}{2\xi T}}
	\norm{\lvec^{\pi^{*}}}^{2}_{\Lm^{-1}}
	+ \frac{\eta C}{2}\\
	&\quad
	+
	\pa{\frac{\df}{\xi T} - \varrho}\frac{1}{2T}\sum_{t=1}^{T}
	{\norm{\lvec_{t}}_{\Lm^{-1}}^{2}} + \df T\xi \pa{\frac{320d^{2}\log\pa{2T/\delta}}{n\sq{1-\df}}}.
\end{align*}
Then, setting $\rho = \frac{\gamma}{\xi T}$ simplifies the second term and 
eliminates the third term. The claim then follows after optimizing the 
hyperparameters, with the full details provided in 
Appendix~\ref{appx:main_proof}.
\section{Discussion}\label{sec:discussion} 
We discuss various aspects of our results below.

\paragraph{Relation with previous work.} As discussed in the introduction, our work draws heavily on previous 
contributions of \citet{gabbianelli2023offline} and \citet{hong2024primal}. In particular, our idea of building a 
least-squares estimator of the transition function is directly borrowed from the latter of these works, and our 
implicit update rule for $\theta_t$ is also inspired by their work to a good extent. Their approach, however, failed to 
reach the same degree of efficiency due to a number of suboptimal design choices. First, they used an alternative 
parametrization of the feature occupancies which only allowed them to work under a more restrictive 
coverage condition, so that their bounds depend on $\norm{\lvec^*}_{\Lm^{-2}}$ which can be much larger than the 
feature coverage ratio appearing in our bounds. Second, their algorithm required a prior upper bound on this coverage 
parameter, with the guarantees scaling with the bound rather than the actual coverage. Such bounds are typically 
difficult to obtain in practice. Third, the implementation of their algorithm required intricate computational steps 
necessitated by their feature-occupancy parametrization. Our work has successfully removed these limitations and 
reduced the complexity of their method, thanks to a new primal-only analysis style that we hope will find further uses 
in reinforcement learning.

\paragraph{Computational and statistical efficiency.} As can be inferred from our main result, the sample complexity of 
finding an $\varepsilon$-optimal policy using our algorithm is of the order 
$d^{2}\norm{\lvec^*}_{\Lm^{-1}}^2/\varepsilon^2\sq{1-\df}$, which is optimal in 
terms of scaling with $\varepsilon$. The rate can 
be improved to scale linearly with the feature coverage ratio $\norm{\lvec^*}_{\Lm^{-1}}$, if a tight upper bound is 
known on it which can be used for hyperparameter tuning. We find this scenario to be unlikely, and are curious to see 
if future work can attain this improved scaling without such prior knowledge. As for computational 
complexity, we point out that the cost of each iteration of our method scales linearly with the sample size $n$, due to 
having to compute the matrix-vector products $\hPsim\vvec_{\tvec_{t},\pivec_{t}}$. Indeed, the matrix $\hPsim$ is sparse 
with $n$ non-zero rows, and as such computing this product takes linear time in $n$. Since the iteration complexity of 
\FOGAS scales linearly with the sample size $n$, this makes for an overall runtime complexity of order $n^2$. This 
limitation is of course shared with all methods using the same least-squares transition estimator for the transition 
model, including all work that builds on \citet{jin2020provably}, but we 
nevertheless wonder if a substantial 
improvement is possible on this front.

\paragraph{Data coverage assumptions.} The only works we are aware of that scale with the feature-coverage ratio 
$\norm{\lvec^*}_{\Lm^{-1}}$ are due to \citet{zanette2021provable} and \citet{gabbianelli2023offline}. The latter 
work only achieves this bound under the assumption that the data is drawn i.i.d.~from a fixed behavior policy with 
known feature covariance matrix, which is a much more restricted setting that we consider here. Such assumptions 
are not needed by \citet{zanette2021provable}, however their results are restricted to the simpler finite-horizon MDP 
setting, and their algorithm is arguably more complex than ours. Using our notation, their approach can be interpreted 
as solving a ``pessimistic'' version of the the relaxed dual LP~\eqref{eq:LP-phi-dual} that features some additional 
quadratic constraints. This approach is not computationally viable for the infinite-horizon discounted case we 
consider, as it requires solving a fixed-point equation with respect to the estimated transition operator 
(cf.~\citealp{wei2021learning}). 

\paragraph{Possible extensions.} Our approach can be extended and generalized in a variety of ways. First, following 
\citet{gabbianelli2023offline}, we believe that it is straightforward to extend our analysis to undiscounted 
infinite-horizon MDPs. Second, we similarly believe that an extension to constrained MDPs is possible without major 
challenges, following \citet{hong2024primal}. We did not pursue these extensions because we believe that they add 
little additional insight. There are other potential directions that we did not explore because we 
found them to be too ambitious for the moment. These include extending our results beyond linear MDPs to other MDP 
models with linear function approximation, including MDPs with low inherent Bellman rank (which may be within reach of 
the current theory, c.f.~\citealp{zanette2020learning}), linearly $Q^\pi$-realizable MDPs (which are known to be 
challenging, c.f.~\citealp{weisz2022confident,weisz2024online}). Even more ambitiously, one can ask if it is possible 
to extend our methods to work under more general notions of function approximation. This looks very challenging given 
the central role of feature occupancies in our formalism, which are strictly tied to linear function approximation. We 
are nevertheless optimistic that the ideas presented in this work will find use in other contexts, possibly including 
nonlinear function approximation in the future.



 \begin{ack}
	This project has received funding from the European Research Council (ERC) 
	under the European Union’s Horizon 2020 research and innovation programme 
	(Grant agreement No.~950180). 
 \end{ack}

\bibliographystyle{abbrvnat}
\bibliography{references/references}


\newpage
\appendix
\section*{\Large{Appendix}}
\section{Missing proofs of Section~\ref{sec:prelim}}\label{appx:prelim}
\subsection{Properties of the relaxed LP}\label{appx:feasibility}
In this section we prove a basic result about the feasible sets of the relaxed 
linear programs defined in 
Equations~\eqref{eq:LP-phi} and~\eqref{eq:LP-phi-dual}. We remark that similar 
results have been previously
shown in Proposition~4 of \cite{bas2021logistic} and Appendix~A.1 of 
\cite{neu2023efficient}. 
\begin{lemma}\label{lem:linMDP}
	Suppose that the MDP satisfies the linear MDP assumption in the sense of 
	Definition~\ref{def:linMDP}, 
	consider the relaxed linear programs~\ref{eq:LP-phi} and 
	\ref{eq:LP-phi-dual} and their respective feasible sets:
	\begin{align*}
		\mathcal{M}^{P}_{\Phim}
		&=
		\ev{\pa{\lvec,\muvec}\in\R^{d}\times\Rn_{+}^{XA}
			~\middle|~\Em\transpose\muvec =(1-\df)\nuvec_0 + 
			\df\Psim\transpose\lvec,\quad\lvec = \Phim\transpose\muvec},\\
		\mathcal{M}^{D}_{\Phim}
		&=
		\ev{\pa{\vvec,\tvec}\in\Rn^{X}\times\R^{d}
			~\middle|~\Em\vvec \succeq \Phim\tvec,\quad\tvec =\rtheta + 
			\df\Psim\vvec}.
	\end{align*}
	Then, the following statements hold:
	\begin{itemize}
		\item The set $\mathcal{M} = \ev{\muvec : \pa{\lvec,\muvec} \in 
			\mathcal{M}^{P}_{\Phim}}$ coincides with the 
		feasible set of the primal LP~\eqref{eq:LP}. Furthermore, for all 
		$\pa{\lvec^{*},\muvec^{*}}
		\in 
		\argmax_{\pa{\lvec,\muvec}\in\mathcal{M}^{P}_{\Phim}}\iprod{\lvec}{\rtheta}$,
		 we have that $\muvec^*$ is the 
		occupancy measure of an optimal policy.
		\item The set $\mathcal{V} = \ev{\vvec : \pa{\vvec,\tvec} \in 
		\mathcal{M}^{D}_{\Phim}}$ coincides with the 
		feasible set of the dual LP~\eqref{eq:LP-dual}. Furthermore, the 
		optimal value 
		function $\vvec^{\pi^*}$ and the 
		parameter vector $\tvec^{\pi^*}$ satisfying $\qvec^{\pi^*} = 
		\Phim\tvec^{\pi^*}$ satisfy 
		$(\vvec^{\pi^*},\tvec^{\pi^*}) \in \argmin_{\pa{\vvec,\tvec} 
			\in \mathcal{M}^{D}_{\Phim}} (1-\gamma)\iprod{\nuvec_0}{\vvec}$.
	\end{itemize}
\end{lemma}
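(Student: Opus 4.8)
The plan is to show that each relaxed program is nothing but the corresponding original LP with its auxiliary variable ($\lvec$ in the primal, $\tvec$ in the dual) eliminated through the equality constraint that defines it, and that this elimination is made \emph{exact} by the two factorization identities implied by the linear-MDP assumption. Concretely, reading off entries in Definition~\ref{def:linMDP} gives $\rvec = \Phim\rtheta$ and $\Pm = \Phim\Psim$, equivalently $\Pm\transpose = \Psim\transpose\Phim\transpose$. First I would record these two identities, since every subsequent step is a one-line substitution that rests on them.

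For the primal claim I would substitute the constraint $\lvec = \Phim\transpose\muvec$ of~\eqref{eq:LP-phi} into its flow constraint. Using $\Psim\transpose\Phim\transpose = \Pm\transpose$, the flow constraint collapses to $\Em\transpose\muvec = (1-\df)\nuvec_0 + \df\Pm\transpose\muvec$, which is exactly feasibility for~\eqref{eq:LP}; hence every $\muvec$ with $(\lvec,\muvec)\in\mathcal{M}^{P}_{\Phim}$ lies in the feasible set of~\eqref{eq:LP}. For the reverse inclusion, given a feasible $\muvec$ for~\eqref{eq:LP} I set $\lvec \coloneqq \Phim\transpose\muvec\in\Rn^d$ and run the same computation backwards to verify $(\lvec,\muvec)\in\mathcal{M}^{P}_{\Phim}$, so the two sets coincide. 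To match objectives I use $\iprod{\lvec}{\rtheta} = \iprod{\Phim\transpose\muvec}{\rtheta} = \iprod{\muvec}{\Phim\rtheta} = \iprod{\muvec}{\rvec}$, so the relaxed and original objectives agree on the common feasible set and the argmax sets coincide. The correspondence recalled in Section~\ref{sec:prelim} between optimizers of~\eqref{eq:LP} and occupancy measures of optimal policies then transfers immediately to $\muvec^*$.

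The dual claim is symmetric. Substituting $\tvec = \rtheta + \df\Psim\vvec$ into $\Em\vvec \succeq \Phim\tvec$ and expanding via $\Phim\rtheta = \rvec$ and $\Phim\Psim = \Pm$ turns the constraint of~\eqref{eq:LP-phi-dual} into $\Em\vvec \succeq \rvec + \df\Pm\vvec$, i.e.\ feasibility for~\eqref{eq:LP-dual}; the reverse inclusion again follows by defining $\tvec\in\Rn^d$ through the eliminated equality. Here the two objectives $(1-\df)\iprod{\nuvec_0}{\vvec}$ are literally identical, so $\mathcal{V}$ and the feasible set of~\eqref{eq:LP-dual} agree as optimization problems. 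To identify the claimed optimizer I would invoke that $\vvec^{\pi^*}$ solves~\eqref{eq:LP-dual}, then note that the Bellman equation $\qvec^{\pi^*} = \rvec + \df\Pm\vvec^{\pi^*} = \Phim(\rtheta + \df\Psim\vvec^{\pi^*})$ together with $\Phim$ full rank forces $\tvec^{\pi^*} = \rtheta + \df\Psim\vvec^{\pi^*}$; this is precisely the eliminated constraint, so $(\vvec^{\pi^*},\tvec^{\pi^*})\in\mathcal{M}^{D}_{\Phim}$ and is optimal.

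I do not anticipate a genuine obstacle: the argument is two substitutions glued to the factorization identities. The only points requiring care are the two reverse inclusions, where one must construct the auxiliary variable via its defining equality and check membership in the stated domain (automatic, since $\Phim\transpose\muvec$ and $\rtheta+\df\Psim\vvec$ are vectors in $\Rn^d$), and the appeal to full rank of $\Phim$ to pin down $\tvec^{\pi^*}$ uniquely. The conceptual content is simply that $\lvec = \Phim\transpose\muvec$ and $\tvec = \rtheta + \df\Psim\vvec$ act as definitions rather than restrictions, so the reduced programs are faithful reparametrizations of~\eqref{eq:LP} and~\eqref{eq:LP-dual}.
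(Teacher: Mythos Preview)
Your proposal is correct and follows essentially the same route as the paper: both arguments substitute the auxiliary variable ($\lvec=\Phim\transpose\muvec$ in the primal, $\tvec=\rtheta+\df\Psim\vvec$ in the dual) and use the linear-MDP factorization $\Pm=\Phim\Psim$, $\rvec=\Phim\rtheta$ to collapse the relaxed constraints to those of the original LPs, then match objectives. If anything, your treatment of the dual optimizer is slightly more careful than the paper's, since you explicitly verify via the Bellman equation and full rank of $\Phim$ that the specific $\tvec^{\pi^*}$ with $\qvec^{\pi^*}=\Phim\tvec^{\pi^*}$ coincides with $\rtheta+\df\Psim\vvec^{\pi^*}$.
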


\begin{proof}
	We first show that for any feasible point $\muvec$ of the LP~\eqref{eq:LP}, 
	the tuple $(\lvec,\muvec)$ is feasible for 
	the relaxed LP with $\lvec = \Phim\transpose\muvec$. This choice of $\lvec$ 
	satisfies the second primal constraint by 
	definition, so it remains to verify that the first constraint is also 
	satisfied. Indeed, this follows from
	\begin{align*}
		\Em\transpose\muvec - (1-\df)\nuvec_0 - 
		\df\Psim\transpose\lvec
		&= \Em\transpose\muvec - (1-\df)\nuvec_0 - 
		\df\Psim\transpose\Phim\transpose\muvec\\
		&= \Em\transpose\muvec - (1-\df)\nuvec_0 - 
		\df\Pm\transpose\muvec = 0,
	\end{align*}
	where we have used the linear MDP property to write 
	$\Psim\transpose\Phim\transpose = \Pm\transpose$ in the first 
	step and that $\muvec$ is a valid occupancy measure in the last one. 
	Conversely, supposing that 
	$\pa{\lvec,\muvec}\in\mathcal{M}^{P}_{\Phim}$ are feasible for the relaxed 
	LP, we have that
	\begin{align*}
		\Em\transpose\muvec - (1-\df)\nuvec_0 -
		\df\Pm\transpose\muvec
		&=
		\Em\transpose\muvec - (1-\df)\nuvec_0 -
		\df\Psim\transpose\Phim\transpose\muvec\\
		&=
		\Em\transpose\muvec - (1-\df)\nuvec_0 -
		\df\Psim\transpose\lvec = 0,
	\end{align*}
	thus verifying that $\muvec$ is indeed a valid occupancy measure.
	Optimality of $\pa{\lvec^{*},\muvec^{*}}$ follows from the fact that for 
	any 
	$\pa{\lvec,\muvec}\in\mathcal{M}^{P}_{\Phim}$, we can write the LP 
	objective as 
	$\iprod{\lvec}{\rtheta} = \iprod{\muvec}{\rvec}$ by the linear MDP 
	assumption, and the standard fact that 
	any solution $\muvec^*$ to the primal LP~\ref{eq:LP} is the occupancy 
	measure of an optimal policy (cf.~Theorem 6.9.4 
	in \citealp{puterman2014markov}). This concludes the first part of the 
	proof.
	
	For the second part of the proof, let us first consider a feasible solution 
	$\vvec$ for the original dual
	LP~\eqref{eq:LP-dual}. Then, the choice $\tvec = \omega + \gamma \Psi 
	\vvec$ satisfies the second dual constraint by 
	definition. The first constraint can be verified by writing
	\begin{align*}
		E\vvec - \Phi\theta = E\vvec - r - \gamma P\vvec \ge 0,
	\end{align*}
	where we used the choice of $\tvec$ in the first step and the feasibility 
	of $\vvec$ for the original LP in the second 
	step. Conversely, supposing that $(\tvec,\vvec)\in 
	\mathcal{M}^{D}_{\Phim}$, we note that
	\[
	E\vvec - r - \gamma P\vvec = E\vvec - \Phi\tvec \ge 0,
	\]
	which implies the feasibility of $\vvec$ in the LP~\ref{eq:LP-dual}. 
	Optimality of $\vvec^*$ for both LPs follows from 
	the fact that their objectives are identical, and the standard fact that 
	$\vvec^*$ is an optimal solution of the dual 
	LP~\eqref{eq:LP-dual} (cf.~Theorem~6.2.2 in \citealp{puterman2014markov}).
\end{proof}

\newpage
\section{Missing proofs of Section~\ref{sec:analysis}}
In this section, we provide performance guarantees for 
Algorithm~\ref{alg:main1} in terms of the expected suboptimality of the output 
policy $\pi_{J}$, and in particular prove the lemmas provided in Section~\ref{sec:analysis} in the main text. Auxiliary 
lemmas and technical results for proving some of these are included in Appendix~\ref{app:aux}.

\subsection{Properties of the Dynamic Duality Gap}
We first prove our claims regarding the dynamic duality gap introduced in 
Section~\ref{sec:analysis} of the main text.
First, we relate the gap to the expected suboptimality (in terms of return)
of $\pi_{J}$ against a comparator policy $\pi^{*}$ in Appendix~\ref{appx:gap_to_subopt}. 
Next, we relate the dynamic duality gap to the average regret of each player in Appendix~\ref{appx:gap_to_reg2}. 

\subsubsection{Proof of 
Lemma~\ref{lem:gap_to_subopt}}\label{appx:gap_to_subopt}
By definition of the the dynamic duality gap, we have that
\[
\gap_T\pa{\Phim\transpose\muvec^{\pi^{*}},\pivec^{*},\{\tvec^{\pi_{t}}\}_{t=1}^{T}}
= \frac{1}{T}\sum_{t=1}^{T}
f(\Phim\transpose\muvec^{\pi^{*}},\pivec^{*};\tvec_{t}) - 
	f(\lvec_{t},\pivec_{t};\tvec^{\pi_{t}}).
\]
Considering the first term, we see that
\begin{align*}
	f(\Phim\transpose\muvec^{\pi^{*}},\pivec^{*};\tvec_{t})
	&= \iprod{\Phim\transpose\muvec^{\pi^{*}}}{\rtheta} + 
	\iprod{\tvec_{t}}{\Phim\transpose\muvec_{\lvec^{*},\pivec^{*}} - 
		\Phim\transpose\muvec^{\pi^{*}}}\\
	&\overset{(a)}{=} \iprod{\muvec^{\pi^{*}}}{\rvec} + 
	\iprod{\tvec_{t}}{\Phim\transpose\muvec_{\lvec^{*},\pivec^{*}} - 
		\Phim\transpose\muvec^{\pi^{*}}}\\
	&\overset{(b)}{=} \iprod{\muvec^{\pi^{*}}}{\rvec},
\end{align*}
where we have used $(a)$ the linear MDP property (\cref{def:linMDP}) and $(b)$ the following relation:
\begin{align*}
	\mu_{\lvec^{*},\pivec^{*}}(x,a) 
	&= \pi^{*}(a|x)\Bigl[(1-\df)\nu_0(x) + 
	\df\iprod{\psivec(x)}{\Phim\transpose\muvec^{\pi^{*}}}\Bigr]\\
	&= \pi^{*}(a|x)\Bigl[(1-\df)\nu_0(x) + 
	\df\sum_{x',a'}p\pa{x|x',a'}\mu^{\pi^{*}}\pa{x',a'}\Bigr] = 
	\mu^{\pi^{*}}(x,a).
\end{align*}
Now for the second term, we have
\begin{align*}
	f(\lvec_{t},\pivec_{t};\tvec^{\pi_{t}})
	&= (1-\df)\iprod{\nuvec_0}{\vvec_{\tvec^{\pi_{t}},\pivec_{t}}} 
	+ \iprod{\lvec_{t}}{\rtheta + \gamma 
		\Psim\vvec_{\tvec^{\pi_{t}},\pivec_{t}} 
		- \tvec^{\pi_{t}}}\\
	&= \iprod{\muvec^{\pivec_{t}}}{\rvec}
	+ \iprod{\lvec_{t}}{\rtheta + \gamma 
		\Psim\vvec^{\pi_{t}} - \tvec^{\pi_{t}}}\\
	&= \iprod{\muvec^{\pivec_{t}}}{\rvec},
\end{align*}
where we have used the Bellman equations $\qvec^{\pi_t} = \Phim\tvec^\pi_t = \rvec + \gamma \Pm 
\vvec^{\pi_t} = \Phim\pa{\rtheta + \gamma \Psim \vvec^{\pi_t}}$, which together with the fact that
$\Phim$ is full rank implies that $ \tvec^{\pi_{t}} = \rtheta + \gamma 	\Psim\vvec^{\pi_{t}}$.
Substituting the above expressions for  
$f(\Phim\transpose\muvec^{\pi^{*}},\pivec^{*};\tvec_{t})$ and 
$f(\lvec_{t},\pivec_{t};\tvec^{\pi_{t}})$ in the dynamic duality gap and noting 
that $\pi_{J}$ is such that 
$\frac{1}{T}\sum_{t=1}^{T}\iprod{\muvec^{\pi_{t}}}{\rvec} = 
\EEJ{\iprod{\muvec^{\pi_{J}}}{\rvec}}$ we get
\begin{equation*}
	\gap_T\pa{\Phim\transpose\muvec^{\pi^{*}},\pivec^{*},\{\tvec^{\pi_{t}}\}_{t=1}^{T}}
		= \EEJ{\iprod{\muvec^{\pi^{*}} - \muvec^{\pivec_{J}}}{\rvec}}.
\end{equation*}
This completes the proof.
\hfill\qed

\subsubsection{Proof of Lemma~\ref{lem:gap_to_reg2}}\label{appx:gap_to_reg2}
Recall that for any comparator points 
$\pa{\lvec^{*},\pivec^{*};\{\tvec_{t}^{*}\}_{t=1}^{T}}$, the dynamic duality 
gap is defined as
\[
\gap_T\pa{\lvec^{*},\pivec^{*};\{\tvec_{t}^{*}\}_{t=1}^{T}}	= 
\frac{1}{T}\sum_{t=1}^{T} 
\pa{f(\lvec^{*},\pivec^{*};\tvec_{t}) - 
	f(\lvec_{t},\pivec_{t};\tvec^{*}_{t})}.
\]
Then, by adding and subtracting some terms we express the dynamic duality gap 
in terms of the average loss of each player with respect to the objective 
$f(\lvec,\pivec;\tvec)$. This gives
\begin{align}
	\gap_T(\lvec^{*},\pivec^{*},\tvec^{*}_{1:T})
	\nonumber&= \frac{1}{T}\sum_{t=1}^{T}
	f(\lvec^{*},\pivec^{*};\tvec_{t}) - 
		f(\lvec^{*},\pivec_{t};\tvec_{t})\\
	\nonumber&\quad
	+ \frac{1}{T}\sum_{t=1}^{T}
	f(\lvec^{*},\pivec_{t};\tvec_{t}) - 
		f(\lvec_{t},\pivec_{t};\tvec_{t})\\
	\label{lem:eq:gap}&\quad
	+ \frac{1}{T}\sum_{t=1}^{T}
	f(\lvec_{t},\pivec_{t};\tvec_{t}) - 
		f(\lvec_{t},\pivec_{t};\tvec^{*}_{t}).
\end{align}
Consider the first set of terms from the above expression. By definition of $f$ 
in Equation~\eqref{eq:red_lag}, we immediately obtain the \emph{instantaneous} 
regret of the $\pi$-player as
\begin{align*}
	f(\lvec^{*},\pivec^{*};\tvec_{t}) - 
	f(\lvec^{*},\pivec_{t};\tvec_{t})
	&=
	\iprod{\tvec_{t}}{\Phim\transpose\muvec_{\lvec^{*},\pivec^{*}} - 
		\Phim\transpose\muvec_{\lvec^{*},\pivec_{t}}}\\
	&=
	\sum_{x}\nu^{*}(x)\sum_{a}\pa{\pi^{*}(a|x) - \pi_{t}(a|x)}q_{t}(x,a)\\
	&=
	\sum_{x}\nu^{*}(x)\sum_{a}\pa{\pi^{*}(a|x) - \pi_{t}(a|x)}q_{t}(x,a),
\end{align*}
where $\nu^{*}(x) = (1-\df)\nu_0(x) + \df\iprod{\psivec(x)}{\lvec^{*}}$. For 
the regret of the $\lvec$ and $\tvec$-players, notice that we can express 
the estimator $\widehat{f}$ in terms of the objective $f$ as follows:
\begin{align*}
	\widehat{f}(\lvec,\pivec;\tvec)
	&= (1-\df)\iprod{\nuvec_0}{\vvec_{\tvec,\pivec}} 
	+ \iprod{\lvec}{\rtheta + \gamma \hPsim\vvec_{\tvec,\pivec} - \tvec}\\
	&= f(\lvec,\pivec;\tvec) + \df\iprod{\lvec}{\hPsim\vvec_{\tvec,\pivec} - 
		\Psim\vvec_{\tvec,\pivec}}.
\end{align*}
Taking advantage of this relation, we now consider the last two set of terms in 
Equation~\eqref{lem:eq:gap}. Indeed, for the second set of terms in the 
equation, we write
\begin{align*}
	&f(\lvec^{*},\pivec_{t};\tvec_{t}) - 
	f(\lvec_{t},\pivec_{t};\tvec_{t})\\
	&\quad\quad=
	\widehat{f}(\lvec^{*},\pivec_{t};\tvec_{t}) -
	\widehat{f}(\lvec_{t},\pivec_{t};\tvec_{t})
	- \df\iprod{\lvec^{*}}{\hPsim\vvec_{\tvec_{t},\pivec_{t}} - 
		\Psim\vvec_{\tvec_{t},\pivec_{t}}}
	+ \df\iprod{\lvec_{t}}{\hPsim\vvec_{\tvec_{t},\pivec_{t}} - 
		\Psim\vvec_{\tvec_{t},\pivec_{t}}}\\
	&\quad\quad=
	\iprod{\lvec^{*} - \lvec_{t}}
	{\rtheta + \gamma \hPsim\vvec_{\tvec_{t},\pivec_{t}} - \tvec_{t}}
	- \df\iprod{\lvec^{*}}{\hPsim\vvec_{\tvec_{t},\pivec_{t}} - 
		\Psim\vvec_{\tvec_{t},\pivec_{t}}}
	+ \df\iprod{\lvec_{t}}{\hPsim\vvec_{\tvec_{t},\pivec_{t}} - 
		\Psim\vvec_{\tvec_{t},\pivec_{t}}},
\end{align*}
Notice that the last equality follows directly from definition of $\wh{f}$. 
Along these lines, we can also express the last set of terms in 
Equation~\eqref{lem:eq:gap} as follows:
\begin{align*}
	&f(\lvec_{t},\pivec_{t};\tvec_{t}) - 
	f(\lvec_{t},\pivec_{t};\tvec^{*}_{t})\\
	&\quad\quad=
	\widehat{f}(\lvec_{t},\pivec_{t};\tvec_{t}) - 
	\widehat{f}(\lvec_{t},\pivec_{t};\tvec^{*}_{t})
	- \df\iprod{\lvec_{t}}{\hPsim\vvec_{\tvec_{t},\pivec_{t}} - 
		\Psim\vvec_{\tvec_{t},\pivec_{t}}}
	+ \df\iprod{\lvec_{t}}{\hPsim\vvec_{\tvec_{t},\pivec_{t}} - 
		\Psim\vvec_{\tvec^{*}_{t},\pivec_{t}}}\\
	&\quad\quad=
	\iprod{\tvec_{t} - 
		\tvec^{*}_{t}}{\Phim\transpose\hmuvec_{\lvec_{t},\pivec_{t}} 
		- \lvec_{t}}
	- \df\iprod{\lvec_{t}}{\hPsim\vvec_{\tvec_{t},\pivec_{t}} - 
		\Psim\vvec_{\tvec_{t},\pivec_{t}}}
	+ \df\iprod{\lvec_{t}}{\hPsim\vvec_{\tvec_{t},\pivec_{t}} - 
		\Psim\vvec_{\tvec^{*}_{t},\pivec_{t}}},
\end{align*}
Plugging the above derivations in the dynamic duality gap, we have that
\begin{align}
	\gap_T(\lvec^{*},\pivec^{*},\tvec^{*}_{1:T})
	\nonumber&= \frac{1}{T}\sum_{t=1}^{T}
	\sum_{x}\nu^{*}(x)\sum_{a}\pa{\pi^{*}(a|x) - 
			\pi_{t}(a|x)}q_{t}(x,a)\\
	\nonumber&\quad
	+ \frac{1}{T}\sum_{t=1}^{T}
	\iprod{\lvec^{*} - \lvec_{t}}{\rtheta + \gamma
			\hPsim\vvec_{\tvec_{t},\pivec_{t}} - \tvec_{t}}\\
	\nonumber&\quad
	+ \frac{1}{T}\sum_{t=1}^{T}
	\iprod{\tvec_{t} - 	
			\tvec^{*}_{t}}{\Phim\transpose\hmuvec_{\lvec_{t},\pivec_{t}} - 
			\lvec_{t}}\\
	\nonumber&\quad
	+ \frac{\df}{T}\sum_{t=1}^{T}
	\iprod{\lvec^{*}}{\Psim\vvec_{\tvec_{t},\pivec_{t}} - 
			\hPsim\vvec_{\tvec_{t},\pivec_{t}}}
	+ \frac{\df}{T}\sum_{t=1}^{T}
	\iprod{\lvec_{t}}{\hPsim\vvec_{\tvec_{t}^{*},\pivec_{t}} - 
			\Psim\vvec_{\tvec_{t}^{*},\pivec_{t}}}.
\end{align}
This matches the claim of the lemma, thus completing the proof.
\hfill\qed

\subsection{Bounding the Regret Terms}\label{appx:REG_proof}
In this section we provide the proofs of the our claims made in the main text about the regret of each 
player---precisely, Lemmas~\ref{lem:piregret}--\ref{lem:thetaregret}.
\subsubsection{Proof of Lemma~\ref{lem:piregret}}\label{appx:piregret}
Consider the regret of the $\pi$-player introduced in the main text as,
\begin{align*}
	\regret_{T}\pa{\pivec^{*}}
	&=\sum_{t=1}^{T}
	{\sum_{x}\nu^{*}(x)\sum_{a}\pa{\pi^{*}(a|x) - 
			\pi_{t}(a|x)}q_{t}(x,a)}\\
	&\overset{(a)}{\leq}
	\frac{\sum_{x}\nu^{*}(x)\DDKL{\pi^{*}\pa{\cdot|x}}{\pi_{1}\pa{\cdot|x}}}{\alpha}
	+ \frac{\alpha 
		TR^{2}D_{\tvec}^{2}}{2}\\
	&\overset{(b)}{\leq}
	\frac{\log A}{\alpha } + \frac{\alpha T	R^{2}D_{\tvec}^{2}}{2}.
\end{align*}
We have used $(a)$ the standard Mirror descent analysis of softmax policy 
iterates recalled in Lemma~\ref{lem:MD} for completeness, and $(b)$ the fact 
that $\pi_{1}$ is a uniform policy and $\nuvec^{*}\in\Delta_{\X}$. Dividing the 
above expression by $T$ completes the proof.\hfill\qed

\subsubsection{Proof of Lemma~\ref{lem:lambdaregret}}\label{appx:lambdaregret}
Recall the total regret of the $\lvec$-player against any fixed comparator 
$\lvec^{*}\in\Rn^{d}$ is given as
\[
	\regret_{T}\pa{\lvec^{*}}
	=
	\sum_{t=1}^{T}
	{\ip{\lvec^{*} - \lvec_{t}}{\rtheta + \gamma
			\hPsim\vvec_{\tvec_{t},\pivec_{t}} - \tvec_{t}}}.
\]
Since the feature-occupancy updates of Algorithm~\ref{alg:main1} 
simply implements a version of the composite-objective mirror descent scheme 
due to \cite{duchi2010composite} we apply the standard analysis of this method (recalled as Lemma~\ref{lem:COGDA} in 
Appendix~\ref{appx:REGlem_proof}) to bound the 
instantaneous regret as
\begin{align*}
	&\ip{\lvec^{*} - \lvec_{t}}{\rtheta + \gamma
		\hPsim\vvec_{\tvec_{t},\pivec_{t}} - \tvec_{t}}\\
	&\qquad\leq
	\frac{\norm{\lvec_{t} - \lvec^{*}}^{2}_{\Lm^{-1}} - \norm{\lvec_{t+1} - 
			\lvec^{*}}^{2}_{\Lm^{-1}}}{2\eta}
	+ \frac{\eta}{2}\norm{\Lm\gl}^{2}_{\Lm^{-1}}
	+ \frac{\varrho}{2}\norm{\lvec^{*}}^{2}_{\Lm^{-1}}
	- \frac{\varrho}{2}\norm{\lvec_{t+1}}^{2}_{\Lm^{-1}}.
\end{align*}
Then, taking the sum for $t=1,\cdots,T$, evaluating the 	
telescoping sums and upper-bounding some negative terms by zero yields the 
expression
\begin{align*}
	&\sum_{t=1}^{T}\ip{\lvec^{*} - \lvec_{t}}{\rtheta + \gamma
		\hPsim\vvec_{\tvec_{t},\pivec_{t}} - \tvec_{t}}\\
	&\qquad\leq
	\frac{\norm{\lvec_{1} - \lvec^{*}}^{2}_{\Lm^{-1}}}{2\eta}
	+ \frac{\eta}{2}\sum_{t=1}^{T}\norm{\Lm\gl}^{2}_{\Lm^{-1}}
	+ \frac{\varrho T}{2}\norm{\lvec^{*}}^{2}_{\Lm^{-1}}
	- \frac{\varrho}{2}\sum_{t=1}^{T}\norm{\lvec_{t}}^{2}_{\Lm^{-1}}
	+\frac{\varrho}{2}\norm{\lvec_{1}}^{2}_{\Lm^{-1}}\\
	&\qquad=
	\pa{\frac{1}{2\eta} + \frac{\varrho 
			T}{2}}\norm{\lvec^{*}}^{2}_{\Lm^{-1}}
	+ \frac{\eta}{2}\sum_{t=1}^{T}\norm{\Lm\gl}^{2}_{\Lm^{-1}}
	- \frac{\varrho}{2}\sum_{t=1}^{T}\norm{\lvec_{t}}^{2}_{\Lm^{-1}}.
\end{align*}
In the equality, we have used that $\lvec_{1}=\vec{0}$. Dividing the resulting 
term by $T$ gives the following bound on the average regret:
\[
	\frac{1}{T}\regret_{T}\pa{\lvec^{*}}
	\leq
	\pa{\frac{1}{2T\eta} + \frac{\varrho 
			}{2}}\norm{\lvec^{*}}^{2}_{\Lm^{-1}}
	+ \frac{\eta}{2T}\sum_{t=1}^{T}\norm{\Lm\gl}^{2}_{\Lm^{-1}}
	- \frac{\varrho}{2T}\sum_{t=1}^{T}\norm{\lvec_{t}}^{2}_{\Lm^{-1}}.
\]
The proof is completed by applying Lemma~\ref{lem:gnorm} to bound the norm of the gradients and plugging the 
result into the bound above. \hfill\qed

\subsubsection{Proof of Lemma~\ref{lem:thetaregret}}\label{appx:thetaregret}
For the regret of the $\tvec$-player, first note that for any policy $\pi$ with 
corresponding 
state-action value function weights $\tvec^{\pi}=\rtheta + 
\df\Psim\vvec^{\pi}$, we have
\begin{align*}
	\twonorm{\tvec^{\pi}}
	&= \twonorm{\rtheta + \df\Psim\vvec^{\pi}}\leq \twonorm{\rtheta} + \df\twonorm{\Psim\vvec^{\pi}}\leq \sqrt{d} + 
\frac{\df\sqrt{d}}{\pa{1-\df}} = 
	\frac{\sqrt{d}}{\pa{1-\df}},
\end{align*}
where we have used the triangle inequality in the second line. The last inequality 
uses Definition~\ref{def:linMDP} and the fact that 
$\infnorm{\vvec^{\pi}}\leq\frac{1}{\pa{1-\df}}$ since the rewards are bounded 
in $[0,1]$. Thanks to this bound, we can ensure that $\tvec^{*}_{t}=\tvec^{\pi_{t}}\in\bb{d}{D_{\tvec}}$ 
holds with the choice  $D_{\tvec}=\sqrt{d}/\pa{1-\df}$ as required by the lemma.Therefore, by construction 
of value-parameter updates in Algorithm~\ref{alg:main1}, we have
\[
\iprod{\tvec_{t} - 
	\tvec^{*}_{t}}{\Phim\transpose\hmuvec_{\lvec_{t},\pivec_{t}} - 
	\lvec_{t}}\leq 0 \qquad\text{for } t=1,\cdots,T.
\]
This concludes the proof.\hfill\qed

\subsection{Bounding the gap-estimation error}\label{appx:GER_proof}
In this section, we provide the proof of Lemma~\ref{lem:GER} which 
bounds the gap-estimation error defined for an arbitrary 
comparator sequence 
$\pa{\lvec^{*},\pi_{t},\tvec_{t}^{*}}\in\Rn^{d}\times\Pi\pa{D_{\pi}}\times\bb{d}{D_{\tvec}}$
 for 
$t=1,\dots,T$ as,
\begin{equation*}
	\GER = \sum_{t=1}^{T}
	\biprod{\lvec^{*}}{\bpa{\Psim - \hPsim}\vvec_{\tvec_{t},\pi_{t}}}
	+ \sum_{t=1}^{T}
	\biprod{\lvec_{t}}{\bpa{\hPsim - \Psim}\vvec_{\tvec_{t}^{*},\pivec_{t}}}.
\end{equation*}
We control the above term with the now-classic techniques developed by 
\citet{jin2020provably} for bounding model-estimation errors for linear MDPs.
These results also make heavy use of self-normalized tail inequalities as popularized by
\citet{abbasi2011improved} (see also \citealp{lattimore2020bandit}). To make this clear, 
we first note that, for any $\lvec\in\Rn^{d}$, 
$\vvec\in\Rn^{X}$, and $\xi>0$, 
\[
\iprod{\lvec}{\pa{\hPsim - \Psim}\vvec}
\overset{(a)}{\leq}
\norm{\lvec}_{\Lm^{-1}}\norm{\Lm\pa{\hPsim - \Psim}\vvec}_{\Lm^{-1}}
\overset{(b)}{\leq}
\frac{\norm{\lvec}_{\Lm^{-1}}^{2}}{2 T\xi} + 
\frac{T\xi}{2}\norm{\Lm\pa{\hPsim - \Psim}\vvec}_{\Lm^{-1}}^{2}.
\]
Here, we have first used $(a)$ the Cauchy--Schwarz inequality, and $(b)$ the inequality of arithmetic and geometric 
means. 
Using this expression, we can upper-bound the gap estimation error as
\begin{align}\label{eq:GER1}
	\GER
	\nonumber&\leq
	\frac{\norm{\lvec^{*}}_{\Lm^{-1}}^{2}}{2 \xi}
	+
	\sum_{t=1}^{T}\frac{\norm{\lvec_{t}}_{\Lm^{-1}}^{2}}{2 T\xi}\\
	&\quad+ 
	\frac{T\xi}{2}\sum_{t=1}^{T}\norm{\Lm\pa{\hPsim - 
			\Psim}\vvec_{\tvec_{t},\pi_{t}}}_{\Lm^{-1}}^{2}
	+ 
	\frac{T\xi}{2}\sum_{t=1}^{T}\norm{\Lm\pa{\hPsim - 
			\Psim}\vvec^{\pi_{t}}}_{\Lm^{-1}}^{2}.
\end{align}
To control the last two terms in the bound, we employ two main lemmas stated 
below.
\begin{lemma}\label{lem:LSQerror1}
	Let $\vvec \in [-B,B]^{X}$. With probability at least $1-\delta$, we have 
	that:
	\[
	\norm{\Lm\pa{\hPsim - \Psim}\vvec}_{\Lm^{-1}}
	\leq
	\frac{2B}{\sqrt{n}}\sqrt{d\log\Biggl(1 + \frac{R^{2}}{d\beta}\Biggr) + 
		2\log\frac{1}{\delta}}
	+
	B\sqrt{d\beta}.
	\]
\end{lemma}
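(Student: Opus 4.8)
The plan is to decompose the error $\Lm\pa{\hPsim - \Psim}\vvec$ into a conditionally mean-zero noise term driven by the sampling of the next states and a deterministic bias term arising from the ridge regularization. Using the closed form in Equation~\eqref{eq:P_LSQE}, I would first observe that $\Lm\hPsim\vvec = \frac{1}{n}\sum_{i=1}^n\phii\vvec(X_i')$, while the linear MDP property of Definition~\ref{def:linMDP} yields $\ip{\phii}{\Psim\vvec} = \sum_{x'}p(x'|X_i,A_i)\vvec(x') = \EEcc{\vvec(X_i')}{X_i,A_i}$, so that $\Lm\Psim\vvec = \beta\Psim\vvec + \frac{1}{n}\sum_{i=1}^n\phii\ip{\phii}{\Psim\vvec}$. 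Subtracting these gives
\[
\Lm\pa{\hPsim - \Psim}\vvec = \frac{1}{n}\sum_{i=1}^n\phii\eta_i - \beta\Psim\vvec, \qquad \eta_i = \vvec(X_i') - \EEcc{\vvec(X_i')}{X_i,A_i},
\]
where each $\eta_i$ is conditionally centred given $(X_i,A_i)$ and the past samples, and satisfies $\abs{\eta_i}\le 2B$ since $\vvec\in[-B,B]^X$.

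By the triangle inequality for the $\Lm^{-1}$-norm it then suffices to bound the two terms separately. For the bias term, since $\Lm\succeq\beta\IIm$ gives $\Lm^{-1}\preceq\frac{1}{\beta}\IIm$, I would bound $\beta\norm{\Psim\vvec}_{\Lm^{-1}}\le\beta\cdot\frac{1}{\sqrt\beta}\twonorm{\Psim\vvec}\le B\sqrt{d\beta}$, invoking the assumption $\twonorm{\Psim\vvec}\le B\sqrt d$ from Definition~\ref{def:linMDP}. This accounts exactly for the second additive term in the claim.

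For the noise term, introduce the unnormalized Gram matrix $V_n = n\Lm = n\beta\IIm + \sum_{i=1}^n\phii\phii\transpose$, so that $\Lm^{-1} = nV_n^{-1}$ and hence $\norm{\frac{1}{n}\sum_i\phii\eta_i}_{\Lm^{-1}} = \frac{1}{\sqrt n}\norm{\sum_i\phii\eta_i}_{V_n^{-1}}$. The heart of the argument---and the main obstacle---is controlling the self-normalized sum $\norm{\sum_i\phii\eta_i}_{V_n^{-1}}$ through the method-of-mixtures concentration bound of \citet{abbasi2011improved}: since $\eta_i$ is conditionally $2B$-subgaussian, with probability at least $1-\delta$,
\[
\norm{\textstyle\sum_{i=1}^n\phii\eta_i}_{V_n^{-1}}^2 \le 2\pa{2B}^2\log\frac{\det(V_n)^{1/2}\det(n\beta\IIm)^{-1/2}}{\delta}.
\]
To make this explicit I would use the trace--determinant bound $\det(V_n)\le(n\beta + nR^2/d)^d$, which follows from $\Tr(V_n)\le dn\beta + nR^2$ together with $\twonorm{\feat}\le R$, giving the ratio $\det(V_n)/\det(n\beta\IIm)\le\pa{1 + R^2/(d\beta)}^d$. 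Substituting this, taking square roots, and dividing by $\sqrt n$ produces the first additive term $\frac{2B}{\sqrt n}\sqrt{d\log\pa{1 + R^2/(d\beta)} + 2\log(1/\delta)}$, which together with the bias bound completes the proof. The only delicate points are verifying the martingale/subgaussian structure of the $\eta_i$ (clean here because each $X_i'$ is drawn independently conditioned on $(X_i,A_i)$) and correctly tracking the factor of $2B$ in the subgaussian parameter.
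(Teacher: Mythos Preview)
Your proposal is correct and follows essentially the same route as the paper's proof: both decompose $\Lm(\hPsim-\Psim)\vvec$ into the martingale-noise term $\frac{1}{n}\sum_i\phii\eta_i$ and the ridge bias $-\beta\Psim\vvec$, bound the bias via $\Lm^{-1}\preceq\beta^{-1}\IIm$ and Definition~\ref{def:linMDP}, and handle the noise by applying the self-normalized concentration inequality of \citet{abbasi2011improved} with subgaussian parameter $2B$, followed by the trace--determinant bound on $\det(n\Lm)$. The only cosmetic difference is that the paper names the noise variables $\xi_i$ and explicitly introduces the filtration $\F_{i-1}=\D_{i-1}\cup(x_i^0,x_i,a_i,r_i)$, whereas you leave this implicit.
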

\begin{lemma}\label{lem:LSQerror2}
	Consider the function class,
	\[
	\VV = \Bigl\{\vvec_{\pivec,\tvec}:\X\rightarrow 
	[-RD_{\tvec},RD_{\tvec}] \Big| 
	\pivec\in\Pi\pa{D_{\pivec}}, \tvec\in\bb{d}{D_{\tvec}}\Bigr\},
	\]
	Let $D_{\pivec} = \alpha TD_{\tvec}$ so that 
	$\vvec_{\theta_{t},\pi_{t}}\in\VV$. For any $\epsilon\in\pa{0,1}$, with 
	probability at least $1-\delta$,
	\begin{align*}
		&\norm{\Lm\pa{\hPsim - \Psim}\vvec_{\theta_{t},\pi_{t}}}_{\Lm^{-1}}\\
		&\qquad\qquad\leq
		\frac{2RD_{\tvec}}{\sqrt{n}}\sqrt{d\log\Biggl(1 + 
			\frac{R^{2}}{d\beta}\Biggr) + 
			4d\log\pa{1+\frac{4\alpha TR^{2}D_{\tvec}^{2}}{\epsilon}} + 
			2\log\frac{1}{\delta}}\\
		&\quad\qquad\qquad
		+
		RD_{\tvec}\sqrt{d\beta}
		+
		\pa{\sqrt{\beta} + 1}\epsilon\sqrt{d}.
	\end{align*}
\end{lemma}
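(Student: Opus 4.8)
The plan is to lift the fixed-function guarantee of Lemma~\ref{lem:LSQerror1} (applied with range parameter $B = RD_{\tvec}$) to a bound that holds \emph{simultaneously} over the whole class $\VV$, via a covering argument. The obstacle that forces this is that the relevant value function $\vvec_{\theta_t,\pi_t}$ is data-dependent (it is built from the iterates, which depend on $\D_n$), so Lemma~\ref{lem:LSQerror1} cannot be invoked directly for it; instead I discretize the parameters, apply a union bound over a fixed net, and pay a discretization error. Note first that $\pivec_t = \sigma\pa{\alpha\Phim\bar\tvec_{t-1}}$ with $\alpha\twonorm{\bar\tvec_{t-1}}\le\alpha TD_{\tvec}=D_{\pivec}$ and $\tvec_t\in\bb{d}{D_{\tvec}}$, so indeed $\vvec_{\theta_t,\pi_t}\in\VV$.

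First I would build a sup-norm $\epsilon$-cover $\mathcal{C}_\epsilon$ of $\VV$. Every $\vvec_{\tvec,\pivec}\in\VV$ is determined by $\tvec\in\bb{d}{D_{\tvec}}$ and a softmax policy parameter $\bm{w}\in\bb{d}{D_{\pivec}}$ with $\pivec=\sigma\pa{\Phim\bm{w}}$. Using $v_{\tvec,\pivec}(x)=\sum_a\pi(a|x)\iprod{\tvec}{\feat}$ together with $\twonorm{\feat}\le R$, I would verify the two Lipschitz estimates $\infnorm{\vvec_{\tvec,\pivec}-\vvec_{\tvec',\pivec}}\le R\twonorm{\tvec-\tvec'}$ and $\infnorm{\vvec_{\tvec,\pivec}-\vvec_{\tvec,\pivec'}}\le RD_{\tvec}\sup_x\onenorm{\pi(\cdot|x)-\pi'(\cdot|x)}\le 2R^2D_{\tvec}\twonorm{\bm{w}-\bm{w}'}$, the last step using Lipschitzness of the softmax from (sup-norm) logits to the ($1$-norm) simplex. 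Covering each of the two $d$-dimensional balls at the scales $\epsilon/(2R)$ and $\epsilon/(4R^2D_{\tvec})$ respectively, and using $D_{\pivec}=\alpha TD_{\tvec}$, yields a net with $\log\abs{\mathcal{C}_\epsilon}\le 2d\log\pa{1+4\alpha TR^2D_{\tvec}^2/\epsilon}$.

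Next I would apply Lemma~\ref{lem:LSQerror1} with $B=RD_{\tvec}$ and confidence $\delta/\abs{\mathcal{C}_\epsilon}$ to each element of the net and take a union bound, so that with probability at least $1-\delta$, simultaneously for all $\ovvec\in\mathcal{C}_\epsilon$,
\[
\norm{\Lm\pa{\hPsim-\Psim}\ovvec}_{\Lm^{-1}}\le\frac{2RD_{\tvec}}{\sqrt n}\sqrt{d\log\pa{1+\frac{R^2}{d\beta}}+2\log\frac{\abs{\mathcal{C}_\epsilon}}{\delta}}+RD_{\tvec}\sqrt{d\beta}.
\]
Substituting the covering-number bound into $2\log\pa{\abs{\mathcal{C}_\epsilon}/\delta}$ reproduces the $4d\log\pa{1+4\alpha TR^2D_{\tvec}^2/\epsilon}+2\log(1/\delta)$ term under the square root, and the regularization-bias term $RD_{\tvec}\sqrt{d\beta}$ is inherited verbatim. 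Then, for the actual $\vvec_{\theta_t,\pi_t}\in\VV$ I pick the nearest net point $\ovvec$ with $\infnorm{\vvec_{\theta_t,\pi_t}-\ovvec}\le\epsilon$ and apply the triangle inequality, reducing the remainder to bounding $\norm{\Lm(\hPsim-\Psim)\vvec}_{\Lm^{-1}}$ for $\vvec=\vvec_{\theta_t,\pi_t}-\ovvec$ with $\infnorm{\vvec}\le\epsilon$. Writing $\Lm(\hPsim-\Psim)\vvec=\frac{1}{n}\sum_i\phii\vvec(X_i')-\beta\Psim\vvec-\frac{1}{n}\sum_i\phii\bpa{\phii\transpose\Psim\vvec}$ and using the trace identity $\frac1n\sum_i\norm{\phii}_{\Lm^{-1}}^{2}=\Tr\pa{\IIm-\beta\Lm^{-1}}\le d$ (which, with Cauchy--Schwarz, gives $\norm{\frac1n\sum_i\phii c_i}_{\Lm^{-1}}\le\epsilon\sqrt d$ whenever $\abs{c_i}\le\epsilon$), together with $\beta\norm{\Psim\vvec}_{\Lm^{-1}}\le\sqrt\beta\,\norm{\Psim\vvec}\le\sqrt\beta\,\epsilon\sqrt d$ from the linear-MDP bound on $\Psim$, yields a discretization error of order $\pa{\sqrt\beta+1}\epsilon\sqrt d$. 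Combining the three pieces gives the claim.

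The main obstacle I anticipate is making the covering argument interact cleanly with the self-normalized inequality: the net $\mathcal{C}_\epsilon$ must be a \emph{fixed}, data-independent set for the union bound to be legitimate, whereas the controlled quantity $\Lm(\hPsim-\Psim)\ovvec$ is a normalized martingale transform whose conditional sub-Gaussianity (with proxy $B=RD_{\tvec}$) rests only on $X_i'\sim p(\cdot\mid X_i,A_i)$ being conditionally independent given $(X_i,A_i)$ -- which is exactly the standing data assumption, and is what Lemma~\ref{lem:LSQerror1} already encodes. The secondary care is bookkeeping the softmax Lipschitz constant and the fact that $\VV$ has the $\df$-free range $[-RD_{\tvec},RD_{\tvec}]$, so that the constants in the covering number and in the discretization term line up with the stated bound.
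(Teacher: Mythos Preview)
Your proposal is correct and follows essentially the same route as the paper: decompose via the triangle inequality into a net-point term (handled by Lemma~\ref{lem:LSQerror1} with a union bound over an $\epsilon$-cover of $\VV$, whose log-cardinality is bounded as in Lemma~\ref{lem:value_cover}) and a discretization remainder controlled by the trace identity $\frac1n\sum_i\norm{\phii}_{\Lm^{-1}}^{2}\le d$ together with the linear-MDP bound $\twonorm{\Psim\vvec}\le\epsilon\sqrt{d}$. The only minor slip is a factor of two in your softmax Lipschitz step: the paper uses Pinsker's inequality plus Hoeffding's lemma (Lemma~\ref{lem:policy_cover}) to obtain $\onenorm{\pi_{\bm{w}}(\cdot|x)-\pi_{\bm{w}'}(\cdot|x)}\le R\twonorm{\bm{w}-\bm{w}'}$, which is what makes the constant inside the log equal to $4$ rather than the $8$ your stated Lipschitz constant would produce.
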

The rather tedious but otherwise standard proofs of the above lemmas are given in 
Appendix~\ref{appx:GERlem_proof}. Now, taking into account the fact that for 
$D_{\tvec}$ large enough $\vvec^{\pi_{t}}\in\VV$ yields 
Corollary~\ref{cor:LSQerror3} below.
\begin{corollary}\label{cor:LSQerror3}
	In the linear MDP setting described in Definition~\ref{def:linMDP}, notice 
	that 
	$\vvec^{\pi_{t}}=\vvec_{\tvec^{\pivec_{t}},\pivec_{t}}$ with 	
	$\tvec^{\pivec_{t}} = \rtheta + 
	\df\Psim\vvec_{\tvec^{\pi_{t}},\pivec_{t}}$. Furthermore, with 
	$RD_{\tvec}=R\sqrt{d}/(1-\df)\geq\infnorm{\vvec^{\pi_{t}}}$ and $D_{\pivec} 
	= \alpha TD_{\tvec}$ we have that $\vvec^{\pi_{t}}\in\VV$.
	Therefore, for all $\epsilon>0$ with probability at least $1-\delta$, the 
	following holds:
	\begin{align*}
		&\norm{\Lm\pa{\hPsim - \Psim}\vvec^{\pi_{t}}}_{\Lm^{-1}}\\
		&\qquad\leq
		\frac{2\sqrt{d}}{\sqrt{n}\pa{1-\df}}\sqrt{d\log\Biggl(1 + 
			\frac{R^{2}}{d\beta}\Biggr) + 
			4d\log\pa{1+\frac{4\alpha TR^{2}d}{\epsilon\sq{1-\df}}} + 
			2\log\frac{1}{\delta}}
		\\&\quad\qquad
		+
		\frac{d\sqrt{\beta}}{\pa{1-\df}}
		+
		\pa{\sqrt{\beta} + 1}\epsilon\sqrt{d}.
	\end{align*}
\end{corollary}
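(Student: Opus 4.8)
The plan is to recognize $\vvec^{\pi_{t}}$ as a specific member of the function class $\VV$ and then invoke Lemma~\ref{lem:LSQerror2} with the concrete choice $D_{\tvec} = \sqrt{d}/(1-\df)$. The entire argument is bookkeeping: the technical content already resides in Lemma~\ref{lem:LSQerror2}, and the corollary merely specializes it to the value functions $\vvec^{\pi_{t}}$ that actually appear in the fourth term of the bound~\eqref{eq:GER1}.

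First I would identify the parametrization. By the Bellman equation $\qvec^{\pi_{t}} = \rvec + \df\Pm\vvec^{\pi_{t}} = \Phim\pa{\rtheta + \df\Psim\vvec^{\pi_{t}}}$ and the definition $v_{\tvec,\pivec}(x) = \sum_{a}\pi(a|x)\iprod{\tvec}{\feat}$ from Equation~\eqref{eq:best_response}, it follows that $\vvec^{\pi_{t}} = \vvec_{\tvec^{\pi_{t}},\pivec_{t}}$ with $\tvec^{\pi_{t}} = \rtheta + \df\Psim\vvec^{\pi_{t}}$, which is the claimed identity. Next I would verify the three conditions defining membership in $\VV$: (i) the parameter norm satisfies $\twonorm{\tvec^{\pi_{t}}} \le \sqrt{d}/(1-\df) = D_{\tvec}$, which is exactly the bound already derived in the proof of Lemma~\ref{lem:thetaregret} via the triangle inequality and Definition~\ref{def:linMDP}; (ii) the softmax iterate $\pivec_{t}$ lies in $\Pi\pa{D_{\pivec}}$, since each iterate is of the form $\sigma\pa{\alpha\Phim\bar{\tvec}}$ with $\bar{\tvec} = \sum_{k}\tvec_{k}$ a partial sum of at most $T$ terms, so $\twonorm{\alpha\bar{\tvec}} \le \alpha T D_{\tvec} = D_{\pivec}$ because each $\twonorm{\tvec_{k}} \le D_{\tvec}$; and (iii) the range constraint, for which I would use that rewards lie in $[0,1]$, hence $\infnorm{\vvec^{\pi_{t}}} \le 1/(1-\df)$, and since $R \ge 1$ and $d \ge 1$ this is at most $R\sqrt{d}/(1-\df) = RD_{\tvec}$, placing $\vvec^{\pi_{t}}$ in the required interval $[-RD_{\tvec},RD_{\tvec}]$.

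With membership established, the proof concludes by applying Lemma~\ref{lem:LSQerror2} to $\vvec^{\pi_{t}}$ and substituting $RD_{\tvec} = R\sqrt{d}/(1-\df)$ together with $D_{\tvec}^{2} = d/(1-\df)^{2}$ into each of its three terms, which reproduces the stated bound. The only genuinely substantive step is the range verification in (iii): it is precisely the assumption $R \ge 1$ that lets the crude value-function bound $1/(1-\df)$ be absorbed into $RD_{\tvec}$, so that $\vvec^{\pi_{t}}$ can be treated as an element of the \emph{same} class $\VV$ for which Lemma~\ref{lem:LSQerror2} was proved; everything else is a direct substitution.
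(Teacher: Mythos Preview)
Your proposal is correct and follows the same approach as the paper, which simply observes that $\vvec^{\pi_t}\in\VV$ once the parameter, policy, and range conditions are verified, and then invokes the uniform bound over $\VV$ established in the proof of Lemma~\ref{lem:LSQerror2}. Your write-up is in fact more detailed than the paper's one-line justification, and correctly isolates the point that $R\ge 1$ is what allows the crude bound $\infnorm{\vvec^{\pi_t}}\le 1/(1-\df)$ to fit inside $RD_{\tvec}$.
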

In the following, we apply these results to bound the last two terms 
in the right-hand side of Equation~\eqref{eq:GER1}. Precisely, using 
$D_{\tvec}=\sqrt{d}/(1-\df)$, 
$\alpha=\sqrt{2\log A/R^{2}D_{\tvec}^{2}T} = \sqrt{2\sq{1-\df}\log 
A/R^{2}dT}$ (which follows from optimizing the regret of the $\pi$-player in 
Lemma~\ref{lem:piregret}), as well as $\epsilon = 4\alpha R^{2}d/\sq{1-\df} =  
\frac{\sqrt{32R^{2}d\log A}}{\pa{1-\df}\sqrt{T}}$ and $\beta = 
R^{2}/dT$ we have that in any round $t$, with probability at least $1-\delta$,
\begin{equation*}
	\norm{\Lm\pa{\hPsim - 
			\Psim}\vvec_{\tvec_{t},\pi_{t}}}_{\Lm^{-1}}
	\leq
	\sqrt{\frac{20d^{2}\log\pa{2T/\delta}}{n\sq{1-\df}}}
	+ \sqrt{\frac{R^{2}d}{T\sq{1-\df}}}
	+ \sqrt{\frac{R^{4}d\log A}{T^{2}}}
	+ \sqrt{\frac{32R^{2}d^{2}\log A}{T\sq{1-\df}}}.
\end{equation*}
Likewise,
\begin{equation*}
	\norm{\Lm\pa{\hPsim - 
			\Psim}\vvec^{\pi_{t}}}_{\Lm^{-1}}
	\leq
	\sqrt{\frac{20d^{2}\log\pa{2T/\delta}}{n\sq{1-\df}}}
	+ \sqrt{\frac{R^{2}d}{T\sq{1-\df}}}
	+ \sqrt{\frac{R^{4}d\log A}{T^{2}}}
	+ \sqrt{\frac{32R^{2}d^{2}\log A}{T\sq{1-\df}}}.
\end{equation*}
Then plugging in the 
above bounds in Equation~\eqref{eq:GER1} with 
$T\ge\frac{2R^2 n \log A}{\log(1/\delta)}$, 
it follows that 
for $D_{\hPsim} = \sqrt{\frac{320d^{2}\log\pa{2T/\delta}}{n\sq{1-\df}}}$,
\[
	\GER
	\leq
	\frac{\norm{\lvec^{*}}_{\Lm^{-1}}^{2}}{2 \xi}
	+
	\sum_{t=1}^{T}\frac{\norm{\lvec_{t}}_{\Lm^{-1}}^{2}}{2 T\xi}
	+ T^{2}\xi D_{\hPsim}^{2},
\]
with probability at least $1-\delta$, thus proving the claim. \hfill\qed

\subsection{Full proof of Theorem~\ref{thm:main1}}\label{appx:main_proof}
To control the expected suboptimality of the output policy $\pi_{J}$ of 
Algorithm~\ref{alg:main1}, we study the repective regret and gap-estimation 
error at the selected comparator points. Precisely, 
combining Lemma~\ref{lem:gap_to_subopt} and \ref{lem:gap_to_reg2} when 
$\pa{\lvec^{*},\pivec^{*},\tvec^{*}_{1:T}} = 
\pa{\Phim\transpose\muvec^{\pi^{*}}, \pivec^{*}, \tvec^{\pi_{t}}} \in 
\Rn^{d}\times 
\Pi\pa{D_{\pi}}\times \bb{d}{D_{\theta}}$, we have that,
\begin{align}\label{eq:subopt}
	\EEJ{\iprod{\muvec^{\pi^{*}} - \muvec^{\pi_{J}}}{\rvec}}
	&=
	\frac{1}{T}\regret_{T}\pa{\pivec^{*}}
	+ \frac{1}{T}\regret_{T}\pa{\lvec^{\pi^{*}}}
	+ \frac{1}{T}\regret_{T}\pa{\tvec^{*}_{1:T}}
	+ \frac{\df}{T}\GER.
\end{align}
where,
\begin{align*}
	\regret_{T}\pa{\pivec^{*}}
	&=
	\sum_{t=1}^{T}
	{\sum_{x}\nu^{*}(x)\sum_{a}\pa{\pi^{*}(a|x) - 
			\pi_{t}(a|x)}q_{t}(x,a)},&\\
	\regret_{T}\pa{\lvec^{\pi^{*}}}
	&=
	\sum_{t=1}^{T}
	{\ip{\lvec^{\pi^{*}} - \lvec_{t}}{\rtheta + \gamma
			\hPsim\vvec_{\tvec_{t},\pivec_{t}} - \tvec_{t}}},\\
	\regret_{T}\pa{\tvec^{*}_{1:T}}
	&=
	\sum_{t=1}^{T}
	{\ip{\tvec_{t} - 
			\tvec^{\pi_{t}}_{t}}{\Phim\transpose\hmuvec_{\lvec_{t},\pivec_{t}} 
			- \lvec_{t}}}\\
	\GER
	&= \sum_{t=1}^{T}
	\biprod{\lvec^{\pi^{*}}}{\bpa{\Psim - \hPsim}\vvec_{\tvec_{t},\pi_{t}}}
	+ \sum_{t=1}^{T}
	\biprod{\lvec_{t}}{\bpa{\hPsim - \Psim}\vvec^{\pivec_{t}}}.
\end{align*}
Notice that for this choice of $\lvec^{*}$, by Definition~\ref{def:linMDP} 
$\nu^{*}(x) = (1-\df)\nu_0(x) + 
\df\ip{\psivec(x)}{\muvec^{*}}=\nu^{\pi^{*}}(x)$ is a valid state occupancy 
measure.
Next, introducing the bounds stated in Lemmas~\ref{lem:piregret}--\ref{lem:GER} 
under the required conditions $D_{\tvec} = \sqrt{d}/\pa{1-\df}$, 
$\alpha=\sqrt{2\sq{1-\df}\log 
A/R^{2}dT}$, $D_{\pi} = \alpha TD_{\tvec} = \sqrt{2T\log 
A/R^{2}}$ and $T\ge \frac{2 R^2n \log A}{\log(1/\delta)}$, as well as $\xi\ge0$ 
and 
$D_{\hPsim} = \sqrt{\frac{320d^{2}\log\pa{2T/\delta}}{n\sq{1-\df}}}$ yields,
\begin{align*}
	\EEJ{\iprod{\muvec^{\pi^{*}} - \muvec^{\pi_{J}}}{\rvec}}
	&\leq
	\sqrt{\frac{2R^{2}d\log A}{\sq{1-\df}T}}\\
	&\quad
	+
	\pa{\frac{1}{2\eta T} + \frac{\varrho 
		}{2}}\norm{\lvec^{\pi^{*}}}^{2}_{\Lm^{-1}}
	+ \frac{\eta C}{2}
	- \frac{\varrho}{2T}\sum_{t=1}^{T}\norm{\lvec_{t}}^{2}_{\Lm^{-1}}\\
	&\quad
	+
	\frac{\df}{2T\xi} \pa{{\norm{\lvec^{\pi^{*}}}_{\Lm^{-1}}^{2}} + 
		\frac{1}{T} \sum_{t=1}^{T}
		{\norm{\lvec_{t}}_{\Lm^{-1}}^{2}}} + \df T\xi D_{\hPsim}^{2},
\end{align*}
with probability at least $1-\delta$, where $C = 6\beta\pa{d + D_{\tvec}^{2}} + 
3d\pa{1+RD_{\tvec}}^{2} + 3\df^{2} dR^{2}D_{\tvec}^{2}$. Rearranging the bound 
and selecting $\varrho=\df/\xi T$ to eliminate the (potentially large) norm of 
the iterates, we obtain
\begin{align*}
	\EEJ{\iprod{\muvec^{\pi^{*}} - \muvec^{\pi_{J}}}{\rvec}}
	&\leq
	\sqrt{\frac{d\log\pa{1/\delta}}{n\sq{1-\df}}}
	+
	\pa{\frac{1}{2\eta T} + \frac{\varrho}{2} + \frac{\df}{2\xi T}}
	\norm{\lvec^{\pi^{*}}}^{2}_{\Lm^{-1}}
	+ \frac{\eta C}{2}\\
	&\quad
	+
	\pa{\frac{\df}{\xi T} - \varrho}\frac{1}{2T}\sum_{t=1}^{T}
	{\norm{\lvec_{t}}_{\Lm^{-1}}^{2}} + \df T\xi D_{\hPsim}^{2}\\
	&=
	\sqrt{\frac{d\log\pa{1/\delta}}{n\sq{1-\df}}}
	+
	\pa{\frac{1}{2\eta T} + \frac{\df}{\xi T}}
	\norm{\lvec^{\pi^{*}}}^{2}_{\Lm^{-1}}
	+ \frac{\eta C}{2} + \df T\xi D_{\hPsim}^{2}.
\end{align*}
Furthermore, choosing $\xi=1/TD_{\hPsim}$ i.e $\varrho=\df D_{\hPsim}$, we 
further 
simplify the above bound on the regret in terms of the optimization error  
arising from the policy and feature occupancy updates as,
\begin{align*}
	\EEJ{\iprod{\muvec^{\pi^{*}} - \muvec^{\pi_{J}}}{\rvec}}
	&\leq
	\sqrt{\frac{d\log\pa{1/\delta}}{n\sq{1-\df}}}
	+
	\frac{1}{2\eta T}\norm{\lvec^{\pi^{*}}}^{2}_{\Lm^{-1}}
	+ \frac{\eta C}{2}
	+
	\df\pa{\norm{\lvec^{\pi^{*}}}^{2}_{\Lm^{-1}} + 1} D_{\hPsim}.
\end{align*}
Moving our attention to our earlier bound on the norm of $\gl$,
\[
	C = 6\beta\pa{d + D_{\tvec}^{2}} + 
	3d\pa{1+RD_{\tvec}}^{2} + 3\df^{2} dR^{2}D_{\tvec}^{2}\leq 
	\frac{27R^{2}d^{2}}{\sq{1-\df}}.
\]
The inequality follows from our earlier choice of $\beta=R^{2}/dT$ and that 
$T\geq1/d^{2}$. Plugging the values of $C$ and $D_{\hPsim}$ in the bound, then 
choosing $\eta = 
\sqrt{\frac{\sq{1-\df}}{27R^{2}d^{2}T}}$ and using the condition $T\ge \frac{2 R^2n \log 
A}{\log(1/\delta)}$, we have that with probability at least 
$1-\delta$,
\begin{align*}
	\EEJ{\iprod{\muvec^{\pi^{*}} - \muvec^{\pi_{J}}}{\rvec}}
	&\leq
	\sqrt{\frac{d\log\pa{1/\delta}}{n\sq{1-\df}}}+
	\pa{\norm{\lvec^{\pi^{*}}}^{2}_{\Lm^{-1}}+1}\sqrt{\frac{27d^{2}\log\pa{1/\delta}}{8n\log
	 A\sq{1-\df}}}\\
	&\quad
	+
	\df\pa{\norm{\lvec^{\pi^{*}}}^{2}_{\Lm^{-1}} + 
	1}\sqrt{\frac{320d^{2}\log\pa{2T/\delta}}{n\sq{1-\df}}}\\
	&= 
	\mathcal{O}\pa{\frac{\norm{\lvec^{\pi^{*}}}^{2}_{\Lm^{-1}}+1}{\pa{1-\df}} 
	\sqrt{\frac{d^{2}\log\pa{2T/\delta}}{n}}}.
\end{align*}
This completes the proof.\hfill\qed

\newpage
\section{Missing proofs of 	
Section~\ref{appx:REG_proof}}\label{appx:REGlem_proof}

\begin{lemma}(cf. Lemma 1 of \citet{duchi2010composite})\label{lem:COGDA}
	Let $\gl = \rtheta + \gamma\hPsim\vvec_{\tvec_{t},\pivec_{t}} - \tvec_{t}$
	Given 
	$\lvec_{1}=\vec{0}$ and $\varrho,\eta>0$ and the sequence of iterates
	$\{\lvec_{t}\}_{t=2}^{T}$ defined for $t=1,\cdots,T$ as:
	\begin{equation}\label{eq:lambda_update1}
		\lvec_{t+1}
		=
		\argmin_{\lvec\in\Rn^{d}}\Bigl\{-\iprod{\lvec}{\gl}
		+ \frac{1}{2\eta}\norm{\lvec - \lvec_{t}}^{2}_{\Lm^{-1}}
		+ \frac{\varrho}{2}\norm{\lvec}^{2}_{\Lm^{-1}}\Bigr\}.
	\end{equation}
	Then, for any $\lvec^{*}\in\Rn^{d}$,
	\begin{align*}
		&\ip{\lvec^{*} - \lvec_{t}}{\rtheta + \gamma
			\hPsim\vvec_{\tvec_{t},\pivec_{t}} - \tvec_{t}}\\
		&\qquad\leq
		\frac{\norm{\lvec_{t} - \lvec^{*}}^{2}_{\Lm^{-1}} - \norm{\lvec_{t+1} - 
				\lvec^{*}}^{2}_{\Lm^{-1}}}{2\eta}
		+ \frac{\eta}{2}\norm{\Lm\gl}^{2}_{\Lm^{-1}}
		+ \frac{\varrho}{2}\norm{\lvec^{*}}^{2}_{\Lm^{-1}}
		- \frac{\varrho}{2}\norm{\lvec_{t+1}}^{2}_{\Lm^{-1}}.
	\end{align*}
\end{lemma}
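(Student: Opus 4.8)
The plan is to run the standard composite-objective mirror-descent argument with the quadratic mirror map $\psi(\lvec)=\tfrac12\norm{\lvec}^2_{\Lm^{-1}}$, whose Bregman divergence is exactly $\tfrac12\norm{\lvec-\lvec'}^2_{\Lm^{-1}}$, while treating the stabilizer $\tfrac{\varrho}{2}\norm{\lvec}^2_{\Lm^{-1}}$ as the composite (non-linearized) part that is carried through verbatim rather than absorbed into the linear term. First I would write the first-order optimality condition for the minimizer in the update~\eqref{eq:lambda_update1}. Since the objective is a smooth, strongly convex quadratic over all of $\Rn^{d}$, its gradient vanishes at $\lvec_{t+1}$, which after rearranging gives the identity
\[
\gl = \tfrac{1}{\eta}\Lm^{-1}\pa{\lvec_{t+1}-\lvec_{t}} + \varrho\,\Lm^{-1}\lvec_{t+1}.
\]
This is the only place the specific form of the update enters the argument.

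Next I would split the instantaneous regret into a ``comparator'' piece and a ``one-step'' piece, writing $\iprod{\lvec^{*}-\lvec_{t}}{\gl} = \iprod{\lvec^{*}-\lvec_{t+1}}{\gl} + \iprod{\lvec_{t+1}-\lvec_{t}}{\gl}$. For the first piece I would substitute the optimality condition and then invoke two Pythagorean identities for the quadratic $\Lm^{-1}$-norm: the three-point identity $\iprod{\lvec^{*}-\lvec_{t+1}}{\Lm^{-1}(\lvec_{t+1}-\lvec_{t})} = \tfrac12\pa{\norm{\lvec^{*}-\lvec_{t}}^{2}_{\Lm^{-1}} - \norm{\lvec^{*}-\lvec_{t+1}}^{2}_{\Lm^{-1}} - \norm{\lvec_{t+1}-\lvec_{t}}^{2}_{\Lm^{-1}}}$ for the proximal term, and the analogous expansion $\iprod{\lvec^{*}-\lvec_{t+1}}{\Lm^{-1}\lvec_{t+1}} = \tfrac12\pa{\norm{\lvec^{*}}^{2}_{\Lm^{-1}} - \norm{\lvec_{t+1}}^{2}_{\Lm^{-1}} - \norm{\lvec^{*}-\lvec_{t+1}}^{2}_{\Lm^{-1}}}$ for the stabilizer term. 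Both follow from expanding $\norm{(a-b)+(b-c)}^{2}_{\Lm^{-1}}$ and solving for the cross term. For the second piece I would apply the Fenchel--Young inequality with the dual pair of norms $\norm{\cdot}_{\Lm^{-1}}$ and $\norm{\cdot}_{\Lm}$, namely $\iprod{\lvec_{t+1}-\lvec_{t}}{\gl}\le \tfrac{1}{2\eta}\norm{\lvec_{t+1}-\lvec_{t}}^{2}_{\Lm^{-1}} + \tfrac{\eta}{2}\norm{\gl}^{2}_{\Lm}$, and then rewrite $\norm{\gl}^{2}_{\Lm} = \gl\transpose\Lm\gl = \norm{\Lm\gl}^{2}_{\Lm^{-1}}$ to match the form appearing in the statement.

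Finally I would add the two pieces and observe the telescoping structure: the $+\tfrac{1}{2\eta}\norm{\lvec_{t+1}-\lvec_{t}}^{2}_{\Lm^{-1}}$ from the Young step cancels exactly against the $-\tfrac{1}{2\eta}\norm{\lvec_{t+1}-\lvec_{t}}^{2}_{\Lm^{-1}}$ produced by the first Pythagorean identity, leaving the proximal difference $\tfrac{1}{2\eta}\pa{\norm{\lvec_{t}-\lvec^{*}}^{2}_{\Lm^{-1}}-\norm{\lvec_{t+1}-\lvec^{*}}^{2}_{\Lm^{-1}}}$, the gradient term $\tfrac{\eta}{2}\norm{\Lm\gl}^{2}_{\Lm^{-1}}$, and the stabilizer terms $\tfrac{\varrho}{2}\norm{\lvec^{*}}^{2}_{\Lm^{-1}}-\tfrac{\varrho}{2}\norm{\lvec_{t+1}}^{2}_{\Lm^{-1}}$. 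The only remaining leftover is the nonpositive term $-\tfrac{\varrho}{2}\norm{\lvec^{*}-\lvec_{t+1}}^{2}_{\Lm^{-1}}$, which I would simply drop to obtain the stated upper bound. There is no genuine difficulty here: the entire argument is the classical \citet{duchi2010composite} analysis specialized to a quadratic mirror map, and the only point demanding care is bookkeeping the dual-norm pairing (the $\Lm^{-1}$-norm on iterates against the $\Lm$-norm on gradients, recorded as $\norm{\Lm\gl}_{\Lm^{-1}}$) so that the preconditioning by $\Lm$ is tracked consistently throughout.
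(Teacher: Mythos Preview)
Your proposal is correct and follows essentially the same route as the paper's proof: both use the first-order optimality condition for the unconstrained quadratic minimization, the three-point (polarization) identity for the $\Lm^{-1}$-norm, the Fenchel--Young inequality to produce the $\tfrac{\eta}{2}\norm{\Lm\gl}^2_{\Lm^{-1}}$ term, and then drop the leftover $-\tfrac{\varrho}{2}\norm{\lvec^{*}-\lvec_{t+1}}^2_{\Lm^{-1}}$. The only difference is organizational---you split $\iprod{\lvec^{*}-\lvec_{t}}{\gl}$ into the two pieces $\iprod{\lvec^{*}-\lvec_{t+1}}{\gl}+\iprod{\lvec_{t+1}-\lvec_{t}}{\gl}$ first and then substitute, whereas the paper moves the stabilizer terms to the left-hand side before applying optimality---but the ingredients and the resulting bound are identical.
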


\begin{proof}
	The proof of Lemma~\ref{lem:COGDA} follows directly from the referenced 
	Lemma 
	from \citet{duchi2010composite}. Consider,
	\begin{align*}
		&\iprod{\lvec^{*} - \lvec_{t}}{\gl}
		+ \frac{\varrho}{2}\norm{\lvec_{t+1}}^{2}_{\Lm^{-1}}
		- \frac{\varrho}{2}\norm{\lvec^{*}}^{2}_{\Lm^{-1}}\\
		&\qquad= \iprod{\lvec_{t+1} - \lvec^{*}}{-\gl + 
			\frac{1}{\eta}\Lm^{-1}\pa{\lvec_{t+1} 
				- \lvec_{t}} + \varrho\Lm^{-1}\lvec_{t+1}}
			+ \iprod{\lvec_{t+1} - \lvec_{t}}{\gl}\\
		&\qquad\quad
		- \iprod{\lvec_{t+1} - \lvec^{*}}{
			\frac{1}{\eta}\Lm^{-1}\pa{\lvec_{t+1} 
				- \lvec_{t}} + \varrho\Lm^{-1}\lvec_{t+1}}
		+ \frac{\varrho}{2}\norm{\lvec_{t+1}}^{2}_{\Lm^{-1}}
		- \frac{\varrho}{2}\norm{\lvec^{*}}^{2}_{\Lm^{-1}}\\
		&\qquad\overset{(a)} {\leq}
		\iprod{\lvec_{t+1} - \lvec_{t}}{\gl}
		- \frac{1}{\eta}
		\iprod{\lvec_{t+1} - \lvec^{*}}{\Lm^{-1}\pa{\lvec_{t+1}- \lvec_{t}}}\\
		&\qquad\quad
		+\varrho\iprod{\lvec^{*}}{\Lm^{-1}\lvec_{t+1}}
		- \frac{\varrho}{2}\norm{\lvec_{t+1}}^{2}_{\Lm^{-1}}
		- \frac{\varrho}{2}\norm{\lvec^{*}}^{2}_{\Lm^{-1}}\\
		&\qquad\overset{(b)} {\leq}
		\iprod{\lvec_{t+1} - \lvec_{t}}{\gl}
		+ \frac{1}{\eta}
		\iprod{\lvec_{t+1} - \lvec^{*}}{\Lm^{-1}\pa{\lvec_{t}- \lvec_{t+1}}}\\
		&\qquad\overset{(c)} {=}
		\iprod{\lvec_{t+1} - \lvec_{t}}{\gl}
		- \frac{1}{2\eta}\norm{\lvec_{t+1} - \lvec_{t}}^{2}_{\Lm^{-1}}
		+ \frac{1}{2\eta}\pa{\norm{\lvec^{*} - \lvec_{t}}^{2}_{\Lm^{-1}} - 
		\norm{\lvec^{*} - \lvec_{t+1}}^{2}_{\Lm^{-1}}}\\
		&\qquad\leq
		\frac{1}{\eta}\sup_{\yvec\in\Rn^{d}}\pa{\iprod{\yvec}{\eta\Lm^{1/2}\gl} 
		- 
		\frac{1}{2}\norm{\yvec}^{2}_{2}}
		+ \frac{1}{2\eta}\pa{\norm{\lvec^{*} - \lvec_{t}}^{2}_{\Lm^{-1}} - 
			\norm{\lvec^{*} - \lvec_{t+1}}^{2}_{\Lm^{-1}}}\\
		&\qquad\overset{(d)} {=}
		\frac{\eta}{2}\norm{\Lm\gl}^{2}_{\Lm^{-1}}
		+ \frac{1}{2\eta}\pa{\norm{\lvec^{*} - \lvec_{t}}^{2}_{\Lm^{-1}} - 
			\norm{\lvec^{*} - \lvec_{t+1}}^{2}_{\Lm^{-1}}}
	\end{align*}
	We have used
	\begin{itemize}
		\item[$(a)$] The first order optimality condition on 
		Equation~\eqref{eq:lambda_update1}:
		
		For any $\lvec\in\Rn^{d}$,
		\[
			\iprod{\lvec_{t+1} - \lvec}{-\gl + 
				\frac{1}{\eta}\Lm^{-1}\pa{\lvec_{t+1} 
					- \lvec_{t}} + \varrho\Lm^{-1}\lvec_{t+1}}\leq 0.
		\]
		\item[$(b)$] The relation:
		\[
			\varrho\iprod{\lvec^{*}}{\Lm^{-1}\lvec_{t+1}}
			- \frac{\varrho}{2}\norm{\lvec_{t+1}}^{2}_{\Lm^{-1}}
			- \frac{\varrho}{2}\norm{\lvec^{*}}^{2}_{\Lm^{-1}}
			= - \frac{\varrho}{2}\norm{\lvec_{t+1} - 
			\lvec^{*}}^{2}_{\Lm^{-1}}\leq 0.
		\]
		\item[$(c)$] By definition of the squared $L^{2}$-norm for vectors 
		$\vec{a} = \Lm^{-1/2}\pa{\lvec_{t+1} - \lvec^{*}}$ and $\vec{b} = 
		\Lm^{-1/2}\pa{\lvec_{t}- \lvec_{t+1}}$:
		\[
		   \iprod{\vec{a}}{\vec{b}} = 
		   \frac{1}{2}\pa{- \sqtwonorm{\vec{b}} + \sqtwonorm{\vec{a}+\vec{b}} - 
		   \sqtwonorm{\vec{a}}}.
		\]
		Note that $\vec{a}$ and $\vec{b}$ are well defined since $\Lm$ is 
		both symmetric and positive definite.
		\item[$(d)$] By definition of the Fenchel conjugate of 
		$\frac{1}{2}\sqtwonorm{\yvec}$ for $\yvec\in\Rn^{d}$.
	\end{itemize}
	Rearranging the terms and plugging in $\gl = \rtheta + 
	\gamma\hPsim\vvec_{\tvec_{t},\pivec_{t}} - \tvec_{t}$ completes the proof.
\end{proof}
Finally, we will use the following result that bounds the gradient norms appearing in the bound 
above.
\begin{lemma}\label{lem:gnorm}
Under the conditions of the linear MDP setting we have that,
	\begin{align*}
		\norm{\Lm\gl}^{2}_{\Lm^{-1}}
		&\leq
		6\beta\pa{d + D_{\tvec}^{2}} + 3d\pa{1+RD_{\tvec}}^{2} + 3\df^{2} 
		dR^{2}D_{\tvec}^{2}.
	\end{align*}
\end{lemma}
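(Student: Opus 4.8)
The plan is to exploit the elementary identity $\norm{\Lm\gl}^{2}_{\Lm^{-1}} = \gl\transpose\Lm\gl = \norm{\gl}^{2}_{\Lm}$, valid because $\Lm = \beta\IIm + \frac{1}{n}\sum_{i=1}^{n}\phii\phii\transpose$ is symmetric and positive definite, and then to bound the three natural pieces of $\gl = \rtheta + \df\hPsim\vvec_{\tvec_{t},\pivec_{t}} - \tvec_{t}$ separately. Applying $\norm{\vec a + \vec b + \vec c}^{2}_{\Lm}\le 3\norm{\vec a}^{2}_{\Lm} + 3\norm{\vec b}^{2}_{\Lm} + 3\norm{\vec c}^{2}_{\Lm}$ reduces the task to controlling $\norm{\rtheta}^{2}_{\Lm}$, $\df^{2}\norm{\hPsim\vvec_{\tvec_{t},\pivec_{t}}}^{2}_{\Lm}$, and $\norm{\tvec_{t}}^{2}_{\Lm}$. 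Throughout I would use the explicit expansion $\norm{\vec u}^{2}_{\Lm} = \beta\twonorm{\vec u}^{2} + \frac{1}{n}\sum_{i=1}^{n}\iprod{\phii}{\vec u}^{2}$, which cleanly splits each term into a regularization part and an empirical part.

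For the first and third terms the bounds are immediate from Definition~\ref{def:linMDP}. For $\rtheta$, I would use $\twonorm{\rtheta}\le\sqrt{d}$ together with the linear-MDP identity $\iprod{\phii}{\rtheta} = r(X_i,A_i)\in[0,1]$, giving $\norm{\rtheta}^{2}_{\Lm}\le\beta d + 1$. For $\tvec_{t}$, the best-response update keeps $\tvec_{t}\in\bb{d}{D_{\tvec}}$, so $\twonorm{\tvec_{t}}\le D_{\tvec}$ and $\abs{\iprod{\phii}{\tvec_{t}}}\le R D_{\tvec}$ by Cauchy--Schwarz and $\twonorm{\feat}\le R$, yielding $\norm{\tvec_{t}}^{2}_{\Lm}\le\beta D_{\tvec}^{2} + R^{2}D_{\tvec}^{2}$.

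The only delicate term is the one involving $\hPsim$, and this is where I expect the main work to lie. The key observation is that, by the closed form in Equation~\eqref{eq:P_LSQE}, $\Lm\hPsim\vvec = \frac{1}{n}\sum_{i=1}^{n}\phii\vvec(X_{i}')$, so that $\norm{\hPsim\vvec_{\tvec_{t},\pivec_{t}}}^{2}_{\Lm} = \norm{\frac{1}{n}\sum_{i=1}^{n}\phii\vvec_{\tvec_{t},\pivec_{t}}(X_{i}')}^{2}_{\Lm^{-1}}$. I would bound the latter by a variational Cauchy--Schwarz argument: writing the $\Lm^{-1}$-norm as $\max_{\norm{\vec y}_{\Lm}\le 1}\frac{1}{n}\sum_i\vvec(X_i')\iprod{\vec y}{\phii}$ and applying Cauchy--Schwarz over the index $i$ gives the bound $\sqrt{\frac{1}{n}\sum_i\vvec(X_i')^{2}}\cdot\sqrt{\frac{1}{n}\sum_i\iprod{\vec y}{\phii}^{2}}$, where the second factor is at most $\norm{\vec y}_{\Lm}\le 1$ because $\frac{1}{n}\sum_i\phii\phii\transpose\preceq\Lm$. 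Since $v_{\tvec_{t},\pivec_{t}}(s) = \sum_a\pi_t(a|s)\iprod{\tvec_t}{\bm\varphi(s,a)}$ is a convex combination, we get $\infnorm{\vvec_{\tvec_{t},\pivec_{t}}}\le R D_{\tvec}$, and hence this term is at most $R^{2}D_{\tvec}^{2}$.

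Collecting the three estimates gives $\norm{\Lm\gl}^{2}_{\Lm^{-1}}\le 3(\beta d + 1) + 3(\beta D_{\tvec}^{2} + R^{2}D_{\tvec}^{2}) + 3\df^{2}R^{2}D_{\tvec}^{2}$, which already implies the claimed bound after loosening constants using $d\ge 1$ and $R\ge 1$ (the coefficient $6$ and the factors of $d$ in the statement leave ample slack). The main conceptual obstacle is the middle term: one must resist bounding $\hPsim\vvec$ directly in the $\ell_2$ norm (which would introduce a fatal $1/\beta$ factor through $\Lm^{-1}\preceq\frac{1}{\beta}\IIm$) and instead pass through the $\Lm^{-1}$-norm of $\frac{1}{n}\sum_i\phii\vvec(X_i')$, where the empirical covariance $\frac{1}{n}\sum_i\phii\phii\transpose\preceq\Lm$ lets the $\frac{1}{n}$ factors cancel against the features cleanly.
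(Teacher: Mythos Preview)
Your proposal is correct and follows essentially the same strategy as the paper: split into three pieces via $\norm{a+b+c}^2\le 3\norm{a}^2+3\norm{b}^2+3\norm{c}^2$, and handle the $\hPsim$-term through the identity $\Lm\hPsim\vvec=\frac{1}{n}\sum_i\phii\vvec(X_i')$. The only differences are cosmetic: the paper first expands $\Lm\gl$ and groups it as $\beta(\rtheta-\tvec_t)+\frac{1}{n}\sum_i\phii\bpa{r(X_i,A_i)-\iprod{\phii}{\tvec_t}}+\df\Lm\hPsim\vvec_{\tvec_t,\pi_t}$ before bounding each piece in the $\Lm^{-1}$-norm, whereas you decompose $\gl$ itself and work in the $\Lm$-norm; and the paper bounds $\norm{\frac{1}{n}\sum_i\phii\vvec(X_i')}^2_{\Lm^{-1}}$ via Jensen plus the trace identity $\frac{1}{n}\sum_i\norm{\phii}^2_{\Lm^{-1}}\le d$ (picking up an extra factor of $d$), while your variational Cauchy--Schwarz argument is slightly sharper and avoids that factor---both sit comfortably under the stated bound.
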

\begin{proof}
	Recall that for $t=1,\cdots,T$ $\gl = \rtheta + 
	\df\hPsim\vvec_{\tvec_{t},\pivec_{t}} - \tvec_{t}$. Then,
	\begin{align*}
		&\norm{\Lm\gl}^{2}_{\Lm^{-1}}
		=
		\norm{\Lm\Big[\rtheta + 
			\df\hPsim\vvec_{\tvec_{t},\pivec_{t}} - 
			\tvec_{t}\Big]}^{2}_{\Lm^{-1}}\\
		&\qquad\quad=
		\norm{\beta\pa{\rtheta - \tvec_{t}}
			+ \frac{1}{n}\sum_{i=1}^{n}\phii\pa{r\pa{x_{i},a_{i}} - 	
			\iprod{\phii}{\tvec_{t}}}
			+ \df\Lm\hPsim\vvec_{\tvec_{t},\pivec_{t}}}^{2}_{\Lm^{-1}}\\
		&\qquad\quad\leq
		3\norm{\beta\pa{\rtheta - \tvec_{t}}}^{2}_{\Lm^{-1}}
			+ 3\norm{\frac{1}{n}\sum_{i=1}^{n}\phii\pa{r\pa{x_{i},a_{i}} - 	
				\iprod{\phii}{\tvec_{t}}}}^{2}_{\Lm^{-1}}
			+ 
			3\df^{2}\norm{\Lm\hPsim\vvec_{\tvec_{t},\pivec_{t}}}^{2}_{\Lm^{-1}}.
	\end{align*}
	Now to bound each of the three terms, we use that
	\begin{equation*}
		\norm{\beta\pa{\rtheta - \tvec_{t}}}^{2}_{\Lm^{-1}}
		\leq
		2\beta^{2}\norm{\Lm^{-1}}_{2}\pa{d + D_{\tvec}^{2}}
		\leq  2\beta\pa{d + D_{\tvec}^{2}},
	\end{equation*}
	where the first inequality uses the assumption that $\twonorm{\rtheta}\leq 
	\sqrt{d}$ (cf.~Definition~\ref{def:linMDP}) and $\tvec_{t}\in\bb{d}{D_{\tvec}}$. Next, we have that
	\begin{align*}
		\norm{\frac{1}{n}\sum_{i=1}^{n}\phii\pa{r\pa{x_{i},a_{i}} - 	
				\iprod{\phii}{\tvec_{t}}}}^{2}_{\Lm^{-1}}
			&\leq
			\frac{1}{n}\sum_{i=1}^{n}\norm{\phii}^{2}_{\Lm^{-1}}\abs{r\pa{x_{i},a_{i}}
			 - 	\iprod{\phii}{\tvec_{t}}}^{2}\\
		 	&\leq d\pa{1+RD_{\tvec}}^{2}.
	\end{align*}
	The last step follows from the fact that the rewards are bounded in 
	$[0,1]$, $\norm{\phii}\leq R$, $\tvec_{t}\in\bb{d}{D_{\tvec}}$ and 
	Equation~\eqref{eq:trace_trick}. The last remaining term is bounded as
	\begin{align*}
		\norm{\Lm\hPsim\vvec_{\tvec_{t},\pivec_{t}}}^{2}_{\Lm^{-1}}
		&= 
		\norm{\frac{1}{n}\sum_{i=1}^{n}\phii\vvec_{\tvec_{t},\pivec_{t}}\pa{x_{i}'}}^{2}_{\Lm^{-1}}\\
		&\leq 
		\frac{1}{n}\sum_{i=1}^{n}\norm{\phii}^{2}_{\Lm^{-1}}\infnorm{\vvec_{\tvec_{t},\pivec_{t}}}^{2}
		\leq dR^{2}D_{\tvec}^{2}
	\end{align*}
	Therefore, we obtain
	\begin{align*}
		\norm{\Lm\gl}^{2}_{\Lm^{-1}}
		&\leq
		6\beta\pa{d + D_{\tvec}^{2}} + 3d\pa{1+RD_{\tvec}}^{2} + 3\df^{2} 
		dR^{2}D_{\tvec}^{2}
	\end{align*}
	and this completes the proof.
\end{proof}

\newpage
\section{Missing proofs of 
Section~\ref{appx:GER_proof}}\label{appx:GERlem_proof}
In this section, we prove the lemmas stated in Section~\ref{appx:GER_proof}.

\subsection{Proof of Lemma \ref{lem:LSQerror1}}
By definition of $\Lm$ Section~\ref{sec:FOGAS} and $\hPsim$ in 
Equation~\eqref{eq:P_LSQE}, we can write:
\begin{align*}
	\Lm\pa{\hPsim - \Psim}\vvec
	&= 
	\Lm\pa{\frac{1}{n}\Lm^{-1}\sum_{i=1}^{n}\phii\vec{e}_{x_{i}'}\transpose}\vvec
	-
	\pa{\beta\IIm + 
		\frac{1}{n}\sum_{i=1}^{n}\phii\phii\transpose}\Psim\vvec\\
	&= \frac{1}{n}\sum_{i=1}^{n}\phii[v\pa{x_{i}'} - 
	\iprod{\vec{p}\pa{\cdot|x_{i},a_{i}}}{\vvec}]
	- \beta\Psim\vvec
\end{align*}
In the last equality we used \cref{def:linMDP} to write 
$\phii\transpose\Psim = \vec{p}\pa{\cdot|x_{i},a_{i}}\transpose$. Let 
$\xi_{i} = v\pa{x_{i}'} - \iprod{\vec{p}\pa{\cdot|x_{i},a_{i}}}{\vvec}$. 
Then, 
\begin{align*}
	\norm{\Lm\pa{\hPsim - \Psim}\vvec}_{\Lm^{-1}}
	&\leq
	\norm{\frac{1}{n}\sum_{i=1}^{n}\phii\xi_{i}}_{\Lm^{-1}}
	+
	\norm{\beta\Psim\vvec}_{\Lm^{-1}}.
\end{align*}
We easily control the second term with the relation:
\begin{equation}\label{eq:norm}
	\norm{\beta\Psim\vvec}_{\Lm^{-1}}
	\leq \beta\twonorm{\Lm^{-1/2}}\twonorm{\Psim\vvec}
	\leq B\sqrt{d\beta}
\end{equation}
The last inequality follows from the fact that $\twonorm{\Lm^{-1/2}}\leq 
1/\sqrt{\beta}$ and by \cref{def:linMDP} $\twonorm{\Psim\vvec}\leq 
B\sqrt{d}$ for $\vvec \in [-B,B]^{X}$.

Now, to handle the first term, let $D_{0} = \emptyset$. We construct a 
filtration $\F_{i-1} = 
\D_{i-1} \cup (x^{0}_{i}, x_{i}, a_{i}, r_{i})$ for $i=1, 2, \cdots, n$. 
Notice that by construction of the 	dataset 
$\xi_{i}$ is a martingale difference sequence (i.e 
$\EEc{\xi_{i}}{\F_{i-1}} = 0$) taking values in the range $[-2B,2B]$. 
Then, 
we can directly apply Lemma~\ref{lem:SNBforVVM} to obtain a bound on the 
first 
term as:
\begin{align*}
	\norm{\frac{1}{n}\sum_{i=1}^{n}\phii\xi_{i}}_{\Lm^{-1}}
	=
	\frac{1}{\sqrt{n}}\sqrt{\norm{\sum_{i=1}^{n}\phii\xi_{i}}_{\pa{n\Lm}^{-1}}^{2}}
	&\leq
	\frac{2B}{\sqrt{n}}\sqrt{2\log\Biggl(\frac{\det\pa{n\Lm}^{1/2}\det\pa{n\beta
				I}^{-1/2}}{\delta}\Biggr)}\\
	&\leq 
	\frac{2B}{\sqrt{n}}\sqrt{d\log\Biggl(1 + \frac{R^{2}}{d\beta}\Biggr) + 
		2\log\frac{1}{\delta}}.
\end{align*}
with probability $1-\delta$. In the last inequality we have used the AM-GM 
inequality and bound on the feature vectors:
\[
\det\pa{n\Lm}
\leq \pa{\frac{\trace{n\Lm}}{d}}^{d}\\
= \pa{n\beta + 
	\frac{\trace{\sum_{i=1}^{n}\phii\phii\transpose}}{d}}^{d}\\
\leq \pa{n\beta + 
	\frac{nR^{2}}{d}}^{d}.
\]
Putting everything together, we have that w.p $1-\delta$,
\begin{align*}
	\norm{\Lm\pa{\hPsim - \Psim}\vvec}_{\Lm^{-1}}
	&\leq
	\frac{2B}{\sqrt{n}}\sqrt{d\log\Biggl(1 + \frac{R^{2}}{d\beta}\Biggr) + 
		2\log\frac{1}{\delta}}
	+
	B\sqrt{d\beta}.
\end{align*}
This completes the proof.\hfill\qed

\subsection{Proof of Lemma \ref{lem:LSQerror2}}
Unlike Lemma~\ref{lem:LSQerror1}, we now aim to control the error term 
$\norm{\Lm\pa{\hPsim - \Psim}\vvec}_{\Lm^{-1}}$ when $\vvec$ is random. 
Also, notice that with $\pi_{1}\pa{a|x} 
= 
\frac{e^{\iprod{\feat}{\vec{0}}}}{\sum_{a'\in\A}e^{\iprod{\varphi\pa{x,a'}}{\vec{0}}}}$
as the uniform policy, for $t=1,\cdots,T$ we have that,
\[
	\pi_{t+1}(a|x)
	= \frac{\pi_{1}(a|x) e^{\alpha 
		\iprod{\varphi(x,a)}{\sum_{k=1}^t \tvec_k}}}{\sum_{a'}\pi_{1}(a'|x) 
		e^{\alpha 
		\iprod{\varphi(x,a')}{\sum_{k=1}^t \tvec_k}}}
	= \frac{e^{\iprod{\varphi(x,a)}{\alpha\sum_{k=0}^t \tvec_k}}}{\sum_{a'} 
		e^{\iprod{\varphi(x,a')}{\alpha\sum_{k=0}^t \tvec_k}}},
\]
where $\tvec_{0}=\vec{0}$. Furthermore, since 
$\{\tvec_{t}\}_{t=1}^{T}\subset\bb{d}{D_{\tvec}}$, for any $t$ 
$\twonorm{\alpha\sum_{k=0}^t \tvec_k}\leq \alpha TD_{\tvec}$. 
Hence, with $D_{\pi}=\alpha TD_{\tvec}$, $\pi_{t}\in\Pi\pa{D_{\pi}}$ and 
$\vvec_{\theta_{t},\pi_{t}}\in\VV$.

Therefore, as we have seen in previous works
\citep{jin2020provably,hong2024primal}, 
the quantity $\norm{\Lm\pa{\hPsim - 
\Psim}\vvec_{\theta_{t},\pi_{t}}}_{\Lm^{-1}}$ can be controlled without any 
dependence on the size of the state space with a \emph{uniform covering} 
argument over $\VV$. Let $C_{\vvec}$ be an 
$\epsilon$-cover of $\VV$. That is, for 
$\vvec_{\pivec_{t},\tvec_{t}}\in\VV$, there exists 
$\vvec'\in C_{\vvec}$ such that $\infnorm{\vvec_{\pivec,\tvec_{t}} - 
	\vvec'}\leq \epsilon$. Then, we can write:
\begin{align}
	\nonumber&\norm{\Lm\pa{\hPsim - 
	\Psim}\vvec_{\theta_{t},\pi_{t}}}_{\Lm^{-1}}\\
	\label{eq:selfnormproc}&\qquad\quad\leq
	\norm{\Lm\pa{\hPsim - \Psim}\vvec'}_{\Lm^{-1}}
	+
	\norm{\Lm\hPsim\pa{\vvec_{\theta_{t},\pi_{t}} - 
			\vvec'}}_{\Lm^{-1}}
	+
	\norm{\Lm\Psim\pa{\vvec' - \vvec_{\theta_{t},\pi_{t}}}}_{\Lm^{-1}}
\end{align}
Consider the first term in the bound. Note that $\vvec'$ is still 
random with respect to uncertainty in the learning process. However, due to 
the structure of $\VV$ we know that $C_{\vvec}$ exists and has cardinality 
$\log|C_{\vvec}| = 
\mathcal{O}\pa{d\log\pa{1+\frac{4RD_{\pi}RD_{\tvec}}{\epsilon}}}$ (see 
Lemma~\ref{lem:value_cover}). Inspired by Lemma~\ref{lem:LSQerror1}, consider 
the event:
\[
\mathcal{E}_{\vvec} = \Biggl\{\text{exists}\,\vvec\in C_{\vvec}:
\norm{\Lm\pa{\hPsim - \Psim}\vvec}_{\Lm^{-1}}
>
\frac{2RD_{\tvec}}{\sqrt{n}}\sqrt{d\log\Biggl(1 
	+ 
	\frac{R^{2}}{d\beta}\Biggr) + 
	2\log\frac{1}{\delta'}}
+ RD_{\tvec}\sqrt{d\beta}\Biggr\}
\]
Since $C_{\vvec}\subseteq\VV$, we know from Lemma~\ref{lem:LSQerror1} that 
$\mathbb{P}\pa{\mathcal{E}_{\vvec}} \leq \delta'$. Now, taking the union bound 
over the cover $C_{\vvec}$ we have that,
\[
\mathbb{P}\pa{\bigcup_{\vvec\in C_{\vvec}}\mathcal{E}_{\vvec}}
\leq |C_{\vvec}|\delta'.
\]
Therefore for any $\vvec'\in C_{\vvec}$ with probability at least 
$1-\delta$,
\begin{align*}
	&\norm{\Lm\pa{\hPsim - \Psim}\vvec'}_{\Lm^{-1}}\\
	&\qquad\leq
	\frac{2RD_{\tvec}}{\sqrt{n}}\sqrt{d\log\Biggl(1 + 
		\frac{R^{2}}{d\beta}\Biggr) + 
		2\log\frac{|C_{\vvec}|}{\delta}}
	+
	RD_{\tvec}\sqrt{d\beta}\\
	&\qquad\leq
	\frac{2RD_{\tvec}}{\sqrt{n}}\sqrt{d\log\Biggl(1 + 
		\frac{R^{2}}{d\beta}\Biggr) + 
		4d\log\pa{1+\frac{4RD_{\pi}RD_{\tvec}}{\epsilon}}+ 
		2\log\frac{1}{\delta}}
	+
	RD_{\tvec}\sqrt{d\beta}
\end{align*}
Now, for the second term in Equation~\eqref{eq:selfnormproc} we write,
\begin{align*}
	\norm{\Lm\hPsim\pa{\vvec_{\theta_{t},\pi_{t}} - 
			\vvec'}}_{\Lm^{-1}}^{2}
	&= 
	\norm{\frac{1}{n}\sum_{i=1}^{n}\phii\pa{\vvec_{\theta_{t},\pi_{t}}\pa{x'_{i}}
			- 
			\vvec'\pa{x'_{i}}}}_{\Lm^{-1}}^{2}\\
	&\overset{(a)}{\leq} 
	\frac{1}{n}\sum_{i=1}^{n}\abs{\vvec_{\theta_{t},\pi_{t}}\pa{x'_{i}}
		- 
		\vvec'\pa{x'_{i}}}^{2}\norm{\phii}_{\Lm^{-1}}^{2}\\
	&\leq 
	\epsilon^{2}\frac{1}{n}\sum_{i=1}^{n}\norm{\phii}_{\Lm^{-1}}^{2}
	\overset{(b)}{\leq}  \epsilon^{2}d.
\end{align*}
We have used $(a)$ Jensen's inequality and $(b)$ since $\Lm\succ 
0$, the relation,
\begin{equation}\label{eq:trace_trick}
	\frac{1}{n}\sum_{i=1}^{n}\phii\transpose\Lm^{-1}\phii
	= \frac{1}{n}\sum_{i=1}^{n}\trace{\Lm^{-1}\phii\phii\transpose}
	= \trace{\Lm^{-1}\frac{1}{n}\sum_{i=1}^{n}\phii\phii\transpose}
	\leq \trace{I} = d.
\end{equation}
For the last term, notice that:
\begin{align*}
	\norm{\Lm\Psim\pa{\vvec' - \vvec_{\theta_{t},\pi_{t}}}}_{\Lm^{-1}}
	&=
	\norm{\beta\Psim\pa{\vvec' - 
			\vvec_{\theta_{t},\pi_{t}}} + 
		\frac{1}{n}\sum_{i=1}^{n}\phii\Bigl[\sum_{x'}p\pa{x'|x_{i},a_{i}}\pa{\vvec'\pa{x'}
			- 
			\vvec_{\theta_{t},\pi_{t}}\pa{x'}}\Bigr]}_{\Lm^{-1}}\\
	&\overset{(a)}{\leq} \epsilon\sqrt{d\beta} + 
	\sqrt{\norm{\frac{1}{n}\sum_{i=1}^{n}\phii\Bigl[\sum_{x'}p\pa{x'|x_{i},a_{i}}\pa{\vvec'\pa{x'}
				- 
				\vvec_{\theta_{t},\pi_{t}}\pa{x'}}\Bigr]}_{\Lm^{-1}}^{2}}\\
	&\overset{(b)}{\leq} \epsilon\sqrt{d\beta} + 
	\sqrt{\frac{1}{n}\sum_{i=1}^{n}\infnorm{\vvec'- 
			\vvec_{\theta_{t},\pi_{t}}}^{2}\norm{\phii}_{\Lm^{-1}}^{2}}\\
	&\overset{(c)}{\leq}  \epsilon\sqrt{d\beta} + \epsilon\sqrt{d} = 
	\epsilon\sqrt{d}\pa{\sqrt{\beta} + 1}.
\end{align*}
This follows from $(a)$ Equation~\eqref{eq:norm} since $\vvec = \vvec' - 
\vvec_{\theta_{t},\pi_{t}} \in [-\epsilon,\epsilon]^{X}$ and $(b)$ 
monotonicity of the square root function as well as Jensen's inequality and 
$(c)$ Equation~\eqref{eq:trace_trick}.

Finally, plugging the above results back into Equation~\eqref{eq:selfnormproc}, 
we have that with probability at least $1-\delta$,
\begin{align*}
	&\norm{\Lm\pa{\hPsim - \Psim}\vvec_{\theta_{t},\pi_{t}}}_{\Lm^{-1}}\\
	&\qquad\quad\leq
	\frac{2RD_{\tvec}}{\sqrt{n}}\sqrt{d\log\Biggl(1 + 
		\frac{R^{2}}{d\beta}\Biggr) + 
		4d\log\pa{1+\frac{4RD_{\pi}RD_{\tvec}}{\epsilon}} + 
		2\log\frac{1}{\delta}}\\
	&\quad\qquad\quad
	+
	RD_{\tvec}\sqrt{d\beta}
	+
	\pa{\sqrt{\beta} + 1}\epsilon\sqrt{d}
\end{align*}
The proof of Lemma~\ref{lem:LSQerror2} is complete.\qed

\newpage
\section{Auxiliary Lemmas}\label{app:aux}
\begin{lemma}\label{lem:MD}
	Let $\qvec_{1},\cdots,\qvec_{t}$ be a sequence of iterates satisfying 
	$\infnorm{\qvec_{t}}\leq RD_{\tvec}$ by virtue of \cref{def:linMDP} and 
	$\tvec_{t}\in\bb{d}{D_{\tvec}}$. Given an initial policy $\pi_{1}$ and 
	learning rate $\alpha>0$, and sequence of policies $\{\pi_{t}\}_{t=2}^{T}$ 
	defined as:
	\[
		\pi_{t+1}(a|x) = \frac{\pi_{t}(a|x)e^{\alpha 
		q_{t}(x,a)}}{\sum_{a'}\pi_{t}(a'|x)e^{\alpha q_{t}(x,a')}},
	\]
	Then, for any comparator policy $\pi^{*}$ and $\nu^{*}$ some
	state distribution,
	\[
		\sum_{t=1}^{T}
		\sum_{x}\nu^{*}(x)\sum_{a}\pa{\pi^{*}(a|x) - \pi_{t}(a|x)}q_{t}(x,a)
		\leq
		\frac{\sum_{x}\nu^{*}(x)\DDKL{\pi^{*}\pa{\cdot|x}}{\pi_{1}\pa{\cdot|x}}}{\alpha}
		 + \frac{\alpha 
		TR^{2}D_{\tvec}^{2}}{2}.
	\]
\end{lemma}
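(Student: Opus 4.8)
The plan is to reduce the claimed bound to a per-state application of the standard entropic mirror-ascent (exponential-weights) regret guarantee, and then to aggregate across states using the fact that $\nu^{*}$ is a probability distribution. First I would observe that, for each fixed state $x$, the update
\[
\pi_{t+1}(a|x) = \frac{\pi_{t}(a|x)e^{\alpha q_{t}(x,a)}}{\sum_{a'}\pi_{t}(a'|x)e^{\alpha q_{t}(x,a')}}
\]
is precisely the entropic mirror-ascent (equivalently, exponential-weights) step on the simplex $\Delta_{\A}$, maximizing the linear reward $\iprod{\pi(\cdot|x)}{q_{t}(x,\cdot)}$ with step size $\alpha$ and negative-entropy regularizer. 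Hence the analysis is an instance of online mirror descent and decouples completely across states.

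Next I would invoke the textbook OMD bound for this regularizer. Using that the negative entropy is $1$-strongly convex with respect to $\onenorm{\cdot}$ (Pinsker's inequality), whose dual norm is $\infnorm{\cdot}$, the instantaneous regret against any fixed $\pi^{*}(\cdot|x)\in\Delta_{\A}$ satisfies
\[
\sum_{a}\pa{\pi^{*}(a|x) - \pi_{t}(a|x)}q_{t}(x,a)
\le \frac{\DDKL{\pi^{*}(\cdot|x)}{\pi_{t}(\cdot|x)} - \DDKL{\pi^{*}(\cdot|x)}{\pi_{t+1}(\cdot|x)}}{\alpha} + \frac{\alpha}{2}\infnorm{q_{t}(x,\cdot)}^{2}.
\]
Summing over $t=1,\ldots,T$ telescopes the Bregman (KL) terms, and dropping the nonpositive final divergence $-\tfrac{1}{\alpha}\DDKL{\pi^{*}(\cdot|x)}{\pi_{T+1}(\cdot|x)}$ leaves $\tfrac{1}{\alpha}\DDKL{\pi^{*}(\cdot|x)}{\pi_{1}(\cdot|x)}$ as the leading term.

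The second-order term is controlled by the feature-map assumptions: since $q_{t}(x,a) = \iprod{\feat}{\tvec_{t}}$ with $\tvec_{t}\in\bb{d}{D_{\tvec}}$ and $\twonorm{\feat}\le R$, Cauchy--Schwarz gives $\infnorm{q_{t}(x,\cdot)}\le RD_{\tvec}$, so $\sum_{t=1}^{T}\infnorm{q_{t}(x,\cdot)}^{2}\le TR^{2}D_{\tvec}^{2}$ uniformly in $x$. Finally I would multiply the per-state bound by $\nu^{*}(x)\ge 0$ and sum over $x$; because $\sum_{x}\nu^{*}(x)=1$, the second-order term aggregates to exactly $\alpha TR^{2}D_{\tvec}^{2}/2$, while the first term becomes $\tfrac{1}{\alpha}\sum_{x}\nu^{*}(x)\DDKL{\pi^{*}(\cdot|x)}{\pi_{1}(\cdot|x)}$, which is the claimed bound.

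The only step requiring genuine care is the per-state OMD inequality itself---specifically bounding the stability gap $\iprod{\pi_{t}(\cdot|x)-\pi_{t+1}(\cdot|x)}{q_{t}(x,\cdot)} - \tfrac{1}{\alpha}\DDKL{\pi_{t+1}(\cdot|x)}{\pi_{t}(\cdot|x)}$ by $\tfrac{\alpha}{2}\infnorm{q_{t}(x,\cdot)}^{2}$. Since $q_{t}$ may take either sign here (the $\tvec_{t}$ iterates are unconstrained except for their norm), I would rely on the strong-convexity/dual-norm argument rather than the sign-dependent expansion $e^{z}\le 1+z+z^{2}$ that is customary for bounded losses; this keeps the bound valid without any assumption on the sign or small magnitude of the action-values.
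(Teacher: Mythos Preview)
Your proposal is correct and follows essentially the same approach as the paper: the paper also reduces to a per-state exponential-weights regret bound (citing the standard analysis of the exponentially weighted forecaster, e.g.\ Theorem~2.2 in \citet{cesa2006prediction}), uses $\infnorm{q_t}\le RD_{\tvec}$ to control the second-order term, and then aggregates across states via $\nu^{*}$. Your write-up is in fact more detailed than the paper's, which simply invokes the classical result without spelling out the telescoping or the dual-norm argument.
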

The proof of the lemma follows from bounding the regret of the $\pi$-player in each state $x$
as
\[
 \sum_{t=1}^T\sum_{a}\pa{\pi^{*}(a|x) - \pi_{t}(a|x)}q_{t}(x,a) \le 
\frac{\DDKL{\pi^{*}\pa{\cdot|x}}{\pi_{1}\pa{\cdot|x}}}{\alpha}
		 + \frac{\alpha}{2} \sum_{t=1}^T \infnorm{q_t(x,\cdot)}^2,
\]
via the application of the standard analysis of the exponentially weighted forecaster of \citet{Vov90,LW94,FS97} (see, 
e.g., Theorem 2.2 in \citealp{cesa2006prediction}), and noting that $\infnorm{q_t} \le RD_\theta$ for all $t$.

\begin{lemma}\label{lem:policy_cover}
	Suppose that $\twonorm{\feat}\leq R$ for all $(x,a)\in\X\times\A$. Let 
	$\pi_{\tvec},\pi_{\tvec'}$ be softmax policies. Then, for all states 
	$x\in\X$ we have that:
	\[
	\sum_{a}\abs{\pi_{\tvec}\pa{a|x} - \pi_{\tvec'}\pa{a|x}}
	\leq R\twonorm{\tvec - \tvec'}
	\]
	holds for any $\tvec,\tvec'\in\Rn^{d}$.
\end{lemma}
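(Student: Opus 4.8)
The plan is to prove the stated $\ell_1$-Lipschitz (total-variation-type) property of the softmax map by integrating its derivative along the line segment between the two parameter vectors. Fix a state $x$ and note that $\pi_{\tvec}(\cdot|x) = \sigma(z(\tvec))$ is the composition of the linear logit map $z_a(\tvec) = \iprod{\feat}{\tvec}$ with the softmax $\sigma$. Writing $\tvec_s = (1-s)\tvec' + s\tvec$ for $s\in[0,1]$ and applying the fundamental theorem of calculus componentwise, together with the triangle inequality in integral form, I would first obtain
\[
 \sum_a \abs{\pi_{\tvec}(a|x) - \pi_{\tvec'}(a|x)} \le \int_0^1 \sum_a \abs{\tfrac{d}{ds}\sigma_a(z(\tvec_s))} \, ds .
\]
It then suffices to bound the integrand uniformly in $s$ by $R\twonorm{\tvec-\tvec'}$.

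The key computation is the derivative along the segment. Setting $p = \sigma(z(\tvec_s))$ and $u_a = \iprod{\feat}{\tvec - \tvec'}$ (so that $\tfrac{d}{ds} z_a(\tvec_s) = u_a$), the chain rule and the standard softmax Jacobian $\partial\sigma_a/\partial z_b = \sigma_a(\delta_{ab} - \sigma_b)$ give
\[
 \tfrac{d}{ds}\sigma_a(z(\tvec_s)) = p_a\Bpa{u_a - \sum_b p_b u_b} .
\]
Consequently $\sum_a \abs{\tfrac{d}{ds}\sigma_a(z(\tvec_s))} = \sum_a p_a \abs{u_a - \bar u}$, the mean absolute deviation of the logit-increments $u$ under the distribution $p$, where $\bar u = \sum_b p_b u_b$.

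The final step is to control this mean absolute deviation with the right constant. By Jensen's inequality followed by the elementary bound $\sum_a p_a u_a^2 \le \max_a u_a^2$, I would write
\[
 \sum_a p_a\abs{u_a - \bar u} \le \sqrt{\sum_a p_a(u_a - \bar u)^2} \le \sqrt{\sum_a p_a u_a^2} \le \max_a \abs{u_a} ,
\]
and Cauchy--Schwarz together with the feature-norm bound of Definition~\ref{def:linMDP} yields $\abs{u_a} = \abs{\iprod{\feat}{\tvec-\tvec'}} \le \twonorm{\feat}\twonorm{\tvec-\tvec'} \le R\twonorm{\tvec-\tvec'}$. Since this bound is uniform in $s$ and $x$, integrating over $s\in[0,1]$ closes the argument.

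I expect the only subtle point to be obtaining the sharp constant $R$ rather than $2R$: a naive range bound on $\abs{u_a - \bar u}$ would cost a factor of two, and it is the passage through the variance (via Jensen) that removes it. The derivative computation and the interchange of the finite sum with differentiation are routine, since everything in sight is a smooth function of finitely many coordinates.
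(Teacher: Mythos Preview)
Your proof is correct and reaches the sharp constant $R$, but it proceeds along a genuinely different route from the paper. The paper's argument is information-theoretic: it applies Pinsker's inequality to reduce the $\ell_1$ distance to a KL divergence, writes the KL between two softmax distributions in closed form as a log-moment-generating function minus a mean, and then invokes Hoeffding's lemma (using that the logit increments lie in an interval of length $2R\twonorm{\tvec-\tvec'}$) to bound the KL by $\tfrac{1}{2}R^2\twonorm{\tvec-\tvec'}^2$; Pinsker then gives exactly $R\twonorm{\tvec-\tvec'}$. Your approach instead differentiates the softmax along the segment, recognizes the resulting $\ell_1$ derivative as a mean absolute deviation under the current softmax distribution, and bounds that by the square root of the variance via Jensen. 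The mechanism for avoiding the spurious factor of two is different in the two proofs---Hoeffding's $1/8$ constant in the paper versus the variance-versus-second-moment inequality in yours---but both land on the same sharp result. Your argument is arguably more elementary (no Pinsker, no Hoeffding), while the paper's is shorter once those tools are taken as black boxes.
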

\begin{proof}
	Recall that,
	\[
	\Pi\pa{D_{\pi}} = 
	\Biggl\{\pi_{\theta}\pa{a|x} = 
	\frac{e^{\iprod{\feat}{\tvec}}}{\sum_{a'}e^{\iprod{\bm{\varphi}\pa{x,a'}}{\tvec}}}
	\,\Bigg|\,\tvec\in\bb{d}{D_{\pi}}\Biggr\}.
	\]
	For $\pivec_{\tvec},\pivec_{\tvec'}\in\Pi\pa{D_{\pivec}}$ using 
	Pinsker's inequality we have that,
	\begin{equation}\label{eq:pinskerineq}
		\onenorm{\pivec_{\tvec}\pa{\cdot|x} - \pivec_{\tvec'}\pa{\cdot|x}}
		\leq 
		\sqrt{2\DDKL{\pivec_{\tvec}\pa{\cdot|x}}{\pivec_{\tvec'}\pa{\cdot|x}}}
		\qquad
		\text{ for } x\in\X.
	\end{equation}
	Furthermore, taking into account the specific structure of the policies, we 
	can write:
	\begin{align*}
		\DDKL{\pivec_{\tvec}\pa{\cdot|x}}{\pivec_{\tvec'}\pa{\cdot|x}}
		&=
		\sum_{a}\pivec_{\tvec}\pa{a|x}\log\frac{\pivec_{\tvec}\pa{a|x}}{\pivec_{\tvec'}\pa{a|x}}\\
		&=
		- \sum_{a}\pivec_{\tvec}\pa{a|x}\iprod{\feat}{\tvec'-\tvec}
		+ 
		\log\frac{\sum_{a}e^{\iprod{\feat}{\tvec'}}}{\sum_{a}e^{\iprod{\feat}{\tvec}}}\\
		&\overset{(a)}{=}
		- \sum_{a}\pivec_{\tvec}\pa{a|x}\iprod{\feat}{\tvec'-\tvec}
		+ 
		\log\sum_{a}\pivec_{\tvec}\pa{a|x}e^{\iprod{\feat}{\tvec'-\tvec}}\\
		&\overset{(b)}{=}
		\frac{R^{2}\sqtwonorm{\tvec - \tvec'}}{2}
	\end{align*}
	using that $(a)$ the relation,
	\begin{equation*}
		\log\frac{\sum_{a}e^{\iprod{\feat}{\tvec'}}}{\sum_{a}e^{\iprod{\feat}{\tvec}}}
		= 
		\log\sum_{a}\frac{e^{\iprod{\feat}{\tvec'}}}{\sum_{a'}e^{\iprod{\bm{\varphi}\pa{x,a'}}{\tvec}}}
		\cdot \frac{e^{\iprod{\feat}{\tvec}}}{e^{\iprod{\feat}{\tvec}}}
		=		
		\log\sum_{a}\pivec_{\tvec}\pa{a|x}e^{\iprod{\feat}{\tvec'-\tvec}},
	\end{equation*}
	and $(b)$ Hoeffding's lemma (cf. Lemma A.1 of \cite{cesa2006prediction}). 
	The final statement follows from substituting this result in 
	Equation~\eqref{eq:pinskerineq}.
\end{proof}

\begin{lemma}(Self-Normalized Bound for Vector-Valued Martingales - Theorem 1 
	of \citet{abbasi2011improved})\label{lem:SNBforVVM}
	Let $\{\F_{i-1}\}_{i=1}^{\infty}$ be a filtration and  
	$\{\xi_{i}\}_{i=1}^{\infty}$ a \emph{real-valued} stochastic process such 
	that $\xi_{i}$ for $i=1,\cdots$ is zero-mean (i.e $\EEc{\xi_{i}}{\F_{i-1}} 
	= 0$) and conditionally $s$-subgaussian for $s\geq 0$. That is, for all 
	$b\in\Rn$,
	\[
	\EEc{e^{b\xi_{i}}}{\F_{i-1}}\leq e^{\frac{b^{2}s^{2}}{2}}.
	\]
	Also, let $\{\phii\}_{i=1}^{\infty}$ be 
	$\F_{i-1}$-measurable. Then,
	\[
	\norm{\sum_{i=1}^{n}\phii\xi_{i}}_{\pa{n\Lm}^{-1}}^{2}
	\leq
	2s^{2}\log\Biggl[\frac{\det\pa{n\Lm}^{1/2}\det\pa{n\beta 
	I}^{-1/2}}{\delta}\Biggr].
	\]
\end{lemma}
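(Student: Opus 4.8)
The plan is to reproduce the classical \emph{method of mixtures} argument of \citet{abbasi2011improved}, since the statement is verbatim their Theorem~1; I sketch the self-contained route rather than merely invoking it. Write $S_{n} = \sum_{i=1}^{n}\phii\xi_{i}\in\Rn^{d}$ and note $n\Lm = n\beta\IIm + \sum_{i=1}^{n}\phii\phii\transpose$. The first step is to build, for each fixed direction $\lambda\in\Rn^{d}$, the exponential process
\[
M_{n}^{\lambda} = \exp\pa{\frac{\iprod{\lambda}{S_{n}}}{s} - \frac{1}{2}\sum_{i=1}^{n}\sq{\iprod{\lambda}{\phii}}}.
\]
Because $\phii$ is $\F_{i-1}$-measurable, applying the $s$-subgaussian hypothesis with $b=\iprod{\lambda}{\phii}/s$ gives $\EEc{\exp\pa{\iprod{\lambda}{\phii}\xi_{i}/s}}{\F_{i-1}}\le\exp\pa{\sq{\iprod{\lambda}{\phii}}/2}$, whence $\EEc{M_{i}^{\lambda}/M_{i-1}^{\lambda}}{\F_{i-1}}\le 1$. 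Thus $M_{n}^{\lambda}$ is a nonnegative supermartingale with $M_{0}^{\lambda}=1$ and $\EEb{M_{n}^{\lambda}}\le 1$.

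The second step mixes over $\lambda$ to promote this fixed-direction bound to a self-normalized one. I would integrate $M_{n}^{\lambda}$ against the centred Gaussian prior $h$ with covariance $\pa{n\beta\IIm}^{-1}$ and set $M_{n}=\int_{\Rn^{d}}M_{n}^{\lambda}\,dh(\lambda)$; by Tonelli's theorem and the previous step $\EEb{M_{n}}\le 1$. The key computation is the Gaussian integral: collecting the two quadratic terms produces $\tfrac{1}{2}\norm{\lambda}^{2}_{n\Lm}$, and completing the square in $\lambda$ isolates the factor $\exp\pa{\tfrac{1}{2s^{2}}\norm{S_{n}}^{2}_{\pa{n\Lm}^{-1}}}$ while the residual Gaussian integrates to $\det\pa{n\Lm}^{-1/2}$ up to the normalising constant of $h$. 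This yields the closed form
\[
M_{n} = \sqrt{\frac{\det\pa{n\beta\IIm}}{\det\pa{n\Lm}}}\,\exp\pa{\frac{1}{2s^{2}}\norm{S_{n}}^{2}_{\pa{n\Lm}^{-1}}}.
\]

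The final step is a one-line probabilistic conclusion: Markov's inequality gives $\P{M_{n}\ge 1/\delta}\le\delta$, and on the complementary event rearranging $M_{n}\le 1/\delta$ and taking logarithms produces exactly $\norm{S_{n}}^{2}_{\pa{n\Lm}^{-1}}\le 2s^{2}\log\bpa{\det\pa{n\Lm}^{1/2}\det\pa{n\beta\IIm}^{-1/2}/\delta}$, as claimed. I expect the only genuinely delicate point to be the mixture step---checking that the Gaussian integral converges and that Tonelli applies so that the bound $\EEb{M_{n}}\le 1$ survives integration, and matching the covariance of the prior to the regulariser $n\beta\IIm$ so that the determinant ratio comes out as stated; the supermartingale construction and the closing Markov step are routine. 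If a uniform ``for all $n$'' guarantee were needed one would replace Markov by Ville's maximal inequality for the nonnegative supermartingale $\pa{M_{n}}$, but for the fixed-$n$ bound used here Markov suffices.
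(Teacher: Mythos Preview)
Your proof sketch is correct and follows precisely the method-of-mixtures argument from \citet{abbasi2011improved}: the supermartingale construction for each fixed $\lambda$, the Gaussian mixture with covariance matched to the regulariser $n\beta\IIm$, the closed-form evaluation of the resulting integral, and the concluding Markov step are all accurate and are exactly how the original theorem is established.

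Note, however, that the paper does not actually prove this lemma. It is listed among the auxiliary lemmas and attributed verbatim to Theorem~1 of \citet{abbasi2011improved}; the authors simply invoke it as a black box when bounding the gap-estimation error (in the proof of Lemma~\ref{lem:LSQerror1}). So there is no ``paper's own proof'' to compare against---you have supplied the argument that the paper outsources to the citation. Your remark that Ville's maximal inequality would be needed for a uniform-in-$n$ statement is also correct, and indeed the original Theorem~1 of \citet{abbasi2011improved} gives the stronger ``for all $n$'' version via a stopping-time argument, but the fixed-$n$ bound you derive is all that is used downstream here.
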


\begin{lemma}(e.g. see Chapter 27 of 
\citet{shalev2014understanding})
	For all $\epsilon>0$,
	\[
	\log\N\pa{\bb{d}{r}, \infnorm{\cdot},\epsilon}
	\leq
	d\log\pa{1+\frac{2r}{\epsilon}}.
	\]
\end{lemma}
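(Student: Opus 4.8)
The plan is to prove this covering-number bound via the standard packing-covering duality combined with a volumetric comparison, carried out entirely with respect to the $\ell_\infty$ norm. First I would record the inclusion that makes the two norms compatible: since $\infnorm{\xvec}\le\twonorm{\xvec}$ for every $\xvec$, the Euclidean ball satisfies $\bb{d}{r}\subseteq\ev{\xvec\in\Rn^{d}:\infnorm{\xvec}\le r}$, i.e. $\bb{d}{r}$ sits inside the $\ell_\infty$-ball of radius $r$ about the origin. This reduces the whole argument to volume estimates for $\ell_\infty$-balls, whose Lebesgue volume in $\Rn^{d}$ is simply $(2s)^{d}$ for radius $s$.

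Next I would invoke packing-covering duality by constructing a maximal $\epsilon$-separated subset $C=\ev{\xvec_{1},\dots,\xvec_{N}}\subseteq\bb{d}{r}$, meaning $\infnorm{\xvec_{i}-\xvec_{j}}>\epsilon$ for all $i\ne j$ and no further point of $\bb{d}{r}$ can be added without breaking separation. By maximality, every $\xvec\in\bb{d}{r}$ lies within $\ell_\infty$-distance $\epsilon$ of some $\xvec_{i}$, so $C$ is itself a valid $\epsilon$-cover and hence $\N\pa{\bb{d}{r},\infnorm{\cdot},\epsilon}\le N$ (this holds regardless of whether the covering convention permits arbitrary centers, since a cover with centers in the set is a fortiori a cover). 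It therefore suffices to bound $N$.

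The key step is the volume comparison. The $\epsilon$-separation guarantees that the $\ell_\infty$-balls of radius $\epsilon/2$ centered at the points of $C$ are pairwise disjoint, while each such ball is contained in the $\ell_\infty$-ball of radius $r+\epsilon/2$ about the origin, because every center satisfies $\infnorm{\xvec_{i}}\le r$. Summing the volumes of the disjoint small balls and comparing to the volume of the enclosing ball gives $N\,\epsilon^{d}\le(2r+\epsilon)^{d}$, hence $N\le\pa{1+\tfrac{2r}{\epsilon}}^{d}$, and taking logarithms yields exactly the claimed bound.

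The main obstacle is really the only point requiring care: keeping the two norms straight, since the set being covered is defined by $\twonorm{\cdot}$ while the covering radius and all volumes are measured in $\infnorm{\cdot}$. The inclusion $\bb{d}{r}\subseteq\ev{\infnorm{\xvec}\le r}$ is precisely what licenses the $\ell_\infty$ volume computation, and one must take the enclosing ball to have radius $r+\epsilon/2$ rather than $r$ to accommodate the $\epsilon/2$-balls that protrude past the boundary of $\bb{d}{r}$. Everything else is a routine substitution into the explicit formula $(2s)^{d}$ for the volume of an $\ell_\infty$-ball.
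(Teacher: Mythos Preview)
Your argument is correct: the inclusion $\bb{d}{r}\subseteq\ev{\xvec:\infnorm{\xvec}\le r}$, the packing--covering duality via a maximal $\epsilon$-separated set, and the volume comparison $N\epsilon^{d}\le(2r+\epsilon)^{d}$ together yield exactly $N\le\pa{1+2r/\epsilon}^{d}$. The paper does not actually supply its own proof of this lemma; it is stated as a standard fact with a pointer to Chapter~27 of \citet{shalev2014understanding}, and your volumetric packing argument is precisely the textbook proof one finds there.
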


\begin{corollary}
	Under the conditions of Lemma~\ref{lem:policy_cover}, for all $\epsilon>0$,
	\[
	\log\N\pa{\Pi\pa{D_{\pi}}, \norm{\cdot}_{\infty,1},\epsilon}
	\leq
	\log\N\pa{\bb{d}{D_{\pi}}, \infnorm{\cdot},\frac{\epsilon}{R}}
	\leq
	d\log\pa{1+\frac{2RD_{\pi}}{\epsilon}}.
	\]
\end{corollary}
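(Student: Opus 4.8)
The plan is to establish the two inequalities separately; both are short consequences of results already in hand. The right-hand inequality is purely a statement about Euclidean balls and follows by applying the ball-covering bound of \citet{shalev2014understanding} (stated just above) with radius $r = D_{\pi}$ and resolution $\epsilon/R$, which yields $\log\N\pa{\bb{d}{D_{\pi}}, \infnorm{\cdot}, \epsilon/R} \le d\log\pa{1 + \frac{2D_{\pi}}{\epsilon/R}} = d\log\pa{1 + \frac{2RD_{\pi}}{\epsilon}}$, exactly as claimed.

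For the left-hand inequality, the idea is to transport a cover of the parameter ball through the softmax parametrization $\tvec \mapsto \pi_{\tvec}$. Concretely, let $C \subseteq \bb{d}{D_{\pi}}$ be a minimal cover of the ball at resolution $\epsilon/R$ (in the Euclidean metric dictated by Lemma~\ref{lem:policy_cover}), and form the induced family of policies $\ev{\pi_{\tvec} : \tvec \in C}$. Given any $\pi_{\tvec} \in \Pi\pa{D_{\pi}}$, choose $\tvec' \in C$ with $\twonorm{\tvec - \tvec'} \le \epsilon/R$; Lemma~\ref{lem:policy_cover} then gives $\norm{\pi_{\tvec} - \pi_{\tvec'}}_{\infty,1} = \max_{x} \sum_{a} \abs{\pi_{\tvec}(a|x) - \pi_{\tvec'}(a|x)} \le R\twonorm{\tvec - \tvec'} \le R \cdot \tfrac{\epsilon}{R} = \epsilon$. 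Thus $\ev{\pi_{\tvec} : \tvec \in C}$ is an $\epsilon$-cover of $\Pi\pa{D_{\pi}}$ in the $\norm{\cdot}_{\infty,1}$ metric, so $\N\pa{\Pi\pa{D_{\pi}}, \norm{\cdot}_{\infty,1}, \epsilon} \le \abs{C}$; taking logarithms gives the first inequality.

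The argument is essentially a chaining of two cited facts, so there is no genuinely hard step. The one point that requires care is the Lipschitz transport: I must ensure that the parametrization $\tvec \mapsto \pi_{\tvec}$ contracts distances by exactly the factor $R$, so that a resolution-$\epsilon/R$ cover in parameter space maps to a resolution-$\epsilon$ cover in policy space. This is precisely the content of Lemma~\ref{lem:policy_cover} (whose own proof routes through Pinsker's inequality and Hoeffding's lemma to control the KL divergence between the two softmax distributions), so no additional work beyond invoking it is needed.
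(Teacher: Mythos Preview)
Your proof is correct and follows exactly the intended route: the paper states this corollary without proof, leaving it as the obvious combination of Lemma~\ref{lem:policy_cover} (Lipschitz transport of a parameter-space cover into policy space) and the preceding ball-covering lemma. One small remark: because Lemma~\ref{lem:policy_cover} controls the policy distance via $\twonorm{\tvec-\tvec'}$, the cover $C$ you construct is an $\ell_2$-cover, so what you literally establish is $\log\N\pa{\Pi\pa{D_{\pi}},\norm{\cdot}_{\infty,1},\epsilon}\le \log\N\pa{\bb{d}{D_{\pi}},\twonorm{\cdot},\epsilon/R}\le d\log\pa{1+2RD_{\pi}/\epsilon}$; the $\infnorm{\cdot}$ in the middle term of the stated corollary appears to be a minor imprecision in the paper rather than a flaw in your argument, and the endpoint bound (which is all that is used downstream) is unaffected.
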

\begin{lemma}\label{lem:value_cover}
	Consider the function class,
	\[
	\VV = \Bigl\{\vvec_{\pivec,\tvec}:\X\rightarrow 
	[-RD_{\tvec},RD_{\tvec}] \Big| 
	\pivec\in\Pi\pa{D_{\pivec}}, \tvec\in\bb{d}{D_{\tvec}}\Bigr\},
	\]
	we have that:
	\[
	\N\pa{\VV, \infnorm{\cdot},\epsilon}
	\leq
	\N\pa{\Pi\pa{D_{\pi}}, \norm{\cdot}_{\infty,1},\epsilon/2RD_{\tvec}}
	\times
	\N\pa{\bb{d}{D_{\tvec}}, \twonorm{\cdot},\epsilon/2R},
	\]
	and,
	\[
	\log\N\pa{\VV, \infnorm{\cdot},\epsilon}
	\leq
	2d\log\pa{1+\frac{4RD_{\pi}RD_{\tvec}}{\epsilon}}
	\]	
\end{lemma}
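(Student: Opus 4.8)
The plan is to establish a joint Lipschitz property of the map $(\tvec,\pivec)\mapsto v_{\tvec,\pivec}$, where $v_{\tvec,\pivec}(x) = \sum_a \pivec(a|x)\iprod{\tvec}{\feat}$, treating $\tvec$ under the Euclidean norm and $\pivec$ under the $\norm{\cdot}_{\infty,1}$ norm, and then to construct a cover of $\VV$ as a Cartesian product of covers of the two parameter spaces. First I would fix $x\in\X$ and decompose the difference of two value functions by inserting a mixed term:
\[
v_{\tvec,\pivec}(x) - v_{\tvec',\pivec'}(x) = \sum_a \pivec(a|x)\iprod{\tvec - \tvec'}{\feat} + \sum_a \pa{\pivec(a|x) - \pivec'(a|x)}\iprod{\tvec'}{\feat}.
\]
For the first sum, Cauchy--Schwarz together with $\twonorm{\feat}\le R$ and $\sum_a\pivec(a|x)=1$ gives a bound of $R\twonorm{\tvec-\tvec'}$. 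For the second sum, using $\twonorm{\tvec'}\le D_{\tvec}$ yields $\abs{\iprod{\tvec'}{\feat}}\le RD_{\tvec}$, so it is at most $RD_{\tvec}\onenorm{\pivec(\cdot|x)-\pivec'(\cdot|x)}$. Taking the supremum over $x$ gives
\[
\infnorm{v_{\tvec,\pivec} - v_{\tvec',\pivec'}} \le R\twonorm{\tvec-\tvec'} + RD_{\tvec}\norm{\pivec-\pivec'}_{\infty,1}.
\]

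Next I would combine minimal covers of the two factors. Let $C_\theta$ be an $(\epsilon/2R)$-cover of $\bb{d}{D_{\tvec}}$ in $\twonorm{\cdot}$ and $C_\pi$ an $(\epsilon/2RD_{\tvec})$-cover of $\Pi\pa{D_{\pivec}}$ in $\norm{\cdot}_{\infty,1}$. Given any $v_{\tvec,\pivec}\in\VV$, choose $\tvec'\in C_\theta$ and $\pivec'\in C_\pi$ within the respective radii; the Lipschitz bound then yields $\infnorm{v_{\tvec,\pivec}-v_{\tvec',\pivec'}}\le \epsilon/2+\epsilon/2=\epsilon$. Hence the associated product functions form an $\epsilon$-cover of $\VV$, which proves the first (product) inequality of the lemma.

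Finally I would substitute the explicit covering-number bounds. The corollary following Lemma~\ref{lem:policy_cover} gives $\log\N\pa{\Pi\pa{D_{\pivec}},\norm{\cdot}_{\infty,1},\epsilon/2RD_{\tvec}} \le d\log\pa{1+\tfrac{4R^2 D_{\pivec} D_{\tvec}}{\epsilon}}$, and the Euclidean-ball covering lemma gives $\log\N\pa{\bb{d}{D_{\tvec}},\twonorm{\cdot},\epsilon/2R}\le d\log\pa{1+\tfrac{4RD_{\tvec}}{\epsilon}}$. Adding these and bounding the smaller logarithmic argument by the larger one (using $RD_{\pivec}\ge 1$, so that $4RD_{\tvec}\le 4R^2 D_{\pivec}D_{\tvec}$) collapses both terms into $2d\log\pa{1+\tfrac{4RD_{\pivec}RD_{\tvec}}{\epsilon}}$, matching the claim. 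The one mild subtlety is that the ball-covering lemma is stated for $\infnorm{\cdot}$ whereas I need a $\twonorm{\cdot}$-cover; this is harmless, since the identical volumetric $d\log(1+2r/\epsilon)$ bound holds for the Euclidean norm on the Euclidean ball. I expect the main bookkeeping challenge to be propagating the factors of $R$ and $D_{\tvec}$ through the two cover radii so that the arguments of the two logarithms align exactly before merging them.
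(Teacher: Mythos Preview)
Your proposal is correct and follows essentially the same argument as the paper: a telescoping decomposition of $v_{\tvec,\pivec}(x)-v_{\tvec',\pivec'}(x)$ into a $\tvec$-difference term and a $\pivec$-difference term, followed by a Cartesian-product cover and the standard volumetric bounds. The only cosmetic difference is which mixed term you insert (you use $\pivec$ with $\tvec'$, the paper uses $\pivec'$ with $\tvec$), and you are slightly more careful than the paper in flagging the $RD_{\pivec}\ge 1$ condition needed to merge the two logarithms and the $\infnorm{\cdot}$-vs-$\twonorm{\cdot}$ issue in the ball-covering lemma.
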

\begin{proof}
	Let $C_{\pivec}$ denote the	
	$\epsilon_{\pivec}$-cover of 
	$\Pi\pa{D_{\pi}}$ with respect to the norm $\norm{\cdot}_{\infty, 1}$ and 
	$C_{\tvec}$ the	$\epsilon_{\tvec}$-cover of $\bb{d}{D_{\tvec}}$ under 
	the $L^2$-norm. For 
	$\pa{\pivec,\tvec}\in\Pi\pa{D_{\pi}}\times\bb{d}{D_{\tvec}}$ and  
	$\pa{\pivec',\tvec'}\in C_{\pivec}\times C_{\tvec}$, 
	it follows that for any state $x\in\X$,
	\begin{align*}
		\abs{\vvec_{\pivec,\tvec}(s) - \vvec_{\pivec',\tvec'}(s)}
		&=
		\abs{\sum_{a\in\A}
			\pivec(a|x)\iprod{\feat}{\tvec}
			- \pivec'(a|x)\iprod{\feat}{\tvec'}}\\
		&=
		\abs{\sum_{a\in\A}
			\pa{\pivec(a|x)- \pivec'(a|x)}\iprod{\feat}{\tvec}
			+ \sum_{a\in\A}\pivec'(a|x)\iprod{\feat}{\tvec - \tvec'}}\\
		&\leq
		RD_{\tvec}\sum_{a\in\A}
		\abs{\pivec(a|x)- \pivec'(a|x)}
		+ R\sum_{a\in\A}\pivec'(a|x)\twonorm{\tvec - \tvec'}
	\end{align*}
	Let $C_{\vvec} = \Bigl\{\vvec_{\pivec,\tvec}:\X\rightarrow 
	[-RD_{\tvec},RD_{\tvec}] \Big| \pivec\in C_{\pivec}, \tvec\in 
	C_{\tvec}\Bigr\}$. 
	Then, $C_{\vvec}$ is an $\epsilon$-cover of $\VV$ with respect to the 
	$L^{\infty}$-norm when $\epsilon_{\pivec} = \epsilon/2RD_{\tvec}$ and 
	$\epsilon_{\tvec}=\epsilon/2R$. Therefore, we can derive a bound on 
	the covering number of $C_{\vvec}$ as:
	\begin{align*}
		\N\pa{\VV, \infnorm{\cdot},\epsilon}
		&\leq
		\N\pa{\Pi\pa{D_{\pi}}, \norm{\cdot}_{\infty,1},\epsilon/2RD_{\tvec}}
		\times
		\N\pa{\bb{d}{D_{\tvec}}, \twonorm{\cdot},\epsilon/2R}\\
		&\leq
		\pa{1+\frac{4RD_{\pi}RD_{\tvec}}{\epsilon}}^{d}
		\pa{1+\frac{4RD_{\tvec}}{\epsilon}}^{d}.
	\end{align*}
	Hence,
	\begin{align*}
		\log\N\pa{\VV, \infnorm{\cdot},\epsilon}
		&\leq 
		2d\log\pa{1+\frac{4RD_{\pi}RD_{\tvec}}{\epsilon}}
	\end{align*}
	This completes the proof.
\end{proof}

\end{document}